\newcolumntype{M}[1]{>{\centering\arraybackslash}m{#1}}
\newtheorem{theorem}{Theorem} 
\newtheorem{formalTheorem}{Theorem} 
\newtheorem{assumption}{Assumption} 
\newtheorem{claim}{Claim} 
\declaretheorem[name=Lemma]{lemma}
\declaretheorem[name=Corollary]{corollary}
\title{Closing  the convergence gap of SGD without replacement}
\author
{
	  Shashank ~Rajput \\ \email{rajput3@wisc.edu}
	  \and Anant ~Gupta \\ \email{agupta225@wisc.edu}
	  \and Dimitris ~Papailiopoulos\\ \email{dimitris@papail.io}
}
\date{University of Wisconsin - Madison}
\begin{document}
\maketitle

\begin{abstract}
Stochastic gradient descent without replacement sampling is widely used in practice for model training. However, the vast majority of SGD analyses assumes data is sampled with replacement, and when the function minimized is strongly convex, an $\mathcal{O}\left(\frac{1}{T}\right)$ rate can be established when SGD is run for $T$ iterations.
A recent line of breakthrough works on SGD without replacement (SGDo) established an $\mathcal{O}\left(\frac{n}{T^2}\right)$ convergence rate when the function minimized is strongly convex and is a sum of $n$ smooth functions, and an $\mathcal{O}\left(\frac{1}{T^2}+\frac{n^3}{T^3}\right)$ rate for sums of quadratics. On the other hand, the tightest known lower bound postulates an $\Omega\left(\frac{1}{T^2}+\frac{n^2}{T^3}\right)$ rate, leaving open the possibility of better SGDo convergence rates in the general case.
In this paper, we close this gap and show that SGD without replacement achieves a rate of $\mathcal{O}\left(\frac{1}{T^2}+\frac{n^2}{T^3}\right)$ when the sum of the functions is a quadratic, and offer a new lower bound of $\Omega\left(\frac{n}{T^2}\right)$ for strongly convex functions that are sums of smooth functions.
\end{abstract}

\section{Introduction}
Stochastic gradient descent (SGD) is a widely used first order optimization technique used to approximately minimize a sum of functions
\begin{equation*}
    F(x) = \frac{1}{n} \sum_{i=1}^n f_i(x).%
\end{equation*}
In its most general form, SGD produces a series of iterates
$$x_{i+1} = x_i - \alpha\cdot g(x,\xi_i)$$ 
where $x_i$ is the $i$-th iterate, $g(x,\xi_i)$ is a stochastic gradient defined below, $\xi_i$ is a random variable that determines the choice of a single or a subset of sampled functions $f_i$, and $\alpha$ represents the step size.
With- and without replacement sampling of the individual component functions are regarded as some of the most popular variants of SGD. 
During SGD with replacement sampling, the stochastic gradient is equal to $g(x,\xi_i) = \nabla f_{\xi_i}(x)$ and $\xi_i$ is a uniform number in $\{1,\ldots, n\}$, \textit{i.e.}, a with replacement sample from the set of gradients $\nabla f_1,\ldots, \nabla f_n$. 
In the case of without replacement sapling, the stochastic gradient is equal to $g(x,\xi_i) = \nabla f_{\xi_i}(x)$ and $\xi_i$ is the $i$-th ordered element in a random permutation of the numbers in $\{1,\ldots, n\}$, \textit{i.e.}, a without-replacement sample.

In practice, SGD without replacement is much more widely used compared to its with replacement counterpart, as it can empirically converge significantly faster \cite{bottou2009curiously,recht2013parallel, recht2012beneath}. 
However, in the land of theoretical guarantees, with replacement SGD has been the focal point of convergence analyses.
This is because analyzing stochastic gradients sampled with replacement are significantly more tractable. The reason is simple: in expectation, the stochastic gradient is equal to the ``true'' gradient of $F$, \textit{i.e.},
$\mathbb{E}_{\xi_i}\nabla f_{\xi_i}(x) = \nabla F(x)$. 
This makes SGD amenable to analyses very similar to that of vanilla gradient descent (GD), which has been extensively studied under a large variety of function classes and geometric assumptions, {\it e.g.}, see \citet{bubeck2015convex}.

Unfortunately, the same cannot be said for SGD without replacement, which has long resisted non-vacuous convergence guarantees. 
For example, although we have long known that SGD with replacement can achieve a $\mathcal{O}\left(\frac{1}{T}\right)$ rate for strongly convex functions $F$,  for many years the best known bounds for SGD without replacement did not even match that rate, in contrast to empirical evidence.
However, a recent series of breakthrough results on SGD without replacement has established similar or better convergence rates than SGD with replacement.

{\small
\begin{table*}[t]
	\begin{center}
		{	\small
		\renewcommand{\arraystretch}{1.5}
			\begin{tabular}{|>{\centering}m{0.25\textwidth}|c|}
				\hline
				\multicolumn{2}{|c|}{$F$ is strongly convex and a sum of $n$ quadratics} \\
				\hline
				Lower bound, \citet{safran2019good}&  $\Omega\left(\dfrac{1}{T^2}+\dfrac{n^2}{T^3}\right)$ \\
				\hline
				Upper bound, \citet{haochen2018random}    & {\color{magenta}$\tilde{\mathcal{O}}\left(\dfrac{1}{T^2}+\dfrac{n^3}{T^3}\right)$} \\
				\hline 
				Our upper bound, Theorem~\ref{thm:upperBound} & {\color{cyan}$\tilde{\mathcal{O}}\left(\dfrac{1}{T^2}+\dfrac{n^2}{T^3}\right)$}\\
				\hline
			\end{tabular}
			\begin{tabular}{|>{\centering}m{0.25\textwidth}|c|}
				\hline 
				\multicolumn{2}{|c|}{$F$ is strongly convex and a sum of $n$ smooth functions} \\
				\hline
				Lower bound, \citet{safran2019good}  &  {\color{magenta}$\Omega\left(\dfrac{1}{T^2}+\dfrac{n^2}{T^3}\right)$} \\
				\hline
				Upper bound, \citet{jain2019sgd}    & $\tilde{\mathcal{O}}\left(\dfrac{n^{\vphantom{1}}}{T^2}\right)$ \\
				\hline
				Our lower bound, Theorem~\ref{thm:lowerBound} &  {\color{cyan}$\Omega\left(\dfrac{n^{\vphantom{1}}}{T^2}\right)$}\\
				\hline
			\end{tabular}
		}
	\end{center}	
	\caption{ Comparison of our  lower and upper bounds to current state-of-the-art results. 
	Our matching bounds establish information theoretically optimal rates for SGD. We note that the $\tilde{\mathcal{O}}(\cdot)$ notation hides logarithmic factors.}
	\label{table:UBandLB}
\end{table*}}

\citet{gurbuzbalaban2015random} established for the first time that for sums of quadratics or smooth functions, there exist parameter regimes under which SGDo achieves an $\mathcal{O}(n^2/T^2)$ rate compared to the $\mathcal{O}(1/T)$ rate of SGD with replacement sampling. In this case, if $n$ is considered a constant, then SGDo becomes $T$ times faster than SGD with replacement.
\citet{NIPS2016_6245} showed that for one epoch, {\it i.e.,} one pass over the $n$ functions, SGDo achieves a convergence rate of $\mathcal{O}(1/T)$.
More recently, \citet{haochen2018random} showed that for functions that are sums of quadratics, or smooth functions under a Hessian smoothness assumption, one could obtain an even faster rate of $\mathcal{O}\left(\frac{1}{T^2}+\frac{n^3}{T^3}\right)$. 
\citet{jain2019sgd} show that for Lipschitz convex functions, SGDo is at least as fast as SGD with replacement, and for functions that are strongly convex and sum of $n$ smooth components one can achieve a rate of $\mathcal{O}\left(\frac{n}{T^2}\right)$.
This latter result was the first convergence rate that provably establishes the superiority of SGD without replacement even for the regime that $n$ is not a constant, as long as the number of iterations $T$ grows faster than the number $n$ of function components.

This new wave of upper bounds has also been followed by new lower bounds.
\citet{safran2019good} establish that there exist sums of quadratics on which SGDo cannot converge faster than  $\Omega\left(\frac{1}{T^2}+\frac{n^2}{T^3}\right)$.
This lower bound gave rise to a gap between achievable rates and information theoretic impossibility.
On one hand, SGDo on $n$ quadratics has a rate of at least $\Omega\left(\frac{1}{T^2}+\frac{n^2}{T^3}\right)$ and at most $\mathcal{O}\left(\frac{1}{T^2}+\frac{n^3}{T^3}\right)$.
On the other hand, for the more general class of strongly convex functions that are sums of smooth functions the best rate is $\mathcal{O}\left(\frac{n}{T^2}\right)$. 
This leaves open the question of whether the upper or lower bounds are loose.
This is precisely the gap we close in this work.

\paragraph{Our Contributions:}
In this work, we establish tight bounds for SGDo. We close the gap between lower and upper bounds on two of the function classes that prior works have focused on: strongly convex functions that are  {\it i)} sums of quadratics and {\it ii)} sums of smooth functions.
Specifically, for {\it i)}, we offer tighter convergence rates, {\it i.e.}, an upper bound that matches the lower bound given by \citet{safran2019good}; as a matter of fact our convergence rates apply to general quadratic functions that are strongly convex, which is a little more general of a function class. For {\it ii)}, we provide a new lower bound that matches the upper bound by \citet{jain2019sgd}. A detailed comparison of current and proposed bounds can be found in Table~\ref{table:UBandLB}.

A few words on the techniques used are in order. 
For our convergence rate on quadratic functions, we heavily rely on and combine the approaches used by \citet{jain2019sgd} and \citet{haochen2018random}.
 The convergence rate analyses proposed by \citet{haochen2018random} can be tightened by a more careful analysis that employs iterate coupling similar to the one used by \citet{jain2019sgd}, combined with new bounds on the deviation of the stochastic, without-replacement gradient from the true gradient of $F$.

For our lower bound, we use a similar construction to the one used by \citet{safran2019good}, with the difference that each of the individual function components is not a quadratic function, but rather a piece-wise quadratic.
This particular function has the property we need: it is smooth, but not quadratic. By appropriately scaling the sharpness of the individual quadratics we construct a function that behaves in a way that SGD without replacement cannot converge faster than a rate of $n/T^2$, no matter what step size one chooses.

We note that although our methods have an optimal dependence on $n$ and $T$, we believe that the dependence on function parameters, {\it e.g.}, strong convexity, Lipschitz, and smoothness, can potentially be improved.

\section{Related Work}
The recent flurry of work on without replacement sampling in stochastic optimization extends to several variants of stochastic algorithms beyond SGD.
In \cite{lee2019random, wright2017analyzing}, the authors provide convergence rates for random cyclic coordinate descent, establishing for the first time that it can provably converge faster than stochastic coordinate descent with replacement sampling. This work is complemented by a lower bound on the gap between the random and non-random permutation variant of coordinate descent \cite{sun2016worst}. Several other works have focused on the random permutation variant of coordinate descent, {\it e.g.}, see 
\cite{gurbu2,sun2019efficiency}.
In 
\cite{gurbu}, novel bounds are given for incremental Newton based methods.
\citet{meng2019convergence} present convergence bounds for with replacement sampling and distributed SGD.
Finally, \citet{ying2018stochastic} present asymptotic bounds for SGDo for strongly convex functions, and show that with a constant step size it approaches the global optimizer to within smaller error radius compared to SGD with replacement. In \cite{NIPS2016_6245}, linear convergence is established for a without replacement variant of SVRG. 

\section{Preliminaries and Notation}\label{sec:prelimAndNotationNew}
We focus on using SGDo to approximately find $x^*$, the global minimizer of the following unconstrained minimization problem
\begin{equation*}
\min_{x\in\mathbb{R}^d} \left(F(x):=\dfrac{1}{n}\sum_{i=1}^n f_i(x)\right).
\end{equation*}

In our convergence bounds, we denote by $T$ the total number of iterations of SGDo, and by $K$ the number of epochs, {\it i.e.}, passes over the data. Hence, 
\begin{equation*}
T=nK.
\end{equation*}
In our derivations, we denote by $x_i^j$ the $i$-th iterate of the $j$-th epoch. 
Consequentially, we have that $x_0^{j+1}\equiv x_n^{j}$. 

Our results in the following sections rely on the following assumptions.
\begin{assumption}\label{ass:convex}(Convexity of Components)
$f_i$ is convex for all $i\in[n]$.
\end{assumption}
\begin{assumption}\label{ass:stronglyConvex}(Strong Convexity)
$F$ is strongly convex with strong convexity parameter $\mu$, that is $$\forall x,y:F(y)\geq F(x)+\langle\nabla F(x),y-x\rangle + \frac{\mu}{2}\|y-x\|^2$$
\end{assumption}
\begin{assumption}\label{ass:boundedD}(Bounded Domain)
$$\forall x: \|x-x^*\|\leq D.$$
\end{assumption}
\begin{assumption}\label{ass:boundedG}(Bounded Gradients)
$$\forall i,x: \|\nabla f_i(x)\|\leq G.$$
\end{assumption}
\begin{assumption}\label{ass:lipschitz}(Lipschitz Gradients) The functions $f_i$ are $L$-smooth, that is
$$\forall i,x,y: \|\nabla f_i(x)-\nabla f_i(y)\|\leq L\|x-y\|.$$
\end{assumption}

\section{Optimal SGDo Rates for Quadratics}

In this section, we will focus on strongly convex functions that are quadratic. 
We will provide a tight convergence rate that improves upon the 
the existing rates 
and matches the $\Omega\left(\frac{1}{T^2}+\frac{n^2}{T^3}\right)$ lower bound by \citet{safran2019good} up to logarithmic factors.

 For strongly convex functions  that are a sum of smooth functions, \citet{jain2019sgd} offer a rate of $\mathcal{O}\left(\frac{n}{T^2}\right)$, whereas for strongly convex quadratics \citet{haochen2018random} give a convergence rate of $\mathcal{O}\left(\frac{1}{T^2}+\frac{n^3}{T^3}\right)$. A closer comparison of these two rates reveals that neither of them can be tight due to the following observation.
 Assume that  $ n \ll  K$. Then, that implies 
 $$\left(\dfrac{1}{T^2} + \dfrac{n^3}{T^3}\right) < \dfrac{n}{T^2}.$$
 At the same time, if we assume that the number of data points is significantly larger than the number of epochs that we run SGDo for, {\it i.e.},  $n \gg  K $ we have that 
$$\left(\dfrac{1}{T^2} + \dfrac{n^3}{T^3}\right) > \dfrac{n}{T^2}.$$

In comparison, the known lower bound for quadratics given by \citet{safran2019good} is $\Omega\left(\frac{1}{T^2}+\frac{n^2}{T^3}\right)$. This makes one wonder what is the true convergence rate of SGDo in this case. We settle the optimal rates for quadratics here by providing an upper bound which, up to logarithmic factors, matches the best known lower bound. 

For the special case of one dimensional quadratics, \citet{safran2019good} proved an upper bound matching the one we prove in this paper. Further, the paper conjectures that the proof can be extended to the generic multidimensional case. 
However, the authors say that the main technical barrier for this extension is that it requires a special case of a matrix-valued arithmetic-geometric mean inequality, which has only been conjectured to be true but not yet proven. 
The authors further conjecture that their proof can be extended to general smooth and strongly convex functions, which turns out to not be true, as we show in Corollary \ref{cor:lowerBoundExtension}. 
On the other hand, we believe that our proof can be extended to the more general family of strongly convex functions, where the Hessian is Lipschitz, similar to the the way \citet{haochen2018random} extend their proof to that case.

In addition to Assumptions 1-5 above, here we also assume the following:
\begin{assumption}\label{ass:quad}
	$F$ is a quadratic function
	\begin{equation*}
	F(x) = \dfrac{1}{2}x^THx + b^Tx + c,
	\end{equation*}
	where $H$ is a positive semi-definite matrix.
\end{assumption}

Note that this assumption is a little more general than the assumption that $F$ is a sum of quadratics.
Also, note that this assumption, in combination with the assumptions on strong convexity and Lipschitz gradients implies bounds on the minimum and maximum eigenvalues of the Hessian of $F$, that is, 
$$\mu I\preccurlyeq H\preccurlyeq L I,$$ 
where $I$ is the identity matrix and $A\preccurlyeq B$ means that $x^T(A-B)x\leq 0$ for all $x$.

\begin{theorem}\label{thm:upperBound}
Under Assumptions \ref{ass:convex}-\ref{ass:quad}, let the step size of SGDo be 
$$\alpha = \dfrac{8\log T}{T\mu}$$ and the number of epochs be $$K \geq 128\dfrac{L^2}{\mu^2}\log T.$$
Then, after $T$ iterations SGDo achieves the following rate
\begin{equation*}
\mathbb{E}[\|x_T-x^*\|^2] = \tilde{\mathcal{O}}\left(\dfrac{1}{T^2}+\dfrac{n^2}{T^3}\right),
\end{equation*}
where $\tilde{\mathcal{O}}(\cdot)$ hides logarithmic factors.
\end{theorem}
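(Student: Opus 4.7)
The plan is to exploit the quadratic structure of $F$ to obtain an exact linear recursion for the per-epoch SGDo iterates, and then tightly bound the accumulated noise using a coupling argument in the spirit of \citet{jain2019sgd} combined with the per-epoch decomposition of \citet{haochen2018random}. First, writing $y_i^k = x_i^k - x^*$ and using $\nabla F(x) = H(x-x^*)$, unrolling one epoch of SGDo yields the exact linear recursion
\begin{equation*}
y_n^k \;=\; (I-\alpha H)^n\, y_0^k \;-\; \alpha \sum_{i=0}^{n-1} (I-\alpha H)^{n-1-i}\, \epsilon_i^k,
\end{equation*}
where $\epsilon_i^k = \nabla f_{\pi_i^k}(x_i^k) - \nabla F(x_i^k)$ is the stochastic gradient error at step $i$ of epoch $k$. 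With $\alpha\mu \approx \log T/(nK)$, the operator norm of $(I-\alpha H)^{nK}$ is at most $T^{-8}$, so the contribution of the initial $D^2$ from Assumption~\ref{ass:boundedD} is dominated, and the task reduces to controlling the noise term.

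Second, I would introduce a coupled ``shadow'' sequence $z_i^k$ started at $z_0^k = x_0^k$ that takes \emph{full} gradient steps $z_{i+1}^k = z_i^k - \alpha H(z_i^k - x^*)$, and split $\epsilon_i^k$ into a term evaluated at the shadow iterate, $\eta_i^k := \nabla f_{\pi_i^k}(z_i^k) - \nabla F(z_i^k)$, and a drift remainder $\epsilon_i^k - \eta_i^k$. By Assumption~\ref{ass:lipschitz} the drift is bounded by $L\|x_i^k - z_i^k\|$, and $\|x_i^k - z_i^k\|$ in turn satisfies the recursion $\|x_{i+1}^k - z_{i+1}^k\| \le (1+\alpha L)\|x_i^k - z_i^k\| + 2\alpha G$ from Assumption~\ref{ass:boundedG}; the condition $K \ge 128(L/\mu)^2\log T$ keeps $(1+\alpha L)^n$ bounded, giving $\|x_i^k - z_i^k\| = O(\alpha i G)$ uniformly over the epoch. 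The shadow term $\eta_i^k$ is evaluated at a point that is deterministic given $x_0^k$, so its sum over the epoch vanishes identically by the permutation structure.

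Third, the crux is to bound $\mathbb{E}\bigl\|\sum_{i=0}^{n-1} (I-\alpha H)^{n-1-i}\eta_i^k\bigr\|^2$ sharply. I would use summation by parts: if $S_j = \sum_{i<j}\eta_i^k$, then since $S_n = 0$ the sum rewrites as $-\alpha H\sum_{j=1}^{n-1}(I-\alpha H)^{n-1-j} S_j$, trading one factor of $n$ for one factor of $\alpha$. For a uniformly random permutation of mean-zero vectors of bounded norm, $\mathbb{E}\|S_j\|^2 = O(\min(j,n-j)/n)$ times the total second moment, and this scalar variance estimate commutes with the matrix weights, avoiding the matrix arithmetic--geometric mean conjecture flagged by \citet{safran2019good}. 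Combined with the drift estimate, this delivers a per-epoch noise bound of roughly $\tilde{\mathcal{O}}(\alpha^2/n + \alpha^4 n G^2)$.

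Finally, I would compound across $K$ epochs using $(I-\alpha H)^n \preccurlyeq (1 - \alpha\mu n + O(\alpha^2 L^2 n^2))I$, the epoch-count assumption absorbing the second-order term. Summing the resulting geometric series and substituting $\alpha = 8\log T/(T\mu)$, $T = nK$, produces the claimed $\tilde{\mathcal{O}}(1/T^2 + n^2/T^3)$ rate. The main obstacle I anticipate is precisely the step-three matrix bound: the weights $(I-\alpha H)^{n-1-i}$ do not commute with the per-step noise covariances, and a naive application of Jensen or the triangle inequality would lose exactly the factor of $n$ we wish to save; the summation-by-parts trick sidesteps this by absorbing the permutation cancellation before any matrix concentration is invoked, leaving only scalar partial-sum variances to control.
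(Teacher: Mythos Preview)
Your overall shape---exact matrix recursion from the quadratic structure, shadow decomposition, summation by parts---is reasonable and structurally different from the paper's $R=A+B$ split, but as written it does not reach the claimed rate: it recovers only $n^3/T^3$, i.e., the \citet{haochen2018random} bound. The gap is in the drift term. Your recursion $\|x_{i+1}^k - z_{i+1}^k\| \le (1+\alpha L)\|x_i^k - z_i^k\| + 2\alpha G$ produces the \emph{deterministic} bound $\|x_i^k - z_i^k\| = O(\alpha i G)$, linear in $i$. Summation by parts cannot help this piece because the increments $\epsilon_i^k - \eta_i^k$ carry no permutation cancellation; the triangle inequality gives $\bigl\|\sum_i (I-\alpha H)^{n-1-i}(\epsilon_i^k - \eta_i^k)\bigr\| \le 2L\sum_i\|x_i^k - z_i^k\| = O(\alpha L n^2 G)$, hence a contribution of order $\alpha^4 L^2 n^4 G^2$ to the per-epoch squared error---one factor of $n$ too large. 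What is missing is precisely the \citet{jain2019sgd} ingredient you name but never actually invoke: the swap coupling between permutations (the paper's Claim~\ref{cl:helper3} and Lemma~\ref{lem:jainLem}) shows $\mathbb{E}\|x_i^k - x_0^k\|^2 = O\bigl(i\alpha^2 G^2 + i\alpha L\|x_0^k - x^*\|^2\bigr)$, \emph{linear} in $i$. That in-expectation $\sqrt{i}$ versus worst-case $i$ is exactly the factor of $n$ separating $n^2/T^3$ from $n^3/T^3$, and it is not recoverable from your shadow-sequence recursion.

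There is also a secondary error in the summation-by-parts step: you assert $S_n = \sum_i\eta_i^k = 0$ ``identically by the permutation structure,'' but this is false for a moving shadow. The identity $\sum_i\nabla f_{\pi_i}(z) = n\nabla F(z)$ holds only when all summands share the \emph{same} argument; since your $z_0^k,\dots,z_{n-1}^k$ differ, $S_n\ne 0$ pathwise (it is $O(n^2\alpha L^2\|x_0^k-x^*\|)$, so absorbable, but not zero). For the same reason the variance formula $\mathbb{E}\|S_j\|^2 = O(\min(j,n-j))\cdot G^2$ does not apply directly, since it is stated for a random permutation of a \emph{fixed} multiset whereas your $\eta_i^k$ depends on the position $i$ through $z_i^k$. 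Both issues disappear if you anchor the shadow at the constant point $x_0^k$---but then the drift becomes $\|x_i^k - x_0^k\|$ and you are forced back to the Jain-type coupling to control it.
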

The exact upper bound and the full proof of this Theorem are given in Appendix~\ref{app:up}, but we give a proof sketch in the next subsection.

At this point, we would like to remark that the bound on epochs $K\geq 128 \frac{L^2}{\mu^2}\log T$ may be a bit surprising as $K$ and $T$ are dependent. 
However, note that since $T=nK$, we can show that the bound on $K$ above is satisfied if we set the number of epochs to be greater than $C\log n$ for some constant $C$. 
Furthermore, we note that the dependence of $K$ on $\frac{L}{\mu}$ ({\it i.e.}, the condition number of $F$) is most probably not optimal. In particular both \cite{jain2019sgd} and \cite{haochen2018random} have a better dependence on the condition number.

The proof for Theorem ~\ref{thm:upperBound} uses ideas from the works of \citet{haochen2018random} and \citet{jain2019sgd}. 
In particular, one of the central ideas in these two papers is that they aim to quantify the amount of progress made by SGDo over a single epoch. 
Both analyses decompose the progress of the iterates in an epoch as $n$ steps of full gradient descent plus some noise term. 

Similar to \cite{haochen2018random}, we use the fact that the Hessian $H$ of $F$ is constant, which helps us better estimate the value of gradients around the minimizer. 
In contrast to that work, we do not require all individual components $f_i$ to be quadratic, but rather the entire $F$ to be a quadratic function.

An important result proved by \cite{jain2019sgd} is that during an epoch, the iterates do not steer off too far away from the starting point  of the epoch. 
This allows one to obtain a reasonably good bound on the noise term, when one tries to approximate the stochastic gradient with the true gradient of $F$. 
In our analysis, we prove a slightly different version of the same result using an iterate coupling argument similar to the one in \cite{jain2019sgd}. 

The analysis of \cite{jain2019sgd} relies on computing the Wasserstein distance between the unconditional distribution of iterates and the distribution of iterates given a function sampled during an iteration. 
In our analysis, we use the same coupling, but we bypass the Wasserstein framework that \cite{jain2019sgd} suggests and directly obtain a bound on how far the coupled iterates move away from each other during the course of an epoch. 
This results, in our view, to a somewhat simpler and shorter proof.

\subsection{Sketch of proof for Theorem~\ref{thm:upperBound}}
Now we give an overview of the proof. As mentioned before, similar to the previous works, the key idea is to perform a tight analysis of the progress made during an epoch. This is captured by the following Lemma.

\begin{restatable}{lemma}{lemupperBoundEpoch}
\label{lem:upperBoundEpoch}
Let the SGDo step size  be  
$\alpha = \dfrac{4l\log T}{T\mu}$
and the total number of epochs be
$K \geq 128\dfrac{L^2}{\mu^2}\log T$,
where $l\leq 2$.
Then for any epoch,
\begin{align}
\mathbb{E}\left[\|x_0^j-x^*\|^2\right] &\leq \left(1-\frac{n\alpha \mu}{4}\right)\|x_0^{j-1}-x^*\|^2 + 16 n\alpha^3 G^2L^2 \mu^{-1} + 20 n^3\alpha^4 G^2 L^2.\label{eq:lem1}
\end{align}
\end{restatable}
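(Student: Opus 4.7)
\textbf{Proof plan for Lemma~\ref{lem:upperBoundEpoch}.} The plan is to analyze one epoch end-to-end, reducing the lemma to a tight bound on the within-epoch drift $x_i^j - x_0^j$. I would work conditional on $x_0^j$ throughout, then take outer expectation at the end.

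\textbf{Epoch unrolling.} Because $F(x) = \tfrac12 x^\top Hx + b^\top x + c$, we have $\nabla F(x) = H(x-x^*)$, and since each $f_k$ is visited exactly once per epoch we get the deterministic identity
\begin{equation*}
\sum_{i=0}^{n-1}\nabla f_{\sigma_j(i+1)}(x_0^j) \;=\; n\nabla F(x_0^j) \;=\; nH(x_0^j - x^*).
\end{equation*}
Adding and subtracting this in the SGDo recursion $x_{i+1}^j = x_i^j - \alpha \nabla f_{\sigma_j(i+1)}(x_i^j)$ yields
\begin{equation*}
x_n^j - x^* \;=\; (I - n\alpha H)(x_0^j - x^*) \;-\; \alpha\, E_j, \qquad E_j \;:=\; \sum_{i=0}^{n-1}\bigl[\nabla f_{\sigma_j(i+1)}(x_i^j) - \nabla f_{\sigma_j(i+1)}(x_0^j)\bigr],
\end{equation*}
so one epoch is a single ``big'' GD step with operator $I - n\alpha H$ plus a noise term $-\alpha E_j$ that measures how the stochastic gradients drift from their values at the start of the epoch.

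\textbf{Contraction and reduction.} Under the hypotheses, a short computation using $K \geq 128 (L/\mu)^2 \log T$ and $\alpha = 4l \log T/(T\mu)$ with $l \leq 2$ gives $n\alpha L \leq 1/16$ (and a fortiori $n\alpha\mu \leq 1/16$), so $\|I - n\alpha H\|^2 \leq (1 - n\alpha\mu)^2 \leq 1 - n\alpha\mu$. Squaring the decomposition, taking conditional expectation $\mathbb{E}_j[\,\cdot\,] := \mathbb{E}[\,\cdot\, \mid x_0^j]$, and applying Young's inequality with weight of order $n\alpha\mu$ on the cross term $-2\alpha\langle (I - n\alpha H)(x_0^j - x^*), \mathbb{E}_j E_j\rangle$ gives
\begin{equation*}
\mathbb{E}_j \|x_n^j - x^*\|^2 \;\leq\; (1 - n\alpha\mu/4)\,\|x_0^j - x^*\|^2 \;+\; \tfrac{c_1 \alpha}{n\mu}\, \|\mathbb{E}_j E_j\|^2 \;+\; \alpha^2\, \mathbb{E}_j \|E_j\|^2
\end{equation*}
for an absolute constant $c_1$. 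By Assumption~\ref{ass:lipschitz} and Cauchy--Schwarz, $\mathbb{E}_j\|E_j\|^2 \leq n L^2 \sum_{i=0}^{n-1}\mathbb{E}_j \|x_i^j - x_0^j\|^2$, and an analogous bound controls $\|\mathbb{E}_j E_j\|$ via $\sum_i \|\mathbb{E}_j (x_i^j - x_0^j)\|$. Thus everything reduces to bounding the first and second moments of the within-epoch drift.

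\textbf{Drift bound via iterate coupling --- the crux.} The naïve estimate $\|x_i^j - x_0^j\| \leq i\alpha G$ (from $\|\nabla f_k\| \leq G$) loses a factor of $n$ in the final noise bound. To remove it I would prove a sharper bound of the form
\begin{equation*}
\mathbb{E}_j \|x_i^j - x_0^j\|^2 \;\lesssim\; i\,\alpha^2 G^2 \;+\; i^2 \alpha^2 L^2 \|x_0^j - x^*\|^2 \;+\; (\text{lower order}),
\end{equation*}
together with a matching first-moment bound. Following \citet{jain2019sgd}, I would couple the actual SGDo chain with an auxiliary chain in which the next sampled function is resampled (or swapped with a later position in the permutation): by convexity of the components (Assumption~\ref{ass:convex}), one SGD step on a convex function is non-expansive between the two coupled iterates, so the accumulated squared gap can be tracked directly without going through the Wasserstein framework of \cite{jain2019sgd}. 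The without-replacement structure is also crucial here: the partial sums $\sum_{k<i} \nabla f_{\sigma(k+1)}(x_0^j)$ behave like a low-variance hypergeometric sum concentrating around $i \nabla F(x_0^j)$, which is exactly what breaks the naïve $i^2\alpha^2 G^2$ barrier.

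\textbf{Assembly and main obstacle.} Plugging the drift bound into the reductions yields $\mathbb{E}_j\|E_j\|^2 \lesssim n^3 \alpha^2 G^2 L^2$ and $\|\mathbb{E}_j E_j\|^2 \lesssim n^2 \alpha^2 G^2 L^2$, plus lower-order contributions proportional to $\|x_0^j - x^*\|^2$ that are absorbed into the contraction factor under the given step-size/epoch-count bounds (one checks that $n^4 L^4 \alpha^4 \ll n\alpha\mu$). Substituting into the contraction inequality and tracking constants produces the two claimed noise terms $16\,n\alpha^3 G^2 L^2/\mu$ (from the $\alpha/(n\mu)\|\mathbb{E}_j E_j\|^2$ summand) and $20\,n^3 \alpha^4 G^2 L^2$ (from the $\alpha^2 \mathbb{E}_j\|E_j\|^2$ summand); taking outer expectation and re-indexing via $x_0^j \equiv x_n^{j-1}$ gives the stated recursion. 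The main obstacle is the drift bound: the coupling must simultaneously exploit component convexity (for per-step non-expansiveness) and the hypergeometric structure of without-replacement sampling (for concentration of partial-sum gradients). Once the drift is controlled, the remaining steps are routine algebra.
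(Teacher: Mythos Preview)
Your overall architecture (one-epoch unrolling into $(I-n\alpha H)(x_0^j-x^*)-\alpha E_j$, contraction plus two noise terms, drift bound via coupling) matches the paper's. The gap is in how you propose to control the cross term, i.e.\ your claim that ``an analogous bound controls $\|\mathbb{E}_j E_j\|$ via $\sum_i\|\mathbb{E}_j(x_i^j-x_0^j)\|$.'' Lipschitzness only gives the \emph{pathwise} estimate $\|\nabla f_{\sigma(i)}(x_{i-1})-\nabla f_{\sigma(i)}(x_0)\|\le L\|x_{i-1}-x_0\|$; there is no analogous bound that lets you push the expectation inside, because the individual $f_{\sigma(i)}$ are \emph{not} assumed quadratic (only $F$ is). With only Jensen plus Lipschitz you get $\|\mathbb{E}_j E_j\|^2\le \mathbb{E}_j\|E_j\|^2\lesssim n^3\alpha^2 G^2L^2$, a factor $n$ too large; feeding that through $\frac{\alpha}{n\mu}\|\mathbb{E}_j E_j\|^2$ yields $n^2\alpha^3 G^2L^2/\mu$, which is exactly the Jain et al.\ $n/T^2$ rate, not the improved one. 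So the drift bound, which you flag as ``the crux,'' is what recovers $n/T^2$; the improvement to $1/T^2+n^2/T^3$ lives entirely in the cross-term analysis and requires the quadratic structure of $F$ in a second, more delicate way than the epoch unrolling.

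The paper handles this by the HaoChen--Sra decomposition $E_j=A+B$, where $A=\sum_i\bigl[\nabla f_{\sigma(i)}\bigl(x_0-\alpha\sum_{k<i}\nabla f_{\sigma(k)}(x_0)\bigr)-\nabla f_{\sigma(i)}(x_0)\bigr]$. The point is that for $A$ one can apply the coupling at the level of \emph{gradients} (swap $\sigma(i)=s$ with $\sigma(i)=1$, incurring an $O(\alpha GL)$ error per term) to replace $\nabla f_{\sigma(i)}$ by $\nabla F$; since $F$ is quadratic, $\nabla F(x)-\nabla F(y)=H(x-y)$, and one computes $\mathbb{E}[A]=(2n\alpha GL)u-\alpha\tfrac{n(n-1)}{2}H\nabla F(x_0)$ exactly. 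Crucially, $\langle x_0-x^*,\,H\nabla F(x_0)\rangle=\|\nabla F(x_0)\|^2\ge 0$, so the dominant part of the cross term has a \emph{favorable sign} and is absorbed by the $-\tfrac{n\alpha}{L}\|\nabla F(x_0)\|^2$ coming from Nesterov's co-coercivity inequality (which the paper uses instead of the bare operator-norm contraction $\|I-n\alpha H\|\le 1-n\alpha\mu$ you wrote). What survives is the $O(n\alpha GL)$ coupling residual, which after Young's inequality with weight $n\mu/4$ produces precisely the $16n\alpha^3 G^2L^2/\mu$ term. The $B$ term is controlled by the drift lemma and gives only lower-order contributions. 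In short: the quadratic assumption is used not once but twice --- for the deterministic epoch identity, and again (combined with gradient-level coupling) to compute $\mathbb{E}[A]$ in closed form --- and the second use is what you are missing.
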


Given the result in Lemma~\ref{lem:upperBoundEpoch}, proving Theorem \ref{thm:upperBound} is a simple exercise. To do so, we simply unroll the recursion \eqref{eq:lem1} for $K$ consecutive epochs. For ease of notation, define $C_1:=16 G^2L^2 \mu^{-1}$ and $C_2:=20  G^2 L^2$. Then,
\begin{align*}
\mathbb{E}\big[\|x^{K}_n  - x^*\|^2\big] &\leq \left(1- \frac{n\alpha \mu}{4}\right)\mathbb{E}\left[\|x^K_0 - x^*\|^2\right] + C_1 n\alpha^3+ C_2n^3 \alpha^{4}&\\
&\leq \left(1- \frac{n\alpha \mu}{4}\right)^2\mathbb{E}\left[\|x^{K-1}_0 - x^*\|^2\right]  + ( C_1 n\alpha^3+ C_2n^3 \alpha^{4})\left(1+\left(1- \frac{n\alpha \mu}{4}\right)\right)&\\
&\quad\quad\quad\quad \vdots&\\
&\leq \left(1- \frac{n\alpha \mu}{4}\right)^{K+1}\mathbb{E}\left[\|x^0_0 - x^*\|^2\right] + ( C_1 n\alpha^3+ C_2n^3 \alpha^{4})\sum_{j=1}^K \left(1- \frac{n\alpha \mu}{4}\right)^{j-1}&\\
&= \left(1- \frac{n\alpha \mu}{4}\right)^{K+1}\|x^0_0 - x^*\|^2 + ( C_1 n\alpha^3+ C_2n^3 \alpha^{4})\sum_{j=1}^K \left(1- \frac{n\alpha \mu}{4}\right)^{j-1}.&
\end{align*}
We can now use the fact that
$(1-x)\leq e^{-x}$ and
$\left(1-\frac{n\alpha\mu}{4}\right)\leq 1$,
to get the following bound:
\begin{align*}
\mathbb{E}\big[\|x^{K}_n  - x^*\|^2\big] &\leq e^{-\frac{n\alpha \mu}{4}K}\|x^0_0 - x^*\|^2 + (C_1 n\alpha^3+ C_2n^3 \alpha^{4})K.
\end{align*}
By setting the step size to be $\alpha=\frac{4l\log T}{T \mu}$ and noting that $T=nK$, we get that
\begin{align*}
\mathbb{E}\big[\|x^{K}_n  &- x^*\|^2\big] \leq e^{-n\frac{4l\log T}{T \mu}\frac{\mu}{4}K}\|x^0_0 - x^*\|^2  + (n\alpha^3C_1+ \alpha^{4}n^{3} C_2)K\\
&= e^{-l\log T}\|x^0_0 - x^*\|^2 + \tilde{\mathcal{O}}\left(\dfrac{1}{T^2}+ \dfrac{n^2}{T^3}\right)\\
&= \frac{\|x^0_0 - x^*\|^2 }{T^l} + \tilde{\mathcal{O}}\left(\dfrac{1}{T^2}+ \dfrac{n^2}{T^3}\right).
\end{align*}
Noting that $\|x^0_0 - x^*\|\leq D$ and choosing $l=2$ gives us the result of Theorem~\ref{thm:upperBound}.

\subsection{With- and without-replacement stochastic gradients are close}
One of the key lemmas in \cite{jain2019sgd} establishes that once SGDo iterates get close enough to the global minimizer $x^*$, 
then any iterate at any time during an epoch $x_i^j$ stays close to the iterate at the beginning of that epoch. 
To be more precise, the lemma we refer to is the following.
\begin{lemma}\label{lem:jain}{\normalfont [\citet[Lemma~5]{jain2019sgd}]} Under the assumptions of Theorem~\ref{thm:upperBound},
\begin{align*}
\mathbb{E}[\|x_i^j-x_0^j\|^2]\leq 5i\alpha^2G^2+2i\alpha (F(x_0^j)-F(x^*)).
\end{align*}
\end{lemma}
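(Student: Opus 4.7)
I would prove the lemma by unrolling the SGDo update and controlling the resulting sum of without-replacement stochastic gradients via $L$-smoothness plus a variance bound for random permutations. Since
\[
x_i^j - x_0^j \;=\; -\alpha \sum_{k=0}^{i-1}\nabla f_{\sigma_j(k+1)}(x_k^j),
\]
I would decompose each summand as $\nabla f_{\sigma_j(k+1)}(x_k^j) = \nabla f_{\sigma_j(k+1)}(x_0^j) + \bigl[\nabla f_{\sigma_j(k+1)}(x_k^j) - \nabla f_{\sigma_j(k+1)}(x_0^j)\bigr]$, where the bracketed piece is at most $L\|x_k^j-x_0^j\|$ in norm by Assumption~\ref{ass:lipschitz}. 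Writing $S_i := \sum_{k=0}^{i-1}\nabla f_{\sigma_j(k+1)}(x_0^j)$, the triangle inequality together with $(a+b)^2\leq 2a^2+2b^2$ and Cauchy--Schwarz gives
\[
\|x_i^j-x_0^j\|^2 \;\leq\; 2\alpha^2\|S_i\|^2 \;+\; 2i\alpha^2 L^2\sum_{k=0}^{i-1}\|x_k^j-x_0^j\|^2.
\]

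\textbf{Bounding $S_i$ via random-permutation variance.} The heart of the argument is controlling $\mathbb{E}\|S_i\|^2$, which is where without-replacement sampling actually helps. Splitting $S_i = i\nabla F(x_0^j) + \bigl(S_i - i\nabla F(x_0^j)\bigr)$ and using the standard identity for sampling without replacement from the finite population $\{\nabla f_m(x_0^j)\}_{m=1}^n$ (the covariance of two distinct draws equals $-\tfrac{1}{n-1}\sigma^2$, with $\sigma^2:=\tfrac{1}{n}\sum_m\|\nabla f_m(x_0^j)-\nabla F(x_0^j)\|^2\leq G^2$), one gets the clean bound $\mathbb{E}\|S_i-i\nabla F(x_0^j)\|^2 = \tfrac{i(n-i)}{n-1}\sigma^2 \leq iG^2$. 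Co-coercivity $\|\nabla F(x_0^j)\|^2 \leq 2L(F(x_0^j)-F(x^*))$ (valid since $F$ is convex, $L$-smooth, and $\nabla F(x^*)=0$) then yields
\[
\mathbb{E}\|S_i\|^2 \;\leq\; 2iG^2 \;+\; 4i^2 L\bigl(F(x_0^j)-F(x^*)\bigr).
\]

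\textbf{Closing the recursion.} Taking expectations and writing $A_i := \mathbb{E}\|x_i^j - x_0^j\|^2$, we obtain
\[
A_i \;\leq\; 4i\alpha^2G^2 \;+\; 8i^2\alpha^2 L\bigl(F(x_0^j)-F(x^*)\bigr) \;+\; 2i\alpha^2 L^2\sum_{k=0}^{i-1}A_k.
\]
The step-size and epoch conditions of Theorem~\ref{thm:upperBound} force $n\alpha L$ to be small, so $i\alpha L$ is bounded by a tiny constant for every $i\leq n$. This is exactly what is needed to (i) absorb $8i^2\alpha^2 L = 8i\alpha\cdot(i\alpha L)$ into a $2i\alpha$ coefficient on the $F(x_0^j)-F(x^*)$ term and (ii) make the Gronwall factor $2i^2\alpha^2 L^2$ small enough that a simple induction on $i$ folds the $\sum A_k$ contribution into a modest constant increase on the $i\alpha^2 G^2$ coefficient, yielding the claimed $5i\alpha^2 G^2 + 2i\alpha(F(x_0^j)-F(x^*))$.

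\textbf{Main obstacle.} The crucial step is the negative-covariance identity for without-replacement sampling; without it, a naive Cauchy--Schwarz bound on $\|S_i\|^2$ yields $i^2G^2$ rather than $iG^2$, which is a factor of $i$ worse and would be insufficient for the downstream use in Theorem~\ref{thm:upperBound}. The remaining challenge is the constant-accounting needed to end up with exactly $5$ and $2$ as stated: this relies on the smallness of $n\alpha L$ guaranteed by the schedule and on propagating the inductive hypothesis uniformly over $i\leq n$, so that neither the $i^2\alpha^2 L$ nor the accumulated Gronwall contribution is ever dominant.
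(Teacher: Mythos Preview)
Your approach is correct but genuinely different from the paper's. The paper does not unroll and anchor at $x_0^j$; instead it runs a one-step recursion on $\|x_{i+1}-x_0\|^2$: expand the square, use convexity of $f_{\sigma(i)}$ to bound $-2\alpha\langle\nabla f_{\sigma(i)}(x_i),x_i-x_0\rangle\le 2\alpha(f_{\sigma(i)}(x_0)-f_{\sigma(i)}(x_i))$, and then invoke the iterate-coupling claim $\big|\mathbb{E}[F(x_i)-f_{\sigma(i)}(x_i)\mid x_0]\big|\le 2\alpha G^2$ (Claim~\ref{cl:helper3}) together with $F(x_i)\ge F(x^*)$ to obtain $\mathbb{E}[\|x_{i+1}-x_0\|^2\mid x_0]\le \mathbb{E}[\|x_i-x_0\|^2\mid x_0]+5\alpha^2G^2+2\alpha(F(x_0)-F(x^*))$, which telescopes. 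By contrast, you freeze gradients at $x_0^j$, use the negative-covariance/without-replacement variance identity on $S_i$, apply co-coercivity to $\|\nabla F(x_0^j)\|^2$, and close via a Gronwall induction. Your route sidesteps the coupling lemma entirely, using only standard sampling and smoothness tools, but it leans on the smallness of $n\alpha L$ (guaranteed by the Theorem~\ref{thm:upperBound} schedule) to tame both the $8i^2\alpha^2 L$ coefficient and the Gronwall sum; the paper's argument needs only $\alpha\le 2/L$ and, importantly, the coupling claim it develops here is reused later (Claim~\ref{cl:helper2}) in bounding the term $A$ in Lemma~\ref{lem:innerProdR}.
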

We would like to note that Lemma~\ref{lem:jain} is slightly different from the one in \cite{jain2019sgd}, which instead uses $\mathbb{E}[F(x_i^j)-F(x^*)]$ rather than $(F(x_i^j)-F(x^*))$, but their proof can be adapted to obtain the version written above. 
For the formal version of Lemma~\ref{lem:jain}, please see Lemma~\ref{lem:jainLem} in the Appendix.

 Now, consider the case when the iterates are very close to the optimum and hence $F(x_i^j)-F(x^*)\approx 0$. 
 Then, Lemma \ref{lem:jain} implies that $\mathbb{E}[\|x_i^j-x_0^j\|^2]$ does not grow quadratically in $i$ which would generically happen for $i$ gradient steps, but it rather grows linearly in $i$. 
 This is an important and useful fact for SGDo: it shows that all iterates within an epoch remain close to $x^j_0$. 
 
 Hence, since the iterates of SGDo do not move too much during an epoch, then the gradients computed throughout the epoch at points $x_i^j$ should be well approximated by gradients computed on the $x_0^j$ iterate. 
 Roughly, this translates to the following observation:
 the $n$ gradient steps taken through a single epoch are almost equal to $n$ steps of full gradient descent computed at $x_0^j$. 
 This is in essence what allows SGDo to achieve better convergence than SGD - an epoch can be approximated by $n$ steps of gradient descent.
 
Now, let $\sigma^j$ represent the random permutation of the $n$ functions $f_i$ during the $j$-th epoch. Thus, $\sigma^j(i)$ is the index of the function chosen at the $i$-th iteration of the $j$-th epoch. Proving Lemma \ref{lem:jain} requires proving that the function value of $f_{\sigma^j(i)}(x_i^j)$, in expectation, is almost equal to $F(x_i^j)$. In particular, we prove the following claim in our supplemental material.

\begin{restatable}{claim}{claimJainNew}
\label{cl:helper3}{\normalfont [\citet[Lemma~4]{jain2019sgd}]}
If $\alpha \leq \frac{2}{L}$, then for any epoch $j$ and $i$-th ordered iterate during that epoch
\begin{equation}
\bigg|\mathbb{E}\left[ F (x_i^j)-f_{\sigma^j(i)} (x_i^j )\;\middle|\;x_0^j\right]\bigg|\leq 2\alpha G^2.\label{eq:jainDiff}
\end{equation}
\end{restatable}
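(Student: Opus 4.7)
The plan is to isolate a zero-mean part and then control the remaining fluctuation using $L$-smoothness together with the step-size hypothesis $\alpha \leq 2/L$. Start by splitting
\begin{equation*}
F(x_i^j) - f_{\sigma^j(i)}(x_i^j) = \big[F(x_0^j) - f_{\sigma^j(i)}(x_0^j)\big] + \big[F(x_i^j) - F(x_0^j)\big] - \big[f_{\sigma^j(i)}(x_i^j) - f_{\sigma^j(i)}(x_0^j)\big].
\end{equation*}
Since $x_0^j$ depends only on the permutations of earlier epochs (and so is independent of $\sigma^j$), and since $\sigma^j(i)$ is marginally uniform on $\{1,\ldots,n\}$, the conditional expectation of the first bracket vanishes exactly: $\mathbb{E}[f_{\sigma^j(i)}(x_0^j)\mid x_0^j] = \tfrac{1}{n}\sum_{k=1}^n f_k(x_0^j) = F(x_0^j)$. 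The task therefore reduces to bounding the expected change of $F - f_{\sigma^j(i)}$ along the epoch.

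A naive $G$-Lipschitz bound (from bounded gradients) combined with $\|x_i^j - x_0^j\|\leq i\alpha G$ gives only $2i\alpha G^2$, which depends on $i$ and is too weak. To obtain the claimed $i$-independent bound, I would telescope $(F-f_{\sigma^j(i)})(x_l^j)-(F-f_{\sigma^j(i)})(x_{l-1}^j)$ across $l=1,\ldots,i$ and expand each increment via $L$-smoothness along the step $-\alpha\nabla f_{\sigma^j(l)}(x_{l-1}^j)$, producing a linear term $-\alpha\langle\nabla F(x_{l-1}^j)-\nabla f_{\sigma^j(i)}(x_{l-1}^j),\,\nabla f_{\sigma^j(l)}(x_{l-1}^j)\rangle$ plus a smoothness remainder of size at most $L\alpha^2 G^2$, which $\alpha\leq 2/L$ absorbs into an $\mathcal{O}(\alpha G^2)$ contribution. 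For the $l=i$ summand the linear term is directly bounded by $2\alpha G^2$ via Cauchy--Schwarz; for $l<i$ I would exploit the symmetry of the uniform permutation under the swap of positions $l$ and $i$: since $x_{l-1}^j$ depends only on $\sigma^j(1),\ldots,\sigma^j(l-1)$, this swap leaves $x_{l-1}^j$ unchanged while exchanging the labels $\sigma^j(l)\leftrightarrow\sigma^j(i)$, and averaging cancels the leading-order contribution and leaves only a second-order $\mathcal{O}(\alpha^2 L G^2)$ residual.

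The main obstacle is carrying out the swap-based cancellation rigorously: the pair $(\sigma^j(l),\sigma^j(i))$ is exchangeable but not independent under without-replacement sampling, so one must take conditional expectations with respect to the right $\sigma$-algebra (namely the history up through position $l-1$), and one must verify that summing the second-order residuals over $l=1,\ldots,i-1$ yields $\mathcal{O}(\alpha G^2)$ rather than $\mathcal{O}(i\alpha G^2)$, with the final constant controlled by the step-size bound $\alpha L\leq 2$.
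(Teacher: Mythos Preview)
Your approach has a genuine gap. The second-order smoothness remainders you introduce are each of size $O(L\alpha^2 G^2)$, and there are $i$ of them in the telescoping sum; these are one-sided bounds (from the quadratic upper/lower bounds of smoothness) and do not cancel in expectation. Summing them gives $O(iL\alpha^2 G^2)$, which under $\alpha L\le 2$ is only $O(i\alpha G^2)$ --- exactly the naive bound you set out to beat. Your closing sentence acknowledges this as something ``one must verify,'' but there is no mechanism in the proposal that would produce the needed cancellation. The swap-of-positions argument for the linear terms is also not doing what you claim: averaging $\langle \nabla F - \nabla f_{\sigma(i)},\nabla f_{\sigma(l)}\rangle$ with its swapped version $\langle \nabla F - \nabla f_{\sigma(l)},\nabla f_{\sigma(i)}\rangle$ gives $\langle \nabla F,\nabla f_{\sigma(l)}+\nabla f_{\sigma(i)}\rangle - 2\langle \nabla f_{\sigma(i)},\nabla f_{\sigma(l)}\rangle$, which has no reason to be $O(\alpha LG^2)$.

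The paper's proof takes a different route that avoids the telescoping altogether. It writes $\mathbb{E}[f_{\sigma(i)}(x_i)]=\tfrac{1}{n}\sum_s \mathbb{E}[f_s(x_i)\mid \sigma(i)=s]$ and then couples the conditional laws of $x_i$ under $\sigma(i)=s$ and $\sigma(i)=1$ by swapping the values $1$ and $s$ in the permutation. The crucial fact (Lemma~2 of Jain et al.) is that when $\alpha\le 2/L$ the one-step gradient map $x\mapsto x-\alpha\nabla f_k(x)$ is nonexpansive, so a single swap changes the trajectory by at most $2\alpha G$ \emph{uniformly in $i$}; the perturbation does not accumulate. Then $G$-Lipschitzness of $f_s$ (from Assumption~\ref{ass:boundedG}) turns $\|x_i^{(s)}-x_i^{(1)}\|\le 2\alpha G$ directly into $|f_s(x_i^{(s)})-f_s(x_i^{(1)})|\le 2\alpha G^2$, and averaging over $s$ finishes. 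The $i$-independence comes from nonexpansiveness, which your telescoping-plus-smoothness expansion never invokes.
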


This claim establishes  that SGDo behaves almost like SGD with replacement, for which the following is true:
$\mathbb{E}[f_{\sigma^j(i)}(x_i^j)]=\mathbb{E}[F(x_i^j)].$
To prove this claim, \cite{jain2019sgd} consider the conditional distribution of iterates, given the current function index, that is $x_i^j|\sigma_i(j)$, and the unconditional distribution of the iterates $x_i^j$. 
Then, they prove that the absolute difference $|\mathbb{E}[F(x_i^j)]-\mathbb{E}[f_{\sigma^j(i)}(x_i^j)]|$ can be upper bounded by the Wasserstein distance between these two distributions. 
To further upper bound the Wasserstein distance, they propose a coupling between the two distributions. To prove our slightly different version of Lemma \ref{lem:jain}, we proved \eqref{eq:jainDiff} without using this Wasserstein framework. Instead, we use the same coupling argument to directly get a bound on \eqref{eq:jainDiff}. Below we explain the coupling and provide a short intuition.

Consider the conditional distribution of $\sigma^j|\sigma^j(i)=s$. 
If we take the distribution of $\sigma|\sigma(i)=1$, we can generate the support of $\sigma^j|\sigma^j(i)=s$ by taking all permutations $\sigma|\sigma(i)=1$ and by swapping $1$ and $s$ among them. 
This is essentially a coupling between these two distributions, proposed in \cite{jain2019sgd}. 
Now, if we use this coupling to convert a permutation in $\sigma|\sigma(i)=1$ to a permutation $\sigma|\sigma(i)=s$, the corresponding $x_i|\sigma(i)=1$ and $x_i|\sigma(i)=s$ would be within a distance of $2\alpha G$. 
This distance bound is Lemma~2 of \cite{jain2019sgd}.

We can now use such distance bound, and let $v_{(1,s)}$ denote a (random) vector whose norm is less than $2\alpha G$. Then,
\begin{align*}
\mathbb{E}\left[ f_{\sigma\left(i\right)} \left(x_i \right)\right]&=\dfrac{1}{n}\sum_{s=1}^n\mathbb{E}\left[ f_{\sigma\left(i\right)} \left(x_i \right)|\sigma(i)=s\right]&\\
&=\dfrac{1}{n}\sum_{s=1}^n\mathbb{E}\left[ f_s \left(x_i \right)|\sigma(i)=s\right]&\\
&=\dfrac{1}{n}\sum_{s=1}^n\mathbb{E}\left[ f_s \left(x_i + v_{(1,s)} \right)|\sigma(i)=1\right]&\\
&\leq \dfrac{1}{n}\sum_{s=1}^n\mathbb{E}\left[ f_s \left(x_i\right)+(2\alpha G^2)|\sigma(i)=1\right]&\\
&=\mathbb{E}\left[ F \left(x_i\right)|\sigma(i)=1\right]+2\alpha G^2.
\end{align*}
Similarly, for any $s\in \{1,\dots, n\}$:
\begin{equation*}
\mathbb{E}\left[ f_{\sigma\left(i\right)} \left(x_i \right)\right]\leq \mathbb{E}\left[ F \left(x_i\right)|\sigma(i)=s\right]+2\alpha G^2.
\end{equation*}
Therefore,
\begin{align*}
\mathbb{E}\left[ f_{\sigma\left(i\right)} \left(x_i \right)\right]&\leq \dfrac{1}{n}\sum_{s=1}^n\mathbb{E}\left[ F \left(x_i\right)|\sigma(i)=s\right]+2\alpha G^2&\\
&\leq \mathbb{E}\left[ F \left(x_i\right)\right]+2\alpha G^2.
\end{align*}
Similarly, we can prove that 
$$\mathbb{E}\left[ f_{\sigma\left(i\right)} \left(x_i \right)\right] \geq \mathbb{E}\left[ F \left(x_i\right)\right]-2\alpha G^2.$$ 
Combining these two results we obtain \eqref{eq:jainDiff}. The detailed proof of Claim~\ref{cl:helper3} is provided in the appendix.

The full proof of Theorem 1 requires some more nuanced bounding derivations, and the complete details can be found in Appendix~\ref{app:up}.
\section{Lower Bound for General Case}
In the previous section, we establish that for quadratic functions the $\Omega\left(\frac{1}{T^2}+\frac{n^2}{T^3}\right)$ lower-bound by \citet{safran2019good} is essentially tight. 
This still leaves open the possibility that a tighter lower bound may exist for strongly convex functions that are not quadratic.
After all, the best convergence rate known for strongly convex functions that are sums of smooth functions is of  the order of $n/T^2$.

Indeed,  in this section, we show that the convergence rate of $\mathcal{O}\left(\frac{n}{T^2}\right)$ established by \citet{jain2019sgd} is tight.

For a certain constant $C$ (see Appendix~\ref{app:low} for the formal version of the theorem), we show the following theorem
\begin{theorem}\label{thm:lowerBound}
There exists a strongly convex function $F$ that is the sum of $n$ smooth convex functions, such that for any step size $$\frac{1}{T}\leq \alpha \leq \frac{C}{n},$$
the error after $T$ total iterations of SGDo satisfies
\begin{equation*}
\mathbb{E}[\|x_T-x^*\|^2]=\Omega\left(\frac{n}{T^2}\right).
\end{equation*}
\end{theorem}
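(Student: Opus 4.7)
The plan is to construct a specific one-dimensional $F=\frac{1}{n}\sum_{i=1}^n f_i$ in the spirit of \citet{safran2019good}, but replacing each quadratic component by a carefully chosen piecewise quadratic, as the introduction hints. Concretely, pair the indices as $\{(2k-1,2k):k=1,\dots,n/2\}$ and make each $f_i$ a $C^1$ piecewise quadratic with shift $\pm a$ and \emph{asymmetric} curvatures $\mu$ and $L$ on the two sides of its minimum, arranged so that each $f_i$ is convex and $L$-smooth and $F$ is $\mu$-strongly convex with $x^*=0$. The shifts are chosen symmetrically ($\sum a_i=0$) exactly as in the Safran--Shamir lower bound; what is new is that the asymmetry inside each $f_i$ breaks the first-order gradient cancellation that is available in the purely quadratic case.

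Next I would analyze a single epoch of SGDo started at $y\approx 0$. For a pure quadratic each step is affine and one recovers, after one epoch, an affine contraction in $y$ plus a permutation-dependent linear combination of the $a_i$ whose mean vanishes because $\sum a_i=0$, which is why the quadratic lower bound is only $\Omega(1/T^2+n^2/T^3)$. In the piecewise setting, the effective slope at step $i$ depends on the sign of $x_{i-1}^j-a_{\sigma^j(i)}$, so the composed-epoch map is piecewise affine and the expectation over the random permutation no longer cancels at first order. The central computation is to show that, uniformly over $\alpha\in[1/T,C/n]$ and for iterates close enough to $x^*$, the per-step bias is of order $\alpha^2$ and of a definite sign, giving an accumulated per-epoch bias of order $\alpha^2$ (with an absolute constant that does not shrink with $n$). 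A convenient way to organize this is to condition on $y$ being in a small neighborhood of $0$ so that the sign of $x_{i-1}^j-a_{\sigma^j(i)}$ is governed by the sign of $-a_{\sigma^j(i)}$ up to a perturbation of size $O(\alpha)$, then expand the step to second order in $\alpha$ and average.

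With such an epoch-level bound in hand, the rest is a recursion: the construction gives
\begin{equation*}
\mathbb{E}[(x_0^{j+1})^2]\ \geq\ (1-\alpha n\mu)\,\mathbb{E}[(x_0^j)^2]\ +\ c\,\alpha^2
\end{equation*}
for some absolute constant $c>0$. Unrolling over $K$ epochs, for small step sizes ($\alpha$ near $1/T$, where $\alpha n\mu K=\alpha \mu T$ is $O(1)$) the biases accumulate additively, yielding $\Omega(\alpha^2 K)=\Omega(K/T^2)=\Omega(n/T^2)$; for larger step sizes (up to $C/n$) the steady-state floor is of order $\alpha/(n\mu)\gtrsim 1/(nT)\cdot\text{(range factor)}$, and one verifies that the minimum of the resulting expression over $\alpha\in[1/T,C/n]$ is still $\Omega(n/T^2)$. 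The range $\alpha<1/T$ is ruled out separately by observing that such a step size makes so little progress that starting from $x_0^0$ at distance $\Theta(1)$ from $x^*$ already forces $\mathbb{E}\|x_T-x^*\|^2=\Omega(1)\gg n/T^2$, while $\alpha>C/n$ is unstable for our $L$-smooth components.

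The main obstacle, in my view, is the single-epoch bias analysis: one must show that averaging over random permutations of the asymmetric components produces a bias of a definite sign and of size $\Omega(\alpha^2)$ per epoch, rather than cancelling by symmetry. The pairing and the choice of curvatures in the construction are precisely what enforce this, but verifying it requires carefully tracking the signs of $x_{i-1}^j-a_{\sigma^j(i)}$ across the epoch, which are correlated through the shared past of the permutation. A secondary obstacle is uniformity of the lower bound in the step size: the recursion above must be shown to hold with the \emph{same} constant $c$ throughout the range $\alpha\in[1/T,C/n]$, which I would achieve by choosing $a$ small enough that the iterates never leave the neighborhood where the piecewise structure behaves predictably during the analysis window.
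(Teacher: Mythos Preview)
Your recursion and its unrolling do not produce the claimed bound. You write
\[
\mathbb{E}[(x_0^{j+1})^2]\ \geq\ (1-\alpha n\mu)\,\mathbb{E}[(x_0^j)^2]\ +\ c\,\alpha^2
\]
and then claim that for $\alpha\approx 1/T$ the accumulated error is $\Omega(\alpha^2 K)=\Omega(K/T^2)=\Omega(n/T^2)$. But $K/T^2$ and $n/T^2$ are not the same quantity; they agree only when $K\asymp n$, whereas the lower bound must hold (and is most interesting) precisely when $K\ll n$. More structurally, a per-epoch additive term of order $\alpha^2$ in the second moment gives a steady-state floor of $c\alpha/(n\mu)$, which at $\alpha=1/T$ is $\Theta(1/(nT))$, far short of $n/T^2=1/(nK^2)$ when $K\ll n$. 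Even if you upgrade your ``per-step bias $\alpha^2$ with definite sign'' to a per-epoch first-moment drift of $cn\alpha^2$, the resulting first-moment steady state is $\Theta(\alpha)$ and the squared error is $\Theta(\alpha^2)$, still only $1/T^2$, not $n/T^2$.

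What is missing is an extra factor of $\sqrt{n}$ in the per-epoch drift, and your construction does not generate it. In the paper the curvature asymmetry is placed at the \emph{global} minimum $x^*=0$ (the same kink for every $f_i$), while the $\pm G/2$ linear terms drive the iterate on a random walk during the epoch. The central lemma is that the partial sums $\sum_{p\le i}\sigma_p$ of the signs satisfy $\mathbb{E}\big|\sum_{p\le i}\sigma_p\big|=\Theta(\sqrt{i})$, so the iterate typically sits at distance $\Theta(\alpha\sqrt{n})$ from $0$ for a constant fraction of the epoch. Because the quadratic slope is $L$ on the negative side and $1$ on the positive side, this creates a one-sided first-moment drift of order $\alpha\cdot n\cdot L\cdot(\alpha\sqrt{n})=\Theta(L\alpha^2 n^{3/2})$ per epoch, which pushes $\mathbb{E}|x_0^j|$ up to a floor of $\Theta(\alpha\sqrt{n})$ and hence $\mathbb{E}|x_T|^2\ge \Theta(\alpha^2 n)\ge n/T^2$. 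By contrast, in your construction the break sits at each component's own minimum $\pm a$, and you explicitly argue that near $0$ the sign of $x_{i-1}-a_{\sigma(i)}$ is determined by $-a_{\sigma(i)}$; that kills the random-walk mechanism, since which curvature is active no longer depends on how far the iterate has wandered. The $\sqrt{n}$ amplification is the heart of the proof, and your plan does not supply it.
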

The full proof of this theorem is provided in Appendix~\ref{app:low}, but we give an intuitive explanation of the proof later in this section.

Note that the theorem above establishes the existence of a function for which SGDo converges at rate $\Omega(\frac{n}{T^2})$, but only for the step size range $\frac{1}{T}\leq \alpha \leq \frac{C}{n}$. 
This is the range of the most interest because most of the upper bounds and convergence guarantees of SGDo (and SGD) work in this step size range. 
However, it would still be desirable to get a function on which SGDo converges at rate $\Omega(n/T^2)$ for all step sizes.
Such a function would be difficult to optimize, no matter how much we tune the step size. 
Indeed, we show that based on Theorem \ref{thm:lowerBound}, we can create such a function. 
To do that, we use a function proposed by \citet[Proposition 1]{safran2019good}, which converges slowly outside of the step size range $\frac{1}{T}\leq \alpha \leq \frac{C}{n}$.

\citet{safran2019good} show that there exists a strongly convex function $F_2$, which is the sum of $n$ quadratics, such that for step size $\alpha\leq \frac{1}{T}$, the expected error satisfies  $\mathbb{E}[\|x_T-x^*\|^2]=\Omega(1)$ (see the proof of Proposition~1, pg. 10-12 in their paper). Further, for the same function $F_2$, the proof of that proposition can be adapted directly to get $\mathbb{E}[\|x_T-x^*\|^2]=\Omega\left(\frac{1}{n}\right)$ for any step size $\alpha\geq \frac{C}{n}$, for any constant $C$.

Using this function $F_2$ and the function $F$ from Theorem \ref{thm:lowerBound}, we can create a function on which SGDo converges at rate $\Omega(\frac{n}{T^2})$ for all step sizes.

\begin{corollary}\label{cor:lowerBoundExtension}
There exists a 2-Dimensional strongly convex function that is the sum of $n$ smooth convex functions, such that for any $\alpha>0$
\begin{equation*}
\mathbb{E}[\|x_T-x^*\|^2]=\Omega\left( \frac{n}{T^2}\right).
\end{equation*}
\end{corollary}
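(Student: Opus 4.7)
The plan is to build the desired two-dimensional function as a direct (Cartesian) sum of the two one-dimensional constructions already at hand: the function $F$ from Theorem~\ref{thm:lowerBound}, which is bad in the step-size window $[1/T,\,C/n]$, and the quadratic-sum function $F_2$ of \citet{safran2019good} cited in the paragraph preceding the corollary, which is bad outside that window. Concretely, let the two constructions have individual components $f_1,\dots,f_n$ and $f_{2,1},\dots,f_{2,n}$ respectively, and define the $n$ smooth convex components on $\mathbb{R}^2$ by
\begin{equation*}
    \tilde f_i(x,y) \;=\; f_i(x) + f_{2,i}(y), \qquad i=1,\dots,n,
\end{equation*}
so that $\tilde F(x,y) = F(x) + F_2(y)$. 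Convexity and smoothness of each $\tilde f_i$ and strong convexity of $\tilde F$ pass through the direct sum immediately, and the minimizer is $(x^*,y^*)$.

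The key structural observation is that SGDo applied to $\tilde F$ decouples exactly into two independent SGDo runs, one on $F$ and one on $F_2$, driven by the \emph{same} random permutation $\sigma$ and the same step size $\alpha$. Indeed, since $\nabla \tilde f_i(x,y) = (\nabla f_i(x),\nabla f_{2,i}(y))$, the updates to the two coordinates never interact. Moreover, the marginal distribution of $\sigma$ as a permutation acting on the $f_{2,i}$'s (resp.\ $f_i$'s) is still uniform, so the marginal law of $y_T$ is exactly what one would obtain by running SGDo on $F_2$ alone, and similarly for $x_T$. Finally,
\begin{equation*}
    \mathbb{E}\!\left[\|(x_T,y_T)-(x^*,y^*)\|^2\right] = \mathbb{E}[\|x_T-x^*\|^2] + \mathbb{E}[\|y_T-y^*\|^2],
\end{equation*}
so a lower bound on either coordinate gives a lower bound on the total error.

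The proof then splits the positive real axis of step sizes into three intervals and hands off to the appropriate component in each:
\begin{equation*}
  \text{(i) } 0<\alpha<\tfrac{1}{T}, \qquad \text{(ii) } \tfrac{1}{T}\le \alpha \le \tfrac{C}{n}, \qquad \text{(iii) } \alpha > \tfrac{C}{n}.
\end{equation*}
In case (ii), Theorem~\ref{thm:lowerBound} applied to the $x$-coordinate gives $\mathbb{E}[\|x_T-x^*\|^2]=\Omega(n/T^2)$. In cases (i) and (iii), the quoted extensions of \citet[Proposition~1]{safran2019good} applied to the $y$-coordinate give $\mathbb{E}[\|y_T-y^*\|^2]=\Omega(1)$ and $\Omega(1/n)$ respectively. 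Both of these absorb into $\Omega(n/T^2)$: using $T=nK$ with $K\ge 1$, one has $1\ge n/T^2$ (since $T^2\ge n^2\ge n$) and $1/n \ge n/T^2$ (since $T^2\ge n^2$), so in every regime at least one coordinate contributes $\Omega(n/T^2)$ to the squared error.

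There is essentially no hard step here; the only thing to be careful about is that the constant $C$ appearing in Theorem~\ref{thm:lowerBound} and the constant in the Safran--Shamir $\Omega(1/n)$ tail regime can be chosen to be the same cut-off (taking the smaller of the two), and that the convexity/smoothness parameters of $\tilde F$ remain the desired ones under the direct sum. The main conceptual point to articulate cleanly is the marginal-distribution argument that lets us import each one-dimensional lower bound verbatim despite the shared permutation and shared step size.
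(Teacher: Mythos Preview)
Your proposal is correct and follows essentially the same approach as the paper's proof: both construct the two-dimensional function as a separable sum of the Theorem~\ref{thm:lowerBound} construction in one coordinate and the Safran--Shamir quadratic construction in the other, observe that SGDo decouples coordinate-wise, and then cover every step size by invoking the appropriate one-dimensional lower bound. Your absorption step (checking that $\Omega(1)$ and $\Omega(1/n)$ dominate $\Omega(n/T^2)$ via $T\ge n$) is made a bit more explicit than in the paper, but the argument is otherwise identical.
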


\begin{flushleft}
The proof of this corollary is provided in Appendix~\ref{app:cor}.
\end{flushleft}

Thus overall, we get that for any fixed step size $\mathbb{E}[\|x_T-x^*\|^2]=\Omega\left(\frac{n}{T^2}\right)$.
Next, we try to explain the function construction and proof technique behind Theorem \ref{thm:lowerBound}. 
The construction of the lower bound is similar to the one used by \citet{safran2019good}. 
The difference is that the prior work considers quadratic functions, while we consider a slightly modified piece-wise quadratic function. 

Specifically, we construct the following function $F(x)=\frac{1}{n}\sum_{i=1}^{n}f_i(x)$ as
\begin{equation*}
F(x)=
\left\{
\begin{array}{cl}
\dfrac{ x^2}{2}, & \text{ if }x\geq 0\\
\dfrac{ L x^2}{2},& \text{ if }x < 0,
\end{array}
\right.\label{eq:nonLipHessFunctionDefinition}
\end{equation*}
where $n$ is an even number.
Of the $n$ component functions $f_i$, half of them are defined as follows:
\begin{equation*}
\text{if $i \leq \frac{n}{2}$, then }f_i(x)=
\left\{
\begin{array}{cl}
\dfrac{ x^2}{2}+\dfrac{Gx}{2}, &\text{ if }x\geq 0\\
\dfrac{ L x^2}{2}+\dfrac{Gx}{2},& \text{ if }x < 0,
\end{array}
\right.
\end{equation*}
and the other half of the functions are defined as follows:
\begin{equation*}
\text{if $i > \frac{n}{2}$, then }f_i(x)=
\left\{
\begin{array}{cl}
\dfrac{ x^2}{2}-\dfrac{Gx}{2},& \text{ if }x\geq 0\\
\dfrac{ L x^2}{2}-\dfrac{Gx}{2},& \text{ if }x < 0.
\end{array}
\right.
\end{equation*}
For our construction, we set $L$ to be a big enough positive constant. See for example, Fig.~\ref{fig:lowerbound}.
\begin{figure}[h]
	\begin{center}
	\includegraphics[width=0.7\textwidth]{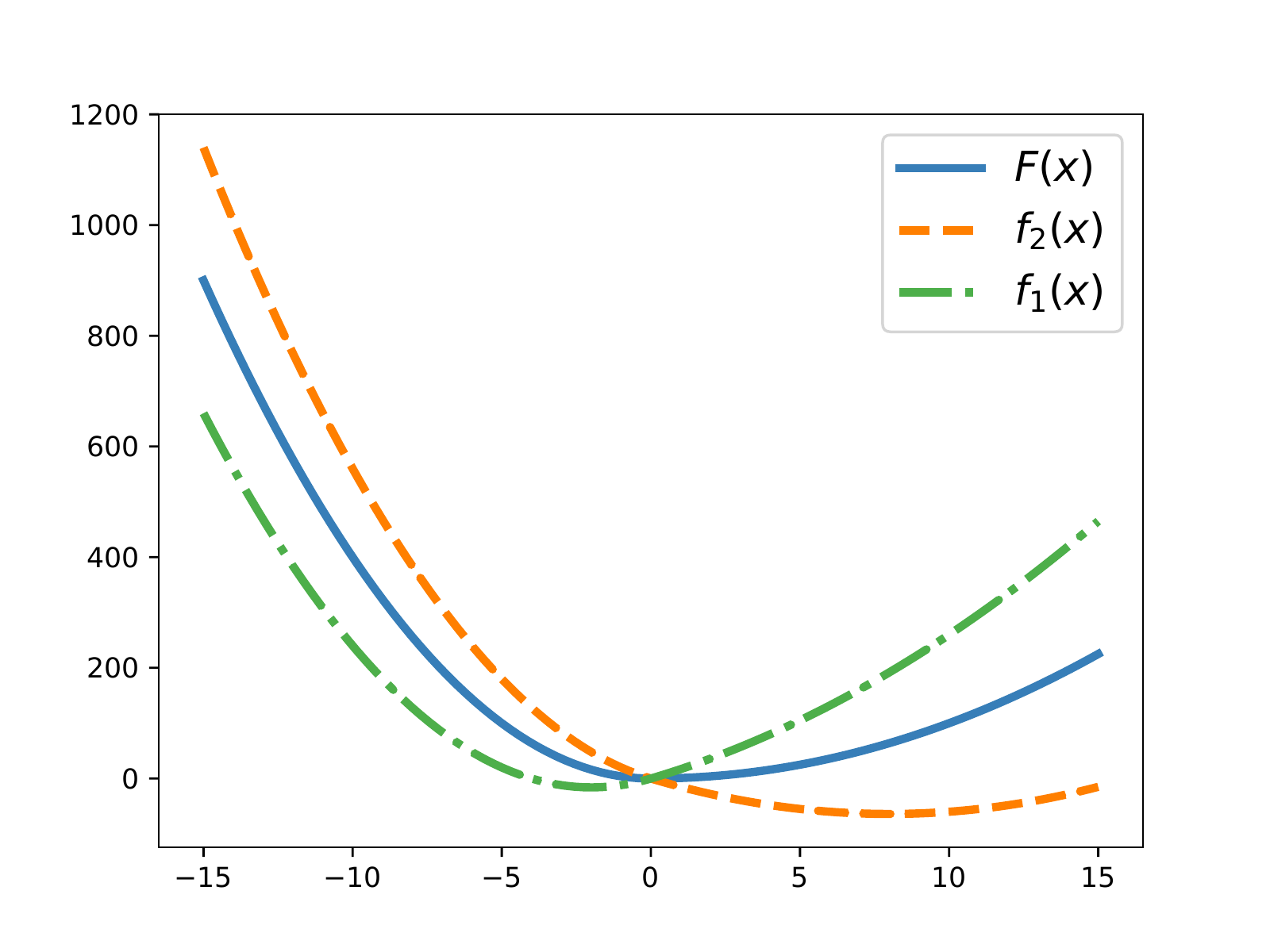}
	\caption{\small Lower bound construction. Note that $f_1(x)$ represents the component functions of the first kind, and $f_2(x)$ represents the component functions of the second kind, and $F(x)$ represents the overall function.}
	\label{fig:lowerbound}
	\end{center}
\end{figure}

 Next we ought to verify that this function abides to Assumptions \ref{ass:convex}-\ref{ass:lipschitz}.
Note that Assumption \ref{ass:convex} is satisfied, as it can be seen that functions $f_i$'s are all continuous and convex.
Next, we need to show that Assumption \ref{ass:stronglyConvex} holds, that is $F$ is strongly convex. 
We will show that this is true by proving the following equivalent definition of strong convexity: a function $f$ is $\mu$-strongly convex if $g(x):=f(x)-\frac{\mu}{2}\|x\|^2$ is convex. We can see that this is true for $F$ with $\mu=1$.

In the proof of Theorem~\ref{thm:lowerBound}, we initialize at the origin.
In that case, in the proof we also prove that Assumptions~\ref{ass:boundedD} and \ref{ass:boundedG} hold.
In particular, we show that the iterates do not go outside of a bounded domain, and inside this domain, the gradient is bounded by $G$.
Finally, let us focus on Assumption~\ref{ass:lipschitz}. To prove that these functions have Lipschitz gradients, we need to show 
$$\forall x,y: |\nabla f_i(x) - \nabla f_i(y)|\leq L  |x-y|.$$ 
If $xy\geq 0$, that is $x$ and $y$ lie on the same side of the origin, then this is simple to see because they both lie on the same quadratic. Otherwise WLOG, assume $x<0$ and $y>0$. 
Also, assume WLOG that $f_i$ is function of the first kind, that is $i\leq \frac{n}{2}$ and hence the linear term in $f_i(x)$ is $\frac{Gx}{2}$.
Then,
\begin{align*}
|\nabla f_i(x) - \nabla f_i(y)|&=\left|L x +\dfrac{G}{2} -  y -\dfrac{G}{2}\right|&\\
&= y-L x &\\
&\leq L y-L x &\\
&\leq L |y-x|.
\end{align*}

Overall, the difficulty in the analysis comes from the fact that unlike the functions considered by \citet{safran2019good}, our functions are piece-wise quadratics. 

Let us initialize at $x_0^1=0$ (the minimizer). We will show that in expectation, at the end of $K$ epochs, the iterate would be at a certain distance (in expectation). 
Note that the progress made over an epoch is just the sum of gradients (multiplied by $-\alpha$) over the epoch:
\begin{equation*}
x^j_n - x^j_0 = -\alpha \sum_{i=1}^{n}\nabla f_{\sigma^j(i)}(x_i)
\end{equation*}
where $\sigma^j(i)$ represents the index of the $i$-th function chosen in the $j$-th epoch. Next, note that the gradients from the linear components $\pm \frac{G}{2}x$ are equal to $\pm \frac{G}{2}$, that is they are constant. 
Thus, they will cancel out over an epoch.

 However the gradients from the quadratic components do not cancel out, and in fact that part of the gradient will not even be unbiased, in the sense that if $x_t\geq 0$, the gradient at $x_t$ from the quadratic component $\frac{ x^2}{2}$ will be less in magnitude than the gradient from the quadratic component $\frac{L x^2}{2}$ at $-x_t$. 
 
 The idea is to now ensure that if an epoch starts off near the minimizer, then the iterates spend a certain amount of time in the $x<0$ region, so that they ``accumulate'' a lot of gradients of the form $L x$, which makes the sum of the gradients at the end of the epoch biased away from the minimizer.
 
To ensure that the iterates spend some time in the $x<0$ region, we analyze the contribution of the linear components during the epoch. This is because when the iterates are already near the minimizer $x\approx 0$, the gradient contribution of the quadratic terms would be small, and the dominating component during an epoch would come from the linear terms. What this means is that in the middle of an epoch, it is the linear terms which contribute the most towards the ``iterate movement'', even though at the end of that epoch their gradients get cancelled out and what remains is the contribution of the quadratic terms.

Then, to obtain a lower bound  matching the upper bound given by \citet{jain2019sgd}, observe that it is indeed this contribution of the linear terms that we require to get a tight bound on. 
This is because, the upper bound from the aforementioned work was also in fact directly dependent on the movement of iterates away from the minimizer during an epoch, caused by the stochasticity in the gradients (cf. Lemma~5 of \citet{jain2019sgd}). 
We give below the informal version of the main lemma for the proof:
\begin{lemma}\label{lem:dummylabelnew}{\normalfont[Informal]}
Let $(\sigma_1,\dots, \sigma_{n})$ be a random permutation of $\{\underbrace{+1,\dots ,+1}_{\frac{n}{2}\text{ times}},\underbrace{-1,\dots ,-1}_{\frac{n}{2}\text{ times}}\}$. Then for $i < n/2$,
\begin{equation*}
\textstyle \mathbb{E}\left[ \left|  \sum_{j=1}^i \sigma_j\right| \right] \geq C \sqrt{i},
\end{equation*}
where $C$ is a universal constant.
\end{lemma}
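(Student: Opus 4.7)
The plan is to reduce the claim to a second-moment lower bound paired with a fourth-moment upper bound, from which the desired control on $\mathbb{E}|S_i|$ follows by a Paley--Zygmund argument. Let $S_i := \sum_{j=1}^i \sigma_j$ and let $N_i^+ := |\{j \leq i : \sigma_j = +1\}|$, so that $S_i = 2N_i^+ - i$. Since $N_i^+$ follows a hypergeometric distribution with population $n$, $n/2$ successes, and $i$ draws, the standard variance formula gives
\begin{equation*}
\mathbb{E}[S_i^2] \;=\; 4\,\mathrm{Var}(N_i^+) \;=\; \frac{i(n-i)}{n-1},
\end{equation*}
which is at least $i/2$ whenever $i \leq n/2$ and $n \geq 2$. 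In particular $\sqrt{\mathbb{E}[S_i^2]} \geq \sqrt{i/2}$, and the remaining task is to convert this into a lower bound on $\mathbb{E}|S_i|$.

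For the matching upper bound on $\mathbb{E}[S_i^4]$, I would invoke Hoeffding's classical convex-comparison inequality (1963): for any convex $\phi$, the expectation of $\phi(S_i)$ under sampling without replacement is dominated by the same expectation under sampling with replacement from the same population. Applying this to $\phi(x)=x^4$, and noting that sampling with replacement from the balanced population $\{+1^{n/2}, -1^{n/2}\}$ yields a sum $T_i$ of $i$ i.i.d.\ Rademacher signs, one obtains
\begin{equation*}
\mathbb{E}[S_i^4] \;\leq\; \mathbb{E}[T_i^4] \;=\; 3i^2 - 2i \;\leq\; 3i^2,
\end{equation*}
the middle equality being the standard one-line index-pairing calculation (only the all-equal and two-pair patterns survive). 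Combining with the lower bound on $\mathbb{E}[S_i^2]$ gives $\mathbb{E}[S_i^4] \leq 12\,(\mathbb{E}[S_i^2])^2$, and the Paley--Zygmund inequality with threshold $1/2$ then yields
\begin{equation*}
P\!\left(|S_i| \geq \tfrac{1}{2}\sqrt{\mathbb{E}[S_i^2]}\right) \;\geq\; \left(1-\tfrac{1}{4}\right)^2 \frac{(\mathbb{E}[S_i^2])^2}{\mathbb{E}[S_i^4]} \;\geq\; \frac{3}{64},
\end{equation*}
so $\mathbb{E}|S_i| \geq \tfrac{3}{128}\sqrt{i/2}$, which is of the claimed form $C\sqrt{i}$.

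The only non-routine ingredient above is Hoeffding's convex-comparison result; everything else is bookkeeping. If one wishes to avoid citing that theorem, the alternative direct route is to expand $\mathbb{E}[S_i^4]$ using the identity $\mathbb{E}[\sigma_a\sigma_b] = -\frac{1}{n-1}$ for $a \neq b$ together with its higher-order analogues, and to bound the resulting sum of $O(i^4)$ terms by $O(i^2)$ via the cancellations forced by $\sum_j \sigma_j = 0$; this is considerably more tedious but reaches the same conclusion. I expect the slickest proof will follow the Hoeffding route, with the main obstacle being simply invoking that comparison correctly in the paper's notation.
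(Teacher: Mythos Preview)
Your argument is correct: the hypergeometric variance formula, Hoeffding's convex-order comparison, and Paley--Zygmund combine cleanly to give the $\sqrt{i}$ lower bound. The paper, however, takes a quite different route. It sets up the recursion
\[
\mathbb{E}[|s_i|] \;=\; \mathbb{E}[|s_{i-1}|]\Bigl(1-\tfrac{1}{n-i+1}\Bigr) \;+\; \mathbb{P}(s_{i-1}=0),
\]
obtained by decomposing over the sign of $s_{i-1}\sigma_i$ and computing the conditional probabilities exactly from the without-replacement structure. It then uses a Stirling-type estimate to show $\mathbb{P}(s_{i-1}=0)\gtrsim 1/\sqrt{i}$ for even $i-1$, unrolls the recursion, and sums the resulting $\Theta(1/\sqrt{p})$ terms to get $\Theta(\sqrt{i})$. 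Your moment method is considerably shorter and avoids any factorial asymptotics, at the cost of invoking an external comparison theorem; the paper's proof is fully self-contained and elementary, and its combinatorial machinery is reused in the same lemma to obtain both the matching upper bound $\mathbb{E}[|s_i|]\leq\sqrt{i}$ and the tail estimate $\mathbb{P}(s_i>0)\geq 1/4$, which are needed elsewhere in the lower-bound argument. If you only need the informal statement as written, your route is the cleaner one; if you need the full formal version (Lemma~\ref{lem:expDev1}) with all three conclusions, the paper's recursion gives them in one pass.
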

\begin{flushleft}
Please see Lemma~\ref{lem:expDev1} in Appendix~\ref{app:low} for the formal version of this lemma.
\end{flushleft}

For the purpose of intuition, ignore the contribution of gradients from the quadratic terms. 
Then, the lemma above says that during an epoch, the gradients from the linear terms would move the iterates approximately $ \Omega\left(\alpha \sqrt{n}\frac{G}{2}\right)$ away from the minimizer (after we  multiply by the step size $\alpha $). 

This  implies that in the middle of an epoch, with (almost) probability $1/2$ the iterates would be near $x\approx-\Omega\left(\alpha \sqrt{n}\frac{G}{2}\right)$ and with (almost) probability $1/2$ the iterates would be near $x\approx\Omega\left(\alpha \sqrt{n}\frac{G}{2}\right)$. Hence, over the epoch, the accumulated quadratic gradients multiplied by the step size would look like 
\begin{align*}
\sum_{i=1}^n \mathbb{E}\left[-\alpha (L\mathds{1}_{x_i^j<0}+\mathds{1}_{x_i^j\geq0})x_i^j\right]&\approx-\alpha \sum_{i=1}^n \left( \dfrac{1}{2} L  \Omega\left(-\alpha \sqrt{n}\dfrac{G}{2}\right)+\dfrac{1}{2}   \Omega\left(\alpha \sqrt{n}\dfrac{G}{2}\right)\right)\\
&=\Omega(L \alpha^2n\sqrt{n}).
\end{align*}
If this happens for $K$ epochs, we get that the accumulated error would be $\Omega(L \alpha^2n\sqrt{n}K)=\Omega\left(\frac{1}{\sqrt{n}K}\right)$ for $\alpha \in \left[1/nK,1/n\right]$.
Since $E[|x_T|]\geq 1/\sqrt{n}K$, we know that $E[|x_T-0|^2]\geq 1/nK^2 = n/T^2$. 
Since $0$ is the minimizer of our function in this setting, we have constructed a case where SGDo achieves error 
$$E[|x_T-x^*|^2]\geq n/T^2.$$
This completes the sketch of the proof and the complete proof of Theorem~\ref{thm:lowerBound} is given in Appendix~\ref{app:low}.

\begin{figure}[h]
	\begin{center}
	\includegraphics[width=0.45\textwidth]{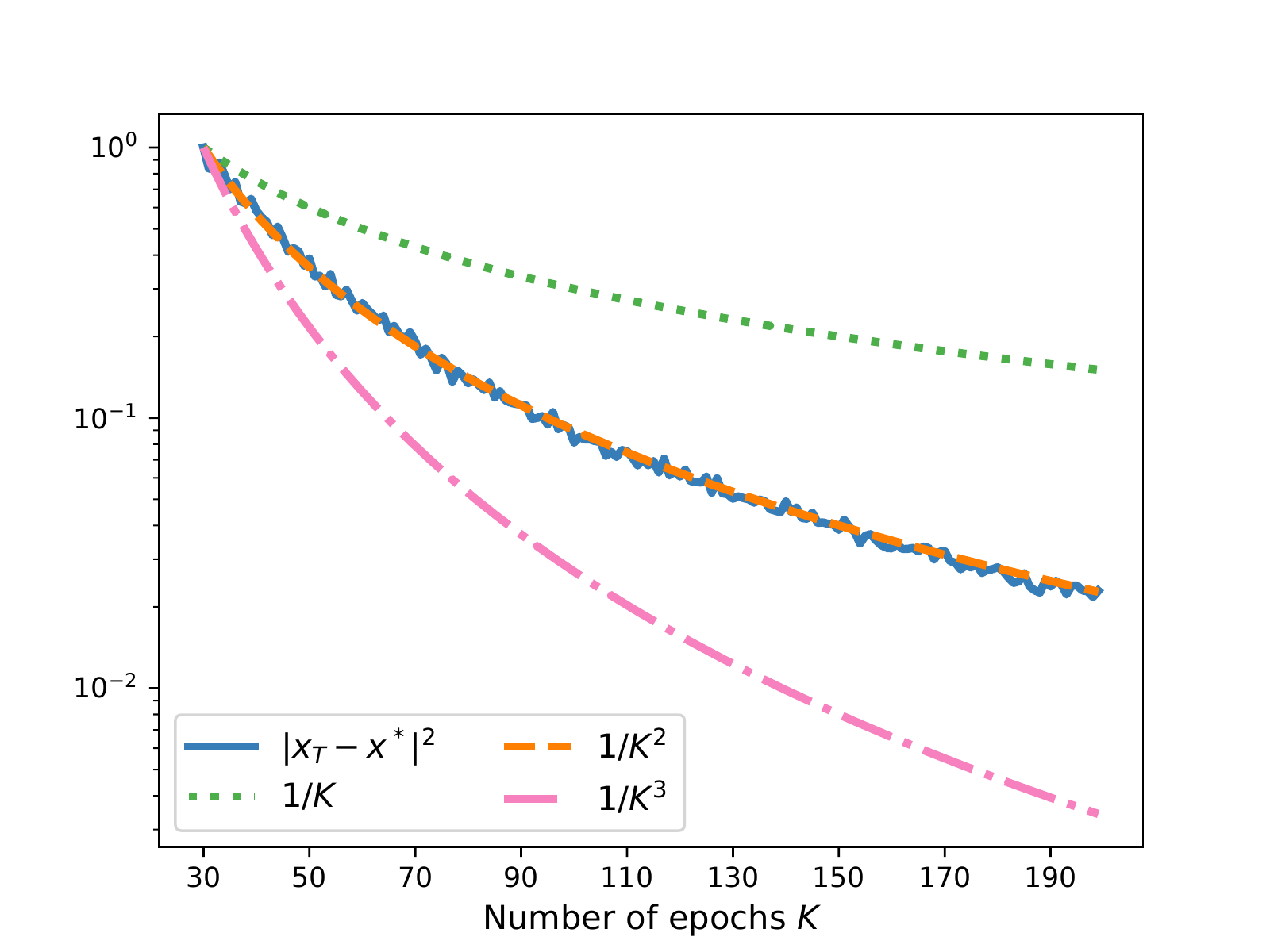}
	\includegraphics[width=0.45\textwidth]{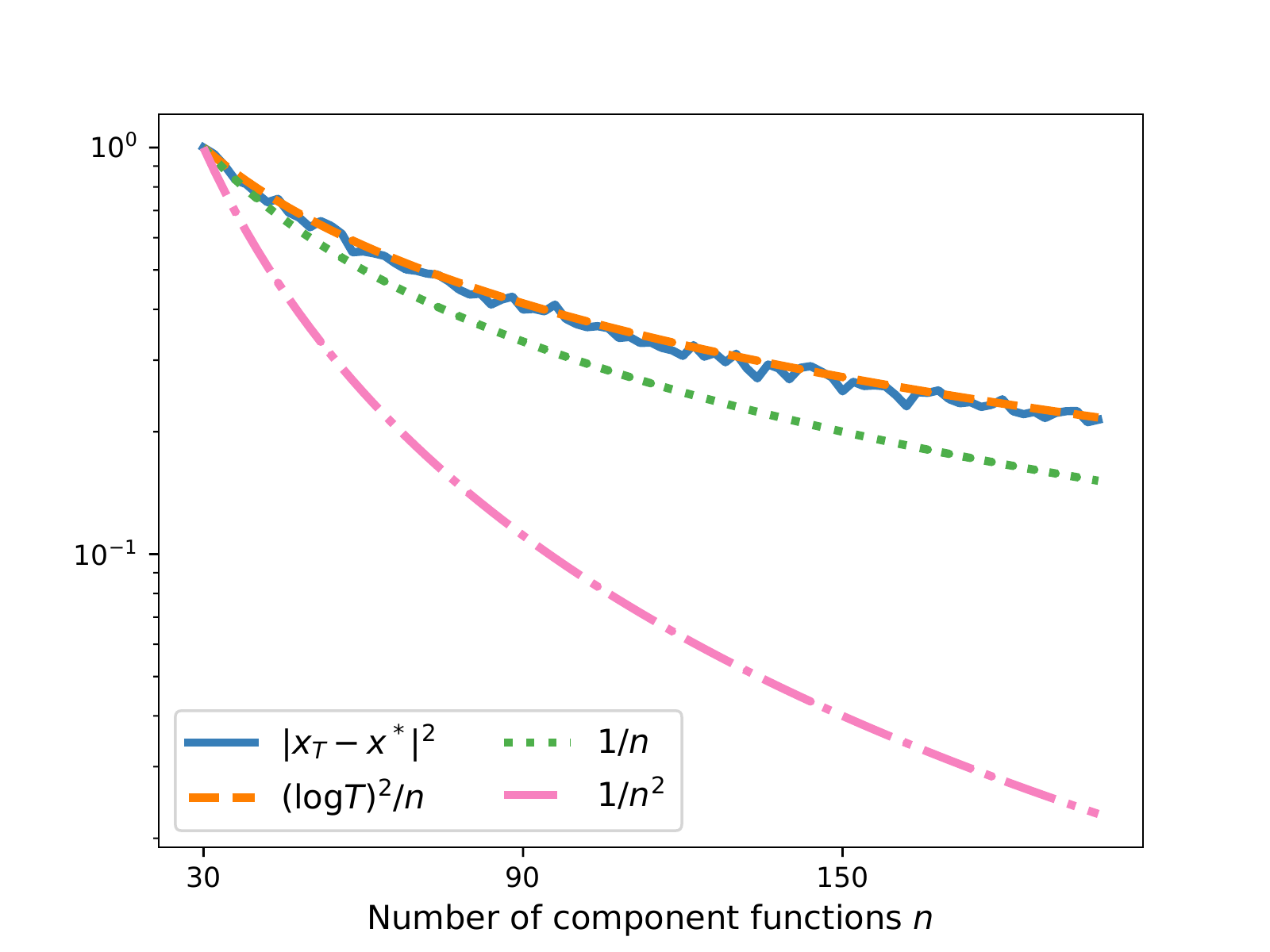}
	\caption{\small Running SGDo on the function $F$ used in our lower bound (Theorem~\ref{thm:lowerBound}) confirms that the rate of convergence of SGDo on this function is indeed $\Omega(\frac{1}{nK^2})=\Omega(\frac{n}{T^2})$. The curves are normalized so that they begin at the same point.}
    \label{fig:numericalVerification}
	\end{center}
\end{figure}

\subsection{Numerical verification}\label{sec:numVerificationnew}
To verify our lower bound of Theorem~\ref{thm:lowerBound}, we ran SGDo on the function described in Eq.~\eqref{eq:nonLipHessFunctionDefinition} with $L=4$. The step size regimes that were considered were $\alpha= \frac{1}{T},\frac{2\log T}{T},\frac{4\log T}{T},\frac{8\log T}{T},$ and $\frac{1}{n}$. The plot for $\alpha=\frac{4\log T}{T}$ is shown in Figure~\ref{fig:numericalVerification}. The plots for the other step size regimes are provided in Appendix~\ref{app:exp}. 

The step size regimes considered cover the range specified in the statement of Theorem~\ref{thm:lowerBound}. Looking at Figure~\ref{fig:numericalVerification} (and the figures in Appendix~\ref{app:exp} for the other step size regimes), the dependence of the convergence rate on $K$ indeed looks exactly like $1/K^2$. However, looking at the figures for the dependence of the convergence rate on $n$, we see that they look like $\frac{(\log T)^2}{n}$. This suggests that the tightest possible lower bound for SGDo with constant step size on strongly convex smooth functions might have a logarithmic term in the numerator. Next, we explain the details of the experiment.

Consider any one of the step size regimes specified above, say $\alpha=\frac{4\log T}{T}$. For this regime, we ran two experiments:
\begin{itemize}
    \item[1.] We fix $n=500$ and vary $K$ from $30$ to $200$, and 
    \item[2.] we fix $K=500$ and vary $n$ from $30$ to $200$.
\end{itemize}
Consider the first experiment, where $n=500$ and $K$ is varied. For each value of $K$, say $K=50$, we set $\alpha = \frac{4\log T}{T}= \frac{4\log (nK)}{nK} = \frac{4\log (500*50)}{500*50}$ and ran SGDo with this constant step size $\alpha$ on the sum of $n=500$ functions for $K=50$ epochs, and the final error was recorded. This was repeated 1000 times to reduce variance. The final mean error after these 1000 runs gave us one point, which we plotted for $K=50$ on the top subfigure of Figure~\ref{fig:numericalVerification}. Repeating the same for all values of $K$ from $30$ to $200$ gave us the top subfigure of Figure~\ref{fig:numericalVerification}. The same procedure was followed for the second experiment where we fix $K$ and vary $n$, and that gave us the bottom subfigure of Figure~\ref{fig:numericalVerification}. The optimization was initialized at the origin, that is $x_0^1=0$. These pairs of experiments were performed for all values of step size regimes in the list $(\textbf{}\frac{1}{T},\frac{2\log T}{T},\frac{4\log T}{T},\frac{8\log T}{T},$ and $\frac{1}{n})$.

Now, we justify the ranges of $n$ and $K$ considered in our experiments. We wanted to verify that the lower bound on the error of SGDo is indeed 
\begin{equation*}
    \Omega\left(\frac{n}{T^2}\right)=\Omega\left(\frac{1}{nK^2}\right) \tag*{[Theorem~\ref{thm:lowerBound}]}
\end{equation*}
 instead of the previously known best lower bound 
 \begin{equation*}
     \Omega\left(\frac{1}{T^2}+\frac{n^2}{T^3}\right)=\Omega\left(\frac{1}{nK^2}\left(\frac{1}{n}+\frac{1}{K}\right)\right).\tag*{[\citet{safran2019good}]}
 \end{equation*}
 
Looking at the RHS of the two equations above, we can see that the dependence of the two lower bounds on $K$ differs only when $n \gg K$ and the dependence on $n$ differs only when $K\gg n$. Thus for example, when we wanted to check dependence on $K$, we set $n=500$ which was bigger than every $K$ in the range $30$ to $200$.

The code for these experiments is available at \url{https://github.com/shashankrajput/SGDo}.

\subsection{Discussion on possible improvements}
Theorem \ref{thm:lowerBound} hints that for faster convergence rates in the epoch based random shuffling SGD, we would not just require smooth and strongly convex functions, but also potentially require that the Hessians of such functions to be Lipschitz. 

We conjecture that Hessian Lipschitzness is sufficient to get the convergence rate of Theorem \ref{thm:upperBound}. 
We think that this is interesting, because the optimal rates for both SGD with replacement and vanilla gradient descent only require strong convexity and gradient smoothness. 
However, here we prove that an optimal rate for SGDo requires the function to be quadratic as well (or at the very least have a Lipschitz Hessian), and SGDo seems to converge slower if the Hessian is not Lipschitz.
\balance
\section{Conclusions and Future Work}
SGD without replacement has long puzzled researchers. From a practical point of view, it always seems to outperform SGD with replacement, and is the algorithm of choice for training modern machine learning models.
From a theoretical point of view, SGDo has resisted tight convergence analysis that establish its performance benefits. 
A recent wave of work established that indeed SGDo can be faster than SGD with replacement sampling, however a gap still remained between the achievable rates and the best known lower bounds. 

In this paper we settle the optimal performance of SGD without replacement for functions that are quadratics, and strongly convex functions that are sums of $n$ smooth functions.
Our results indicate that a  possible  improvement in convergence rates may require a fundamentally different step size rule and significantly different function assumptions.

As future directions, we believe that it would be interesting to establish rates for variants of SGDo that do not  re-permute the functions at every epoch. This is something that is common in practice, where a random permutation is only performed once every few epochs without a significant drop in performance. 
Current theoretical bounds are inadequate to explain this phenomenon, and a new theoretical breakthrough may be required to tackle it. 

We however believe that one of the strongest new theoretical insights introduced by  \cite{jain2019sgd} and used in our analyses can be of significance in a potential  attempt  to  analyze  other variants of SGDo as the one above. This insight is that of iterate coupling. That is the property that SGDo iterates  are only mildly perturbed after swapping only two elements of a permutation. Such a property is   reminiscent to that of algorithmic  stability, and a deeper connection between that and iterate coupling is left as a meaningful intellectual endeavor for future work.
\section*{Acknowledgements}
We would like to thank the ICML reviewers for their constructive feedback in improving the structure of the Appendix. The authors also attribute the motivation for Corollary \ref{cor:lowerBoundExtension} (see the paragraph after Theorem \ref{thm:lowerBound}) to the comments of Reviewer \#3.

This research is supported by an NSF CAREER Award \#1844951, a Sony Faculty Innovation Award,
an AFOSR \& AFRL Center of Excellence Award FA9550-18-1-0166, and an NSF TRIPODS Award
\#1740707.

\bibliography{references}

\begin{thebibliography}{19}
\providecommand{\natexlab}[1]{#1}
\providecommand{\url}[1]{\texttt{#1}}
\expandafter\ifx\csname urlstyle\endcsname\relax
  \providecommand{\doi}[1]{doi: #1}\else
  \providecommand{\doi}{doi: \begingroup \urlstyle{rm}\Url}\fi

\bibitem[Bottou(2009)]{bottou2009curiously}
L{\'e}on Bottou.
\newblock Curiously fast convergence of some stochastic gradient descent
  algorithms.
\newblock In \emph{Proceedings of the symposium on learning and data science,
  Paris}, 2009.

\bibitem[Recht and R{\'e}(2013)]{recht2013parallel}
Benjamin Recht and Christopher R{\'e}.
\newblock Parallel stochastic gradient algorithms for large-scale matrix
  completion.
\newblock \emph{Mathematical Programming Computation}, 5\penalty0 (2):\penalty0
  201--226, 2013.

\bibitem[Recht and R{\'e}(2012)]{recht2012beneath}
Benjamin Recht and Christopher R{\'e}.
\newblock Beneath the valley of the noncommutative arithmetic-geometric mean
  inequality: conjectures, case-studies, and consequences.
\newblock \emph{arXiv preprint arXiv:1202.4184}, 2012.

\bibitem[Bubeck et~al.(2015)]{bubeck2015convex}
S{\'e}bastien Bubeck et~al.
\newblock Convex optimization: Algorithms and complexity.
\newblock \emph{Foundations and Trends{\textregistered} in Machine Learning},
  8\penalty0 (3-4):\penalty0 231--357, 2015.

\bibitem[Safran and Shamir(2019)]{safran2019good}
Itay Safran and Ohad Shamir.
\newblock How good is sgd with random shuffling?
\newblock \emph{arXiv preprint arXiv:1908.00045v3}, 2019.

\bibitem[HaoChen and Sra(2018)]{haochen2018random}
Jeffery~Z HaoChen and Suvrit Sra.
\newblock Random shuffling beats sgd after finite epochs.
\newblock \emph{arXiv preprint arXiv:1806.10077}, 2018.

\bibitem[Nagaraj et~al.(2019)Nagaraj, Jain, and Netrapalli]{jain2019sgd}
Dheeraj Nagaraj, Prateek Jain, and Praneeth Netrapalli.
\newblock Sgd without replacement: Sharper rates for general smooth convex
  functions.
\newblock In \emph{International Conference on Machine Learning}, pages
  4703--4711, 2019.

\bibitem[G{\"u}rb{\"u}zbalaban et~al.(2015)G{\"u}rb{\"u}zbalaban, Ozdaglar, and
  Parrilo]{gurbuzbalaban2015random}
Mert G{\"u}rb{\"u}zbalaban, Asu Ozdaglar, and PA~Parrilo.
\newblock Why random reshuffling beats stochastic gradient descent.
\newblock \emph{Mathematical Programming}, pages 1--36, 2015.

\bibitem[Shamir(2016)]{NIPS2016_6245}
Ohad Shamir.
\newblock Without-replacement sampling for stochastic gradient methods.
\newblock In D.~D. Lee, M.~Sugiyama, U.~V. Luxburg, I.~Guyon, and R.~Garnett,
  editors, \emph{Advances in Neural Information Processing Systems 29}, pages
  46--54. Curran Associates, Inc., 2016.

\bibitem[Lee and Wright(2019)]{lee2019random}
Ching-Pei Lee and Stephen~J Wright.
\newblock Random permutations fix a worst case for cyclic coordinate descent.
\newblock \emph{IMA Journal of Numerical Analysis}, 39\penalty0 (3):\penalty0
  1246--1275, 2019.

\bibitem[Wright and Lee(2017)]{wright2017analyzing}
Stephen~J Wright and Ching-pei Lee.
\newblock Analyzing random permutations for cyclic coordinate descent.
\newblock \emph{arXiv preprint arXiv:1706.00908}, 2017.

\bibitem[Sun and Ye(2019)]{sun2016worst}
Ruoyu Sun and Yinyu Ye.
\newblock Worst-case complexity of cyclic coordinate descent: $
  \mathcal{O}(n^2)$ gap with randomized version.
\newblock \emph{Mathematical Programming}, pages 1--34, 2019.

\bibitem[Gurbuzbalaban et~al.(2019{\natexlab{a}})Gurbuzbalaban, Ozdaglar,
  Vanli, and Wright]{gurbu2}
Mert Gurbuzbalaban, Asuman Ozdaglar, Nuri~Denizcan Vanli, and Stephen~J Wright.
\newblock Randomness and permutations in coordinate descent methods.
\newblock \emph{Mathematical Programming}, pages 1--28, 2019{\natexlab{a}}.

\bibitem[Sun et~al.(2019)Sun, Luo, and Ye]{sun2019efficiency}
Ruoyu Sun, Zhi-Quan Luo, and Yinyu Ye.
\newblock On the efficiency of random permutation for admm and coordinate
  descent.
\newblock \emph{Mathematics of Operations Research}, 2019.

\bibitem[Gurbuzbalaban et~al.(2019{\natexlab{b}})Gurbuzbalaban, Ozdaglar, and
  Parrilo]{gurbu}
M~Gurbuzbalaban, A~Ozdaglar, and PA~Parrilo.
\newblock Convergence rate of incremental gradient and incremental newton
  methods.
\newblock \emph{SIAM Journal on Optimization}, 29\penalty0 (4):\penalty0
  2542--2565, 2019{\natexlab{b}}.

\bibitem[Meng et~al.(2019)Meng, Chen, Wang, Ma, and Liu]{meng2019convergence}
Qi~Meng, Wei Chen, Yue Wang, Zhi-Ming Ma, and Tie-Yan Liu.
\newblock Convergence analysis of distributed stochastic gradient descent with
  shuffling.
\newblock \emph{Neurocomputing}, 337:\penalty0 46--57, 2019.

\bibitem[Ying et~al.(2018)Ying, Yuan, Vlaski, and Sayed]{ying2018stochastic}
Bicheng Ying, Kun Yuan, Stefan Vlaski, and Ali~H Sayed.
\newblock Stochastic learning under random reshuffling with constant
  step-sizes.
\newblock \emph{IEEE Transactions on Signal Processing}, 67\penalty0
  (2):\penalty0 474--489, 2018.

\bibitem[Nesterov(2013)]{nesterov2013introductory}
Yurii Nesterov.
\newblock \emph{Introductory lectures on convex optimization: A basic course},
  volume~87.
\newblock Springer Science \& Business Media, 2013.

\bibitem[Mortici(2011)]{cristinel11}
Cristinel Mortici.
\newblock On gospers formula for the gamma function.
\newblock \emph{Journal of Mathematical Inequalities}, 5, 12 2011.
\newblock \doi{10.7153/jmi-05-53}.

\end{thebibliography}
\bibliographystyle{unsrtnat}

\appendix

\onecolumn

\section{Proof of Theorem \ref{thm:upperBound}}\label{app:up}

\begin{formalTheorem}(Formal version)
Under Assumptions \ref{ass:convex}-\ref{ass:quad}, let the step size of SGDo be 
$$\alpha = \dfrac{4 l \log T}{T\mu}\text{, where } l\leq 2$$ and the number of epochs be $$K \geq 128\dfrac{L^2}{\mu^2}\log T.$$
Then after $T$ iterations SGDo,
\begin{equation*}
\mathbb{E}\left[\|x^{K}_n  - x^*\|^2\right]\leq  \frac{\|x^0_0 - x^*\|^2}{T^l} + \frac{2^{13} G^2L^2 \log^3T }{T^2\mu^4}+ \frac{2^{15} G^2L^2 n^2 \log^4 T }{T^3\mu^4}.
\end{equation*}
\end{formalTheorem}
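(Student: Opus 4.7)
The plan is to deduce Theorem~\ref{thm:upperBound} from the per-epoch recursion in Lemma~\ref{lem:upperBoundEpoch}; the authors' own sketch already shows how to do this. Unrolling that recursion for $K$ epochs, substituting $\alpha = 8\log T/(T\mu)$ (the $l=2$ case), and using $(1-x)\le e^{-x}$ together with $T=nK$ delivers exactly the claimed rate with the explicit constants $2^{13}$ and $2^{15}$. Concretely, the exponential factor $(1-n\alpha\mu/4)^{K+1}$ collapses the initial-distance contribution to $D^2/T^l$, and the two error terms $n\alpha^3$ and $n^3\alpha^4$ in the lemma produce, after multiplication by $K=T/n$, the $G^2L^2\log^3 T/(T^2\mu^4)$ and $n^2 G^2L^2\log^4 T/(T^3\mu^4)$ contributions respectively. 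So essentially all the real work sits in proving the per-epoch Lemma~\ref{lem:upperBoundEpoch}.

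\textbf{Per-epoch decomposition.} To prove the lemma, I would begin by exploiting the quadratic structure of $F$. Starting from the SGDo unrolling
\[
x_n^j - x^* \;=\; (x_0^j - x^*) - \alpha \sum_{i=0}^{n-1}\nabla f_{\sigma^j(i)}(x_i^j),
\]
I would add and subtract $\alpha \sum_{i=0}^{n-1}\nabla f_{\sigma^j(i)}(x_0^j) = \alpha n\nabla F(x_0^j)$ (using that $\sigma^j$ is a permutation of $\{1,\ldots,n\}$) and then use $\nabla F(x_0^j) = H(x_0^j - x^*)$ from Assumption~\ref{ass:quad}. This yields the clean one-epoch identity
\[
x_n^j - x^* \;=\; (I - \alpha n H)(x_0^j - x^*) - \alpha R_j, \qquad R_j:=\sum_{i=0}^{n-1}\bigl[\nabla f_{\sigma^j(i)}(x_i^j) - \nabla f_{\sigma^j(i)}(x_0^j)\bigr].
\]
Taking squared norms, applying Young's inequality with parameter $\eta\asymp n\alpha\mu$, and using $\mu I\preccurlyeq H\preccurlyeq LI$ (so $\|I-\alpha nH\|_2\le 1-n\alpha\mu$ whenever $\alpha nL\le 1$, a condition guaranteed by the step-size choice together with $K\ge 128L^2\log T/\mu^2$) gives the contractive factor $(1-n\alpha\mu/4)\|x_0^j-x^*\|^2$, leaving $\tfrac{C}{n\alpha\mu}\cdot \alpha^2 \mathbb{E}\|R_j\|^2$ as the residual noise to bound.

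\textbf{Controlling the drift; the main obstacle.} To bound $\mathbb{E}\|R_j\|^2$, I would use Lipschitz smoothness of each $f_i$ together with Lemma~\ref{lem:jain}:
\[
\mathbb{E}\|R_j\|^2 \;\le\; nL^2 \sum_{i=0}^{n-1}\mathbb{E}\|x_i^j-x_0^j\|^2 \;\le\; nL^2\sum_{i=0}^{n-1}\bigl[5i\alpha^2G^2 + 2i\alpha\,(F(x_0^j)-F(x^*))\bigr].
\]
The $5i\alpha^2 G^2$ portion integrates to $O(n^3\alpha^2 G^2 L^2)$, which after multiplication by the $\alpha^2$ weight produces exactly the $20 n^3\alpha^4 G^2 L^2$ term of the lemma. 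The $(F(x_0^j)-F(x^*))$ portion is the delicate one: using $L$-smoothness I would bound it by $\tfrac{L}{2}\|x_0^j-x^*\|^2$ and absorb the result into the contractive factor (the epoch bound $K\ge 128L^2\log T/\mu^2$ is precisely what makes this absorption preserve a clean $1-n\alpha\mu/4$ factor rather than degrade it). The main obstacle will be sharpening the remaining noise contribution all the way down to $16 n\alpha^3 G^2L^2/\mu$ rather than the $n^3\alpha^3$ that naive Cauchy-Schwarz produces: the explicit $1/\mu$ hints at a strong-convexity exchange, and the $n^1$ (rather than $n^3$) suggests exploiting cross-term cancellations in $\|R_j\|^2$ coming from the random-permutation structure of SGDo, in the spirit of Claim~\ref{cl:helper3}. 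Once this cancellation is made explicit---for instance by splitting $R_j$ into a with-replacement-like part (whose cross-terms vanish in expectation and so admit term-by-term variance control) plus a residual captured by the $2\alpha G^2$ iterate-coupling bound of Claim~\ref{cl:helper3}, and similarly tightening the sum $\sum_i i\alpha(F(x_0^j)-F(x^*))$ by using $\mu$-strong convexity to trade one factor of $\|x_0^j-x^*\|^2$ for $1/\mu$---the claimed one-epoch recursion should fall out with the stated constants.
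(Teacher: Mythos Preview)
Your high-level plan (unroll Lemma~\ref{lem:upperBoundEpoch} for $K$ epochs and substitute $\alpha$) is exactly what the paper does, and your arithmetic for the unrolling is fine. The gap is in your proposed proof of the per-epoch lemma itself.

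The trouble is the use of Young's inequality. After Young with $\eta\asymp n\alpha\mu$, the noise weight on $\mathbb{E}\|R_j\|^2$ is $\tfrac{C\alpha}{n\mu}$, \emph{not} $\alpha^2$; so your claim that the $5i\alpha^2G^2$ portion ``after multiplication by the $\alpha^2$ weight produces exactly the $20n^3\alpha^4G^2L^2$ term'' is internally inconsistent. With the correct weight you get $O\!\bigl(n^2\alpha^3G^2L^2/\mu\bigr)$, which is strictly worse than both residual terms in the lemma. More seriously, the $(F(x_0^j)-F(x^*))$ portion, after bounding by $\tfrac{L}{2}\|x_0^j-x^*\|^2$ and multiplying by $\tfrac{C\alpha}{n\mu}\cdot n^3L^2\alpha$, contributes $\tfrac{Cn^2L^3\alpha^2}{\mu}\|x_0^j-x^*\|^2$; absorbing this into $\tfrac{n\alpha\mu}{4}$ requires $\alpha\lesssim \mu^2/(nL^3)$, whereas the hypothesis only guarantees $\alpha\lesssim \mu/(nL^2)$. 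So the absorption you describe does not go through under the stated conditions.

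The paper avoids this by \emph{not} using Young: it expands $\|x_n-x^*\|^2$ directly, obtaining a $2\alpha^2\|R\|^2$ term (this is what gives the $20n^3\alpha^4$ contribution via Lemma~\ref{lem:normR}) and a cross term $-2\alpha\langle x_0-x^*, R\rangle$ that must be analyzed separately (Lemma~\ref{lem:innerProdR}). The quadratic assumption is used essentially only here, and in a stronger way than just $\nabla F(x_0)=H(x_0-x^*)$: one writes $R=A+B$ \`a la Hao\-Chen--Sra, and the constant-Hessian identity $\nabla F(x)-\nabla F(y)=H(x-y)$ together with the iterate-coupling bound (Claim~\ref{cl:helper2}) gives $\mathbb{E}[A]=(2n\alpha GL)u-\alpha\tfrac{n(n-1)}{2}H\nabla F(x_0)$. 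The dominant piece then satisfies $\langle x_0-x^*,\, -H\nabla F(x_0)\rangle=-\|\nabla F(x_0)\|^2\le 0$, which is \emph{helpful} rather than harmful; the small residual $(2n\alpha GL)u$ is what yields the $16n\alpha^3G^2L^2/\mu$ term. Your proposed fix---``cross-term cancellations in $\|R_j\|^2$''---is targeting the wrong object: the key is controlling the \emph{first moment} $\mathbb{E}[R]$ through the cross term, not tightening the second moment $\mathbb{E}\|R\|^2$. Once you replace Young by a direct expansion and invoke Lemma~\ref{lem:innerProdR} (or re-derive it via the $A+B$ split), the remainder of your outline is correct.
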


\begin{proof}
The proof for upper bound uses the framework of \citet{haochen2018random}, combined with some crucial ideas from \citet{jain2019sgd}.

In the block diagram below we connect the pieces needed to establish the proof. All lemmas and proofs follow.

\begin{figure}[H]
	\begin{center}
	\includegraphics[width=\textwidth]{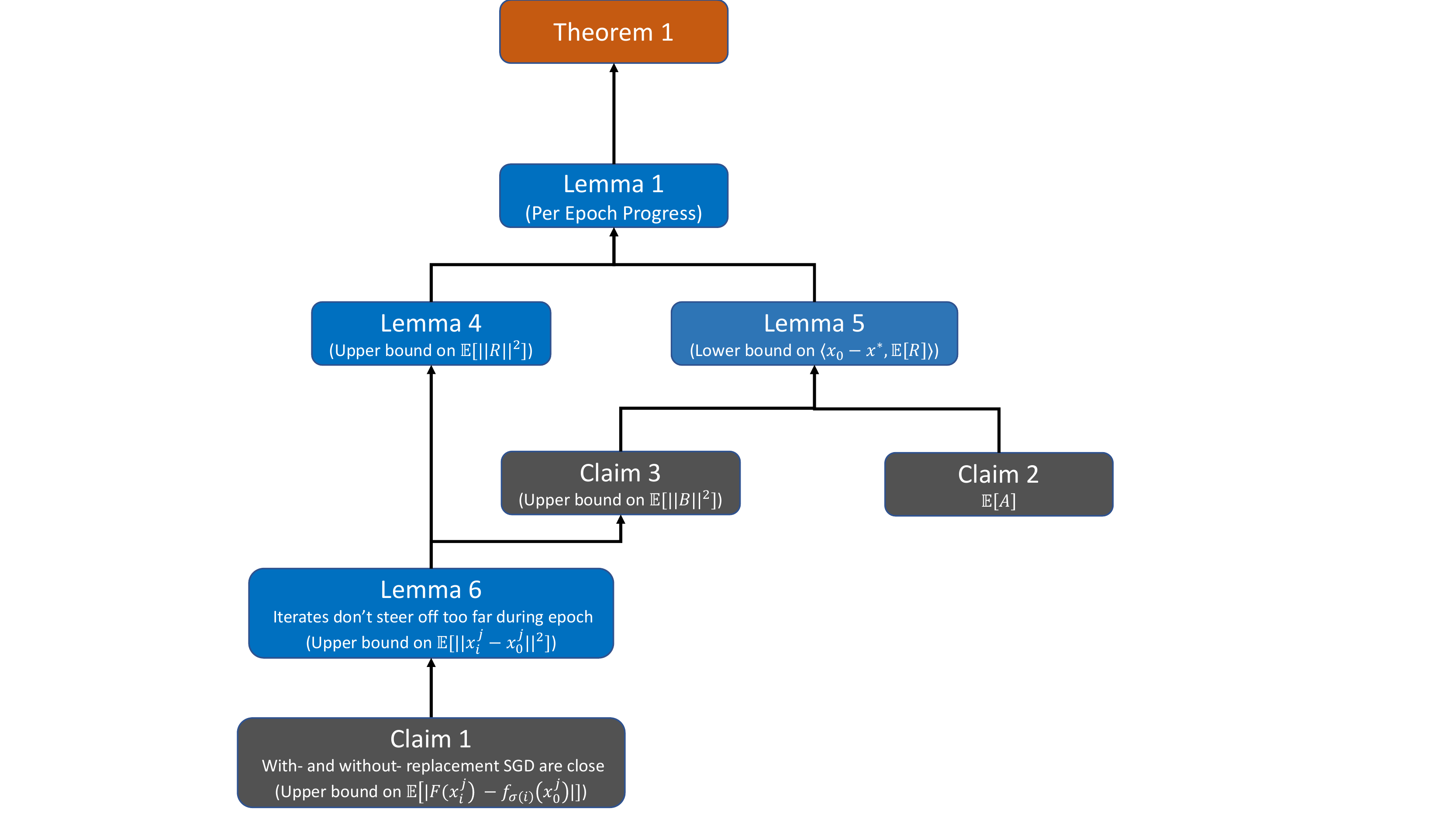}
	\caption{A dependency graph for the proof of Theorem \ref{thm:upperBound}, giving short descriptions of the components required.}
	\end{center}
\end{figure}

The proof strategy is to quantify the progress made during each epoch and then simply unrolling that for $K$ epochs. Towards that end, we have the following lemma
\lemupperBoundEpoch*

As mentioned before, we apply this lemma recursively to all epochs.
For ease of notation, define $C_1:=16 G^2L^2 \mu^{-1}$ and $C_2:=20  G^2 L^2$. 
Then,
\begin{align*}
\mathbb{E}\left[\|x^{K}_n  - x^*\|^2\right] \leq& \left(1- \frac{n\alpha \mu}{4}\right)\mathbb{E}\left[\|x^K_0 - x^*\|^2\right] + C_1 n\alpha^3+ C_2n^3 \alpha^{4}&\\
\leq & \left(1- \frac{n\alpha \mu}{4}\right)^2\mathbb{E}\left[\|x^{K-1}_0 - x^*\|^2\right]  + ( C_1 n\alpha^3+ C_2n^3 \alpha^{4})\left(1+\left(1- \frac{n\alpha \mu}{4}\right)\right)&\\
&\vdots&\\
\leq & \left(1- \frac{n\alpha \mu}{4}\right)^{K+1}\mathbb{E}\left[\|x^0_0 - x^*\|^2\right] + ( C_1 n\alpha^3+ C_2n^3 \alpha^{4})\sum_{j=1}^K \left(1- \frac{n\alpha \mu}{4}\right)^{j-1}&\\
= & \left(1- \frac{n\alpha \mu}{4}\right)^{K+1}\|x^0_0 - x^*\|^2 + ( C_1 n\alpha^3+ C_2n^3 \alpha^{4})\sum_{j=1}^K \left(1- \frac{n\alpha \mu}{4}\right)^{j-1}.&
\end{align*}

We can now use the fact that
$(1-x)\leq e^{-x}$ and
$\left(1-\frac{n\alpha\mu}{4}\right)\leq 1$,
to get the following bound:
\begin{align*}
&\mathbb{E}\left[\|x^{K}_n  - x^*\|^2\right] \leq e^{-\frac{n\alpha \mu}{4}K}\|x^0_0 - x^*\|^2+ (C_1 n\alpha^3+ C_2n^3 \alpha^{4})K.
\end{align*}

By setting the stepsize to be $\alpha=\frac{8\log T}{T \mu}$ and noting that $T=nK$, we get that
\begin{align*}
\mathbb{E}\left[\|x^{K}_n  - x^*\|^2\right] &\leq e^{-n\frac{4l\log T}{T \mu}\frac{\mu}{4}K}\|x^0_0 - x^*\|^2 + (n\alpha^3C_1+ \alpha^{4}n^{3} C_2)K&\\
&= e^{-l\log T}\|x^0_0 - x^*\|^2 + \frac{2^9 C_1 \log^3T }{T^2\mu^3}+ \frac{2^{12}C_2 n^2 \log^4 T }{T^3\mu^4}&.
\end{align*}

Substituting the values of $C_1$ and $C_2$ back into the inequality gives
\begin{equation*}
\mathbb{E}\left[\|x^{K}_n  - x^*\|^2\right]\leq  \frac{\|x^0_0 - x^*\|^2}{T^l} + \frac{2^{13} G^2L^2 \log^3T }{T^2\mu^4}+ \frac{2^{15} G^2L^2 n^2 \log^4 T }{T^3\mu^4}.
\end{equation*}

\end{proof}
\subsection{Proof of Lemma \ref{lem:upperBoundEpoch}}
\begin{proof}
Throughout this proof we will be working inside an epoch, so we skip using the super script $j$ in $x_i^j$ which denotes the $j$-th epoch. Thus in this proof, $x_0$ refers to the iterate at beginning of that epoch. Let $\sigma$ denote the permutation of $[n]$ used in this epoch. Therefore at the $i$-th iteration of this epoch, we take a descent step using the gradient of $f_{\sigma(i)}$.

Next we define the error term (same as the error term defined in \cite{haochen2018random})
\begin{equation}
R := \sum_{i=1}^n \left( \nabla f_{\sigma(i)}(x_{i-1})-  \nabla F(x_0)\right)  = \sum_{i=1}^n\left(\nabla f_{\sigma(i)}(x_{i-1}) - \nabla f_{\sigma(i)}(x_0)\right).
\end{equation}

Then, using the iterative relation for gradient descent, we get
\begin{align}
\|x_n-x^*\|^2 =& \left\|\left(x_0- \sum_{i=1}^n\alpha\nabla f_{\sigma(i)} (x_{i-1})\right)-x^*\right\|^2 \nonumber\\
=& \|x_0-x^*\|^2 - 2\alpha \left\langle x_0-x^*, \sum_{i=1}^n \nabla f_{\sigma(i)} (x_{i-1}) \right \rangle + \alpha^2 \left\|\sum_{i=1}^n \nabla f_{\sigma(i)} (x_{i-1})\right\|^2&\nonumber\\
=& \|x_0-x^*\|^2 - 2n\alpha  \left\langle x_0-x^*, \nabla F (x_0) \right \rangle - 2\alpha \left\langle x_0-x^*, R \right \rangle + \alpha^2 \|n \nabla F (x_0) + R\|^2&\nonumber\\
\stackrel{(a)}{\leq}& \|x_0-x^*\|^2 - 2n\alpha  \left\langle x_0-x^*, \nabla F (x_0) \right \rangle + 2\alpha^2 n^2\left\| \nabla F (x_0)\|^2 - 2\alpha \left\langle x_0-x^*, R \right \rangle + 2\alpha^2 \|R\right\|^2&\nonumber\\
\stackrel{(b)}{\leq}& \|x_0-x^*\|^2 - 2n\alpha  \left( \frac{\mu}{2}\|x_0-x^*\|^2 + \frac{1}{2L}\|\nabla F(x_0)\|^2\right)+ 2\alpha^2 n^2\| \nabla F (x_0)\|^2 + 2\alpha^2 \|R\|^2 &\nonumber\\
&- 2\alpha \left\langle x_0-x^*, R \right \rangle&\nonumber\\
=& \left(1-n\alpha \mu \right)\|x_0-x^*\|^2 - n\alpha  \left(\frac{1}{L}- 2n\alpha \right)\|\nabla F(x_0)\|^2  + 2\alpha^2 \|R\|^2- 2\alpha \left\langle x_0-x^*, R \right \rangle.\label{eq:expanded}
\end{align}
The inequality $(a)$ above comes from the fact that $\|a+b\|^2\leq 2\|a\|^2 + 2\|b\|^2$. For inequality $(b)$, we used the property of strong convexity given by Theorem \ref{thm:nesterov} below:
\begin{theorem}\label{thm:nesterov}[\citet[Theorem~2.1.11]{nesterov2013introductory}]
For an $L$-smooth and $\mu$-strongly convex function $F$,
\begin{equation*}
\langle \nabla F(x) - \nabla F(y), x - y\rangle \geq  \frac{\mu }{2}\|x - y\|^2 + \frac{1}{2 L}\|\nabla F(x) - \nabla F(y)\|^2
\end{equation*}
NOTE: We have slightly modified the original theorem by using the fact that $L\geq \mu$.
\end{theorem}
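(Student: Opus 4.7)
The plan is to obtain the stated inequality as a weakening of the sharp Nesterov coercivity
$$\langle \nabla F(x)-\nabla F(y), x-y\rangle \;\geq\; \frac{\mu L}{\mu+L}\|x-y\|^2 + \frac{1}{\mu+L}\|\nabla F(x)-\nabla F(y)\|^2.$$
Once the sharp form is in hand, the two coefficients satisfy $\tfrac{\mu L}{\mu+L}\geq \tfrac{\mu}{2}$ and $\tfrac{1}{\mu+L}\geq \tfrac{1}{2L}$ precisely when $L\geq\mu$, which is automatic since strong convexity plus Lipschitz smoothness force $\mu\leq L$ (indeed, $\mu I\preccurlyeq \nabla^2 F \preccurlyeq L I$ whenever $F$ is twice differentiable, and the argument carries over to the general case). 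So the job reduces to establishing the sharp form.

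The crucial reduction is to strip off the quadratic that carries the strong convexity. Set
$$\phi(x) \;:=\; F(x) - \tfrac{\mu}{2}\|x\|^2,$$
so that $\phi$ is convex (by strong convexity of $F$) and its gradient is $(L-\mu)$-Lipschitz (by smoothness of $F$ and linearity of $\nabla[\tfrac{\mu}{2}\|\cdot\|^2]$). For any convex and $M$-smooth function the classical co-coercivity inequality gives
$$\langle \nabla \phi(x)-\nabla\phi(y), x-y\rangle \;\geq\; \frac{1}{L-\mu}\|\nabla\phi(x)-\nabla\phi(y)\|^2.$$
Co-coercivity itself I would prove as a short lemma: for each fixed $x$ the function $\psi_x(z):=\phi(z)-\langle\nabla\phi(x),z\rangle$ is convex, $(L-\mu)$-smooth, and minimized at $x$ (its gradient vanishes there); applying the descent lemma at $y-\tfrac{1}{L-\mu}\nabla\psi_x(y)$ yields $\psi_x(x)\leq \psi_x(y) - \tfrac{1}{2(L-\mu)}\|\nabla\phi(x)-\nabla\phi(y)\|^2$, and symmetrizing in $x$ and $y$ (adding the analogous bound with the roles swapped) delivers the co-coercivity.

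To finish, substitute $\nabla\phi(z)=\nabla F(z)-\mu z$ into the co-coercivity bound. Writing $a:=x-y$ and $b:=\nabla F(x)-\nabla F(y)$, the left-hand side becomes $\langle b,a\rangle-\mu\|a\|^2$ and the right-hand side becomes $\tfrac{1}{L-\mu}\bigl(\|b\|^2 - 2\mu\langle a,b\rangle + \mu^2\|a\|^2\bigr)$. Clearing the $L-\mu$ denominator, collecting the $\langle a,b\rangle$ terms on one side and the $\|a\|^2$, $\|b\|^2$ terms on the other yields $(L+\mu)\langle a,b\rangle \geq \|b\|^2 + \mu L\|a\|^2$, which after dividing by $L+\mu$ is exactly the sharp Nesterov inequality. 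Weakening the coefficients via $L\geq\mu$ then produces the displayed bound.

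The main obstacle I anticipate is the bookkeeping in this final algebraic collapse: one must carefully track signs so that the $\mu^2\|a\|^2$ term on the right cancels against the $-(L-\mu)\mu\|a\|^2$ contribution from the left, leaving the clean coefficient $\mu L$ on the $\|a\|^2$ term, and similarly confirm that the coefficient of $\langle a,b\rangle$ collapses to $L+\mu$. The edge case $L=\mu$, where $\phi$ has zero Lipschitz constant and the co-coercivity denominator degenerates, is trivial and can be handled separately: then $\nabla F(x)-\nabla F(y)=\mu(x-y)$, so both sides of the target inequality reduce to $\mu\|x-y\|^2$ and equality holds.
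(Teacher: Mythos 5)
Your proof is correct. The paper itself does not prove this statement --- it is cited directly as Theorem~2.1.11 from Nesterov's \emph{Introductory Lectures on Convex Optimization}, with the paper only remarking that the displayed form is a weakening of Nesterov's sharp coercivity via $L\geq\mu$. What you have done is fill in the proof of the cited theorem, and your argument is essentially Nesterov's own: strip off the strong-convexity quadratic to reduce to the convex $(L-\mu)$-smooth function $\phi=F-\tfrac{\mu}{2}\|\cdot\|^2$, invoke the co-coercivity (Baillon--Haddad) inequality for $\phi$, substitute $\nabla\phi=\nabla F-\mu\,\mathrm{id}$ back in, and collect terms. The algebra checks out: after clearing the $(L-\mu)$ denominator the $\langle a,b\rangle$ coefficient collapses to $L+\mu$ and the $\|a\|^2$ coefficient to $\mu L$, giving the sharp form with coefficients $\tfrac{\mu L}{\mu+L}$ and $\tfrac{1}{\mu+L}$; the subsequent weakenings $\tfrac{\mu L}{\mu+L}\geq\tfrac{\mu}{2}$ and $\tfrac{1}{\mu+L}\geq\tfrac{1}{2L}$ are each equivalent to $L\geq\mu$, which is exactly the modification the paper flags in its ``NOTE.'' Your handling of the degenerate case $L=\mu$ (where $\phi$ is affine and both sides equal $\mu\|x-y\|^2$) is also correct and worth keeping, since the co-coercivity denominator $L-\mu$ would otherwise vanish.
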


Thus, to upper bound the expected value of $\|x_n-x^*\|^2$, Ineq.~\eqref{eq:expanded} says that we need to control the expected magnitude of the error term, which is $\mathbb{E}[2\alpha^2\|R\|^2]$; and its expected alignment with $(x_0-x^*)$, which is $\mathbb{E}[-2\alpha \left\langle x_0-x^*, R \right \rangle]$. To achieve that we introduce  the following two lemmas.
\begin{lemma}\label{lem:normR} 
If $\alpha \leq 1/2nL$, then the magnitude of the error is bounded above, in expectation:
\begin{equation*}
\mathbb{E}[\|R\|^2]\leq L^2n^2\|x_0-x^*\|^2+5n^3\alpha^2 L^2G^2.
\end{equation*}
\end{lemma}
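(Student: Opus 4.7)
The plan is to bound the drift term $R = \sum_{i=1}^n (\nabla f_{\sigma(i)}(x_{i-1}) - \nabla f_{\sigma(i)}(x_0))$ by converting gradient differences into iterate displacements via Lipschitz gradients (Assumption~\ref{ass:lipschitz}), and then invoking the in-epoch stability estimate of Lemma~\ref{lem:jain}. First I would apply Jensen's (equivalently Cauchy--Schwarz) to the $n$-term sum: $\|R\|^2 \le n\sum_{i=1}^n \|\nabla f_{\sigma(i)}(x_{i-1}) - \nabla f_{\sigma(i)}(x_0)\|^2$. Each summand is then at most $L^2\|x_{i-1}-x_0\|^2$ by the $L$-smoothness of $f_{\sigma(i)}$, so pointwise (before any expectation) $\|R\|^2 \le nL^2\sum_{i=1}^n \|x_{i-1}-x_0\|^2$.

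Next, taking the conditional expectation given $x_0$ and substituting the bound $\mathbb{E}[\|x_{i-1}-x_0\|^2\mid x_0] \le 5(i-1)\alpha^2 G^2 + 2(i-1)\alpha(F(x_0)-F(x^*))$ from Lemma~\ref{lem:jain}, and using $\sum_{i=1}^n (i-1) \le n^2/2$, yields
\begin{equation*}
\mathbb{E}\bigl[\|R\|^2\bigr] \;\le\; \tfrac{5}{2}\, n^3 \alpha^2 L^2 G^2 \;+\; n^3 \alpha L^2 \bigl(F(x_0)-F(x^*)\bigr).
\end{equation*}
To turn the function-value suboptimality into the $\|x_0-x^*\|^2$ appearing in the statement, I would use that $F$, being the average of $L$-smooth $f_i$, is itself $L$-smooth, so $F(x_0)-F(x^*)\le \tfrac{L}{2}\|x_0-x^*\|^2$ (using $\nabla F(x^*)=0$). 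This gives a coefficient of $\tfrac{1}{2} n^3 \alpha L^3 \|x_0-x^*\|^2$ on the distance term.

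Finally, the step-size assumption $\alpha \le 1/(2nL)$ is precisely what is needed to absorb the cubic factor: $\tfrac{1}{2} n^3 \alpha L^3 = \tfrac{1}{2} n^2 L^2 \cdot (n\alpha L) \le \tfrac{1}{4} n^2 L^2 \le n^2 L^2$. Combining with the other term yields the claimed bound $\mathbb{E}[\|R\|^2] \le L^2 n^2 \|x_0-x^*\|^2 + 5 n^3 \alpha^2 L^2 G^2$ (with the first constant having slack $5/2$ instead of $5$, hence easily absorbable).

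I do not anticipate a serious obstacle here, since Lemma~\ref{lem:jain} already does the heavy lifting of controlling in-epoch drift; the only delicate point is that smoothness is used \emph{twice} — once on each $f_i$ to pass from gradient differences inside $R$ to iterate displacements, and a second time on the aggregate $F$ to convert the residual $F(x_0)-F(x^*)$ term produced by Lemma~\ref{lem:jain} into $\|x_0-x^*\|^2$, which is the quantity needed to close the recursion in Lemma~\ref{lem:upperBoundEpoch}.
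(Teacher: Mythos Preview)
Your proposal is correct and follows essentially the same approach as the paper: Lipschitz gradients to convert $R$ into iterate displacements, then Lemma~\ref{lem:jain} (the paper's Lemma~\ref{lem:jainLem}) for the in-epoch drift, then smoothness of $F$ and the step-size bound $\alpha\le 1/(2nL)$ to clean up. The only cosmetic difference is that the paper bounds $\|R\|^2\le(\sum_i\|\cdot\|)^2$ via the triangle inequality and then applies Cauchy--Schwarz to each cross term $\mathbb{E}[\|x_{i-1}-x_0\|\|x_{j-1}-x_0\|]$, whereas you use $\|\sum_i a_i\|^2\le n\sum_i\|a_i\|^2$ directly; your route is slightly shorter and yields marginally better constants.
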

\begin{lemma}\label{lem:innerProdR} If $\alpha \leq 1/2nL$, then the error term's alignment with $x_0-x^*$ is bounded below, in expectation:
\begin{equation*}
\left\langle x_0 - x^*, \mathbb{E}[R] \right\rangle \geq - \frac{\alpha n^2 \|\nabla F\left(x_0\right)\|^2 }{2} - \left(\frac{n\mu}{4}+\frac{2\alpha^2n^3 L^4}{\mu}\right)\|x_0-x^*\|^2  - \frac{10\alpha^4L^4G^2n^4}{\mu} - \frac{8n\alpha^2 G^2L^2}{ \mu}.
\end{equation*}
\end{lemma}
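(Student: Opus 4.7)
The plan is to decompose $R$ into a Hessian-action term (which inherits the structure of full-batch gradient descent) plus a coupling-error term (to which the Jain-style argument applies), and then apply Young's inequality inside each piece to surface the required lower-order terms.

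Since $F$ is quadratic with Hessian $H$ and since $\sum_{i=1}^n \nabla f_{\sigma(i)}(x_0) = n\nabla F(x_0)$, I can rewrite $R = H\sum_{i=1}^n (x_{i-1}-x_0) + E$, where $E := \sum_{i=1}^n [\nabla f_{\sigma(i)}(x_{i-1}) - \nabla F(x_{i-1})]$. Setting $S := \sum_{i=1}^n \mathbb{E}[x_{i-1}-x_0]$ and using $\nabla F(x_0) = H(x_0-x^*)$, the inner product splits as $\langle x_0-x^*, \mathbb{E}[R]\rangle = \langle \nabla F(x_0), S\rangle + \langle x_0-x^*, \mathbb{E}[E]\rangle$. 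For the second piece I mimic Claim~\ref{cl:helper3}: the two-element swap coupling shows that two iterate trajectories conditioned on $\sigma(i)=s$ versus $\sigma(i)=s'$ differ by at most $2\alpha G$, which combined with $L$-smoothness of each $f_s$ yields the per-term bound $\|\mathbb{E}[\nabla f_{\sigma(i)}(x_{i-1}) - \nabla F(x_{i-1})]\|\leq 2\alpha GL$, and hence $\|\mathbb{E}[E]\|\leq 2n\alpha GL$. Young's inequality with parameter $n\mu/2$ then produces the $-\tfrac{n\mu}{4}\|x_0-x^*\|^2$ and $-\tfrac{8n\alpha^2 G^2 L^2}{\mu}$ terms in the lemma.

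For the main piece $\langle \nabla F(x_0), S\rangle$, Young's inequality with parameter $\alpha n^2$ gives $\langle \nabla F(x_0), S\rangle \geq -\tfrac{\alpha n^2}{2}\|\nabla F(x_0)\|^2 - \tfrac{\|S\|^2}{2\alpha n^2}$, and the first term already matches the $-\tfrac{\alpha n^2}{2}\|\nabla F(x_0)\|^2$ contribution stated in the lemma. The hard part is a tight bound on $\|S\|^2$. The naive route -- Jensen's inequality followed by Lemma~\ref{lem:jain} -- gives $\|S\|^2 \leq n\sum_i\mathbb{E}\|x_{i-1}-x_0\|^2 = O(n^3\alpha^2 G^2 + n^3\alpha L\|x_0-x^*\|^2)$, which when divided by $2\alpha n^2$ contributes $O(nL)\|x_0-x^*\|^2$; this is far too loose and would obliterate the contraction in Lemma~\ref{lem:upperBoundEpoch}. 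The main obstacle is therefore to sharpen this estimate by exploiting cancellation across the $n$ iterations.

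To do so I use $\mathbb{E}[x_{i-1}-x_0] = -\alpha\sum_{j<i}\mathbb{E}[\nabla f_{\sigma(j)}(x_{j-1})]$, then replace each $\mathbb{E}[\nabla f_{\sigma(j)}(x_{j-1})]$ by $\mathbb{E}[\nabla F(x_{j-1})]$ at a coupling cost of $O(\alpha GL)$ per term, and finally use quadraticity of $F$ to substitute $\mathbb{E}[\nabla F(x_{j-1})] = \nabla F(x_0) + H\mathbb{E}[x_{j-1}-x_0]$. Writing $u_i := \mathbb{E}[x_{i-1}-x_0]$, this yields a Gronwall-style recursion $u_i = -\alpha(i-1)\nabla F(x_0) - \alpha H\sum_{j<i}u_j + \eta_i$ with $\|\eta_i\| = O(i\alpha^2 GL)$. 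Under the hypothesis $\alpha \leq \tfrac{1}{2nL}$ (so that $\alpha n L$ is bounded), a straightforward induction solves this to $\|u_i\| = O(i\alpha\|\nabla F(x_0)\| + i^2\alpha^2 GL)$, whence $\|S\|^2 = O(n^4\alpha^2\|\nabla F(x_0)\|^2 + n^6\alpha^4 G^2L^2)$. Dividing by $2\alpha n^2$ and then converting $\|\nabla F(x_0)\|^2$ into $\|x_0-x^*\|^2$ via smoothness (together with one more Young step whose free parameter is chosen to produce the $1/\mu$ factor) yields the remaining $-\tfrac{2\alpha^2 n^3 L^4}{\mu}\|x_0-x^*\|^2$ and $-\tfrac{10\alpha^4 L^4 G^2 n^4}{\mu}$ terms. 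Combining the two Young expansions completes the proof. All of the technical difficulty is concentrated in the Gronwall bound on $\|u_i\|$ and in tracking absolute constants; the conceptual heart of the argument is that quadraticity of $F$ lets the Jain coupling error be reapplied at every step of a recursion whose leading term is exactly the full-batch-gradient-descent motion.
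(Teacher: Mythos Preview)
Your decomposition $R = H\sum_i(x_{i-1}-x_0)+E$ and the coupling bound $\|\mathbb{E}[E]\|\le 2n\alpha GL$ are both fine, and the Gronwall recursion for $u_i=\mathbb{E}[x_{i-1}-x_0]$ is set up correctly. The gap is in the last step. Because the leading term of $S=\sum_i u_i$ is precisely $-\alpha\tfrac{n(n-1)}{2}\nabla F(x_0)$, applying Young to $\langle\nabla F(x_0),S\rangle$ with parameter $\alpha n^2$ double-counts that piece: the residual $\tfrac{\|S\|^2}{2\alpha n^2}$ still contains a contribution $\Theta(\alpha n^2)\|\nabla F(x_0)\|^2$, which after $\|\nabla F(x_0)\|\le L\|x_0-x^*\|$ becomes $\Theta(\alpha n^2 L^2)\|x_0-x^*\|^2$. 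The lemma, however, demands $\tfrac{2\alpha^2 n^3 L^4}{\mu}\|x_0-x^*\|^2$; the two differ by a factor $\mu/(\alpha n L^2)$, which is \emph{not} bounded under the sole hypothesis $\alpha\le 1/(2nL)$. The same mismatch in the $\alpha$-power occurs in the $G^2$ term ($n^4\alpha^3 G^2L^2$ versus $\alpha^4 L^4 G^2 n^4/\mu$). No ``extra Young step'' can repair this, since Young's inequality acts on inner products and cannot manufacture an additional factor of $\alpha$ inside a squared norm.

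The paper avoids this by \emph{extracting the leading term exactly} before ever appealing to Young. Its $A$/$B$ split is designed so that $\mathbb{E}[A]$ equals $-\alpha\tfrac{n(n-1)}{2}H\nabla F(x_0)$ plus a coupling error of size $O(n\alpha GL)$; dotting with $x_0-x^*$ then produces $-\alpha\tfrac{n(n-1)}{2}\|\nabla F(x_0)\|^2$ as an equality, and only the genuinely smaller remainder $\mathbb{E}[B]$ (with $\|\mathbb{E}[B]\|^2\le n^4L^4\alpha^2\|x_0-x^*\|^2+5n^5L^4\alpha^4G^2$) is handled by Cauchy--Schwarz/AM--GM with $\lambda=\tfrac{n\mu}{4}$, which is what introduces the $1/\mu$ and the extra $\alpha$. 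In your framework the analogous fix is to write $\langle\nabla F(x_0),S\rangle = -\alpha\tfrac{n(n-1)}{2}\|\nabla F(x_0)\|^2 + \langle\nabla F(x_0),\tilde S\rangle$ with $\tilde S := S+\alpha\tfrac{n(n-1)}{2}\nabla F(x_0)$, bound $\|\tilde S\|$ via your recursion (it is one order of $\alpha nL$ smaller than $\|S\|$), and only then apply Young with a parameter proportional to $n\mu$. Once you do that, your route and the paper's become essentially the same argument.
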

For Lemma \ref{lem:normR} and Lemma \ref{lem:innerProdR}, we will show later (see Ineq.~\eqref{ineq:alphaNew}) that our choice of parameters ensure $\alpha \leq 1/2nL$.

Substituting the inequalities from these two lemmas into Ineq.~\eqref{eq:expanded}, we get that
{ 
\begin{align}
\mathbb{E}[\|x_n&-x^*\|^2] \nonumber\\
\leq& \left(1-n\alpha\mu \right)\|x_0-x^*\|^2 - n\alpha  \left(\frac{1}{2L}- 2n\alpha \right)\|\nabla F(x_0)\|^2  + 2\alpha^2 (L^2n^2\|x_0-x^*\|^2+5n^3\alpha^2 L^2G^2)&\nonumber\\
&- 2\alpha \left(- \frac{\alpha n^2 \|\nabla F\left(x_0\right)\|^2 }{2} - \left(\frac{n\mu}{4}+\frac{2\alpha^2n^3 L^4}{\mu}\right)\|x_0-x^*\|^2  - \frac{10\alpha^4L^4G^2n^4}{\mu} - \frac{8n\alpha^2 G^2L^2}{ \mu}\right)&\nonumber\\
=& \left(1-n\alpha\mu+2\alpha^2 L^2n^2 +\frac{\alpha\mu n}{2} + \frac{ 4n^3L^4\alpha^3 }{\mu}\right)\|x_0-x^*\|^2   &\nonumber\\
&- n\alpha  \left(\frac{1}{L}- 3n\alpha\right)\|\nabla F(x_0)\|^2+ 10 n^3\alpha^4 L^2G^2 +\frac{20  \alpha^5L^4G^2n^4}{\mu} + \frac{16 n\alpha^3 G^2L^2}{ \mu}\nonumber\\
=& \left(1-\frac{n\alpha\mu}{2}+2\alpha^2 L^2n^2  +4 \mu^{-1} n^3L^4\alpha^3 \right)\|x_0-x^*\|^2   &\nonumber\\
&- n\alpha  \left(\frac{1}{L}- 3n\alpha\right)\|\nabla F(x_0)\|^2+ 10 n^3\alpha^4 L^2G^2 +20  \mu^{-1}\alpha^5L^4G^2n^4 + 16 n\alpha^3 G^2L^2 \mu^{-1}.\label{eq:intermediate}
\end{align}}
We will prove the following inequalities shortly
\begin{align*}
 \frac{n\alpha\mu}{4}-2\alpha^2 L^2n^2 -4 \mu^{-1} n^3L^4\alpha^3 &\stackrel{(c)}{\geq}  0\\
\frac{1}{L}- 3\alpha n &\stackrel{(d)}{\geq}  0 \\
10 n^3\alpha^4 L^2G^2  &\stackrel{(e)}{\geq}  20  \mu^{-1}\alpha^5L^4G^2n^4.%
\end{align*}
Finally, using %
$(c)$, $(d)$ and $(e)$ in Ineq.~\eqref{eq:intermediate}, we get
\begin{equation*}
\mathbb{E}[\|x_n-x^*\|^2] \leq \left(1-\frac{n\alpha \mu}{4}\right)\|x_0-x^*\|^2 + 20 n^3\alpha^4 L^2G^2 + 16 n\alpha^3 G^2L^2 \mu^{-1}.
\end{equation*}
This completes the proof. The only thing left is to prove the inequalities $(c)$, $(d)$ and $(e)$, which we'll do next.

$(c)$ and $(d)$:
It can be shown that $2\alpha^2 L^2n^2 \geq 4 \mu^{-1} n^3L^4\alpha^3$ (See $(e)$ below). So to prove $(c)$, it is sufficient to show that
\begin{equation*}
\frac{n\alpha \mu}{4}\geq 4\alpha^2 L^2n^2.
\end{equation*}
Recall that $\alpha= \frac{4l\log T}{T\mu}\leq \frac{8\log T}{T\mu}$,  $T=nK$, and $K \geq 128\frac{L^2}{\mu^2}\log T$ . Then,
\begin{align}
\alpha &\leq \frac{8 \log T}{T\mu}&\nonumber\\
&= \frac{8 \log T}{nK\mu}&\nonumber\\
&\leq \frac{\mu^2 \log T}{16 n L^2 \mu \log T}&\nonumber\\
&= \frac{\mu }{16 n  L^2 }.\label{ineq:alphaNew2}&
\end{align}
This is equivalent to $\frac{n\alpha \mu}{4}\geq 4\alpha^2 L^2n^2$. Thus, we have proven $(c)$. To prove $(d)$, we continue on the the series of inequalities:
\begin{align}
\alpha &\leq \frac{\mu }{16 n  L^2 }\nonumber\\
&\leq \frac{ 1}{16 n L }&\nonumber\\
&\leq \frac{ 1}{3 n L }.\label{ineq:alphaNew}&
\end{align}
This proves $(d)$.

$(e)$:
\begin{align*}
\alpha&\leq  \frac{\mu}{16 L^2  n}&\tag*{[Using Ineq.~\eqref{ineq:alphaNew2}]}\\
&\leq \frac{\mu}{2 L^2  n}&\\
\implies 10 n^3\alpha^4 L^2G^2 &\geq 20  \mu^{-1}\alpha^5L^4G^2n^4.
\end{align*}
\end{proof}

\subsection{Proof of Lemma \ref{lem:normR}}\label{sec:proofLem7New}
\begin{proof}
Throughout this proof we will be working inside an epoch, so we skip using the super script $j$ in $x_i^j$ which denotes the $j$-th epoch. Thus in this proof, $x_0$ refers to the iterate at beginning of that epoch. Let $\sigma$ denote the permutation of $[n]$ used in this epoch. Therefore at the $i$-th iteration of this epoch, we take a descent step using the gradient of $f_{\sigma(i)}$.

\begin{align*}
\mathbb{E}\left[\|R\|^2\right]&=\mathbb{E}\left[\left\|\sum_{i=1}^n\left(\nabla f_{\sigma(i)}(x_{i-1}) - \nabla f_{\sigma(i)}(x_0)\right)\right\|^2\right]&\\
&\leq \mathbb{E}\left[\left(\sum_{i=1}^n\left\|\nabla f_{\sigma(i)}(x_{i-1}) - \nabla f_{\sigma(i)}(x_0)\right\|\right)^2\right].&\tag*{[Triangle inequality]}
\end{align*}

Now, if we can bound $\|x_{i-1} - x_0\|$, then we can also bound $\|\nabla f_{\sigma(i)}(x_{i-1}) - \nabla f_{\sigma(i)}(x_0)\|$ using gradient Lipschitzness. To bound $\|x_{i-1} - x_0\|$, we use Lemma~\ref{lem:jainLem} below which says that if an epoch starts close to the minimizer $x^*$, then during that epoch, the iterates do now wander off too far away from the beginning of that epoch.

\begin{lemma}\label{lem:jainLem}{\normalfont [\citet[Lemma~5]{jain2019sgd}, but proved slightly differently]}
If $\alpha \leq 2/L$ then,
\begin{equation*}
\mathbb{E}[\|x_i^j-x_0^j\|]^2\leq \mathbb{E}[\|x_i^j-x_0^j\|^2]\leq 5 i \alpha^2 G^2 +2i\alpha (F(x_0^j)-F(x^*))\leq 5i\alpha^2G^2 + i\alpha L \|x_0^j-x^*\|^2.
\end{equation*}
\end{lemma}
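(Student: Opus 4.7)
The lemma contains three inequalities, but only the middle one requires real work. The first, $\mathbb{E}[\|x_i^j-x_0^j\|]^2 \leq \mathbb{E}[\|x_i^j-x_0^j\|^2]$, is Jensen's inequality applied to the convex map $t \mapsto t^2$. The last, $2i\alpha(F(x_0^j)-F(x^*)) \leq i\alpha L \|x_0^j-x^*\|^2$, follows from $L$-smoothness of $F$ (implied by Assumption~\ref{ass:lipschitz}) combined with $\nabla F(x^*)=0$, which yields $F(x_0^j) - F(x^*) \leq \tfrac{L}{2}\|x_0^j-x^*\|^2$.

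For the middle inequality, my plan is to derive a one-step recursion for $\mathbb{E}[\|x_i^j - x_0^j\|^2 \mid x_0^j]$ and unroll it $i$ times. Expanding a single SGDo update gives
\[
\|x_i^j - x_0^j\|^2 = \|x_{i-1}^j - x_0^j\|^2 - 2\alpha \langle x_{i-1}^j - x_0^j,\, \nabla f_{\sigma^j(i)}(x_{i-1}^j)\rangle + \alpha^2 \|\nabla f_{\sigma^j(i)}(x_{i-1}^j)\|^2.
\]
I would apply convexity of $f_{\sigma^j(i)}$ to lower-bound the inner product by $f_{\sigma^j(i)}(x_{i-1}^j) - f_{\sigma^j(i)}(x_0^j)$, and Assumption~\ref{ass:boundedG} to bound the last term by $\alpha^2 G^2$. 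Taking conditional expectations on $x_0^j$, the term $\mathbb{E}[f_{\sigma^j(i)}(x_0^j)\mid x_0^j]$ equals $F(x_0^j)$ exactly since $\sigma^j(i)$ is marginally uniform on $[n]$, while the nontrivial term $\mathbb{E}[f_{\sigma^j(i)}(x_{i-1}^j)\mid x_0^j]$ is handled by a coupling argument analogous to Claim~\ref{cl:helper3} to yield $\mathbb{E}[f_{\sigma^j(i)}(x_{i-1}^j)\mid x_0^j] \geq \mathbb{E}[F(x_{i-1}^j)\mid x_0^j] - 2\alpha G^2$. Finally, convexity of $F$ gives $F(x_{i-1}^j) \geq F(x^*)$, so this term drops out up to $F(x^*)$. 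Collecting these bounds produces the one-step recursion
\[
\mathbb{E}[\|x_i^j - x_0^j\|^2 \mid x_0^j] \leq \mathbb{E}[\|x_{i-1}^j - x_0^j\|^2 \mid x_0^j] + 2\alpha (F(x_0^j) - F(x^*)) + 5\alpha^2 G^2,
\]
and iterating $i$ times from the base case $x_0^j - x_0^j = 0$ recovers the middle bound exactly.

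The main obstacle is the coupling step. Claim~\ref{cl:helper3} as stated concerns $\mathbb{E}[f_{\sigma^j(i)}(x_i^j)\mid x_0^j]$, where the function index and iterate index coincide; I instead need the ``off-by-one'' variant $\mathbb{E}[f_{\sigma^j(i)}(x_{i-1}^j)\mid x_0^j]$. I would adapt the coupling described in the sketch preceding Claim~\ref{cl:helper3}: for each $s$, couple $\sigma^j$ conditioned on $\sigma^j(i)=1$ with $\sigma^j$ conditioned on $\sigma^j(i)=s$ by swapping the entries $1$ and $s$ wherever they appear among positions $1,\ldots,i-1$. Since $x_{i-1}^j$ depends only on the prefix $\sigma^j(1),\ldots,\sigma^j(i-1)$ and at most one position in this prefix is altered by the swap, the two coupled iterates at time $i-1$ differ by at most $2\alpha G$ (the same perturbation argument invoked from Lemma~2 of Jain et al.); the $G$-Lipschitzness of each $f_s$ then furnishes the $2\alpha G^2$ slack. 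The condition $\alpha \leq 2/L$ enters to keep the per-step perturbation small enough that any smoothness-based slack from the coupling is controlled. Everything else reduces to routine summation over $i$.
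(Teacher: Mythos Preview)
Your proposal is correct and follows essentially the same approach as the paper's proof: expand one SGDo step, apply convexity of $f_{\sigma(i)}$ and the gradient bound, invoke the coupling estimate of Claim~\ref{cl:helper3} together with $F(x_{i-1}^j)\geq F(x^*)$, and unroll the resulting recursion. The ``off-by-one'' issue you flag is purely notational---the paper's proof of Lemma~\ref{lem:jainLem} uses the convention $x_{i+1}=x_i-\alpha\nabla f_{\sigma(i)}(x_i)$, so its $f_{\sigma(i)}(x_i)$ is exactly your $f_{\sigma(i)}(x_{i-1})$ after relabeling, and no genuine adaptation of Claim~\ref{cl:helper3} is required.
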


Thus, continuing the sequence of inequalities,
\begin{align}
\mathbb{E}\Bigg[\bigg(\sum_{i=1}^n\|\nabla f_{\sigma(i)}&(x_{i-1}) - \nabla f_{\sigma(i)}(x_0)\|\bigg)^2\Bigg]\leq L^2\mathbb{E}\left[\left(\sum_{i=1}^n\left\|x_{i-1} - x_0\right\|\right)^2\right]&\tag*{[Gradient Lipschitzness]}\nonumber\\
&= L^2\mathbb{E}\left[\sum_{i=1}^n\sum_{j=1}^n\left\|x_{i-1} - x_0\right\|\left\|x_{j-1} - x_0\right\|\right]&\nonumber\\
&= L^2\sum_{i=1}^n\sum_{j=1}^n\mathbb{E}\left[\left\|x_{i-1} - x_0\right\|\left\|x_{j-1} - x_0\right\|\right]&\nonumber\\
&\leq L^2\sum_{i=1}^n\sum_{j=1}^n\sqrt{\mathbb{E}\left[\left\|x_{i-1} - x_0\right\|^2\right]}\sqrt{\mathbb{E}\left[\left\|x_{j-1} - x_0\right\|^2\right]}&\tag*{[Cauchy-Scwartz inequality]}\nonumber\\
&\leq L^2n^2(2n\alpha L\|x_0-x^*\|^2 + 5n\alpha^2G^2)&\tag*{[Using Lemma~\ref{lem:jainLem}]}\nonumber\\
&\leq L^2n^2\|x_0-x^*\|^2 + 5n^3\alpha^2G^2L^2,\label{ineq:lemNormRnew238}%
\end{align}
where we used the assumption that $\alpha\leq 1/2nL$ in the last step.

\end{proof}

\subsection{Proof of Lemma \ref{lem:innerProdR}}
\begin{proof}
Throughout this proof we will be working inside an epoch, so we skip using the super script $j$ in $x_i^j$ which denotes the $j$-th epoch. Thus in this proof, $x_0$ refers to the iterate at beginning of that epoch. Let $\sigma$ denote the permutation of $[n]$ used in this epoch. Therefore at the $i$-th iteration of this epoch, we take a descent step using the gradient of $f_{\sigma(i)}$.
\begin{align}
R &= \sum_{i=1}^n \left[\nabla f_{\sigma(i)} (x_{i-1}) - \nabla f_{\sigma\left(i\right)}(x_{0})\right] \nonumber\\
&= \sum_{i=1}^n \left[\nabla f_{\sigma\left(i\right)} \left(x_0 -\alpha \sum_{j=1}^{i-1}\nabla f_{\sigma\left(j\right)} \left(x_{j-1}\right)\right) - \nabla f_{\sigma\left(i\right)}\left(x_{0}\right)\right] \tag*{[By the definition of SGDo iterations: $x_{i+1}=x_i - \alpha \nabla f_{\sigma(i)}(x_i)$]}\nonumber\\
&= \sum_{i=1}^n \left[\nabla f_{\sigma(i)} \left(x_0 -\alpha \sum_{j=1}^{i-1}\nabla f_{\sigma(j)} (x_{j-1})\right) - \nabla f_{\sigma(i)}(x_{0})\right.& \nonumber\\
&\quad + \left. \nabla f_{\sigma(i)} \left(x_0 -\alpha \sum_{j=1}^{i-1}\nabla f_{\sigma(j)} (x_{0})\right) - \nabla f_{\sigma(i)} \left(x_0 -\alpha \sum_{j=1}^{i-1}\nabla f_{\sigma(j)} (x_{0})\right)\right] \nonumber\\
&= \sum_{i=1}^n \left[    \nabla f_{\sigma\left(i\right)} \left(x_0 -\alpha \sum_{j=1}^{i-1}\nabla f_{\sigma\left(j\right)} \left(x_{0}\right)\right)         - \nabla f_{\sigma\left(i\right)}\left(x_{0}\right)\right]& \nonumber\\
&\quad  + \sum_{i=1}^n \left[\nabla f_{\sigma\left(i\right)} \left(x_0 -\alpha \sum_{j=1}^{i-1}\nabla f_{\sigma\left(j\right)} \left(x_{j-1}\right)\right)- \nabla f_{\sigma\left(i\right)} \left(x_0 -\alpha \sum_{j=1}^{i-1}\nabla f_{\sigma\left(j\right)} \left(x_{0}\right)\right)\right] \nonumber\\
&= A + B, \label{a18}
\end{align}
where 
\begin{align*}
A &:= \sum_{i=1}^n \left[    \nabla f_{\sigma\left(i\right)} \left(x_0 -\alpha \sum_{j=1}^{i-1}\nabla f_{\sigma\left(j\right)} \left(x_{0}\right)\right)         - \nabla f_{\sigma\left(i\right)}\left(x_{0}\right)\right], \\
\text{and}&\\
B &:= \sum_{i=1}^n \left[\nabla f_{\sigma\left(i\right)} \left(x_0 -\alpha \sum_{j=1}^{i-1}\nabla f_{\sigma\left(j\right)} \left(x_{j-1}\right)\right)- \nabla f_{\sigma\left(i\right)} \left(x_0 -\alpha \sum_{j=1}^{i-1}\nabla f_{\sigma\left(j\right)} \left(x_{0}\right)\right)\right].
\end{align*}

$A$ and $B$ are the same terms as the ones defined in \citet{haochen2018random}. The difference in our analysis and \citet{haochen2018random} is that we get tighter bounds on these terms.

In the following, we use $u$ to denote a random vector with norm less than or equal to 1. Also, assume that $H$ is the Hessian of the quadratic function $F$.

\begin{claim}\label{cl:A}
\begin{equation*}
\mathbb{E}\left[A\right]= (2n\alpha GL)u - \alpha \frac{n(n-1)}{2} H\nabla F\left(x_{0}\right).%
\end{equation*}
\end{claim}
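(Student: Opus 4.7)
The plan is to isolate the quadratic structure of $F$ from a stochastic correction that vanishes in expectation up to a small, coupling-controlled error. I would first rewrite
\begin{equation*}
A \;=\; \sum_{i=1}^n \nabla f_{\sigma(i)}(y_i) \;-\; \sum_{i=1}^n \nabla f_{\sigma(i)}(x_0),
\end{equation*}
where $y_i := x_0 - \alpha \sum_{j=1}^{i-1}\nabla f_{\sigma(j)}(x_0)$ is the auxiliary iterate that uses gradients always evaluated at $x_0$. Since $\sigma$ is a permutation of $[n]$, the second sum collapses to $\sum_{k=1}^n \nabla f_k(x_0) = n\nabla F(x_0)$, independent of $\sigma$.

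The main work is to evaluate $\sum_i \mathbb{E}[\nabla f_{\sigma(i)}(y_i)]$. If $\sigma(i)$ were independent of $y_i$ this would equal $\sum_i \mathbb{E}[\nabla F(y_i)]$; it is not, but the correlation is weak and can be quantified by the swap coupling already used earlier in the paper (Lemma~2 of \citet{jain2019sgd} and the sketch of Claim~\ref{cl:helper3}). Conditional on $\sigma(i)=k$, I would realize $\sigma$ by drawing a marginally uniform permutation $\tau$ and swapping $\tau(i)$ with the entry of $\tau$ that currently equals $k$. If that entry sits at a position $j'\ge i$, the swap leaves $y_i$ unchanged; if it sits at a position $j'<i$, it replaces the summand $\nabla f_k(x_0)$ in $y_i$ by $\nabla f_{\tau(i)}(x_0)$, perturbing $y_i$ by at most $2\alpha G$. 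In every case $\|y_i(\sigma)-y_i(\tau)\|\le 2\alpha G$, and Assumption~\ref{ass:lipschitz} then yields
\begin{equation*}
\bigl\|\mathbb{E}\!\left[\nabla f_k(y_i)\mid \sigma(i)=k\right] - \mathbb{E}_\tau\!\left[\nabla f_k(y_i(\tau))\right]\bigr\| \;\le\; 2\alpha L G.
\end{equation*}
Averaging over $k\in[n]$ and using $\tfrac{1}{n}\sum_k\nabla f_k \equiv \nabla F$ gives $\|\mathbb{E}[\nabla f_{\sigma(i)}(y_i)] - \mathbb{E}[\nabla F(y_i)]\|\le 2\alpha LG$, and summing over $i$ contributes a residual vector of norm at most $2n\alpha GL$, which is precisely the $(2n\alpha GL)u$ piece of the claim.

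What remains is to evaluate the deterministic main term $\sum_i \mathbb{E}[\nabla F(y_i)]$, and here Assumption~\ref{ass:quad} is crucial: because $F$ is quadratic with Hessian $H$,
\begin{equation*}
\nabla F(y_i) \;=\; \nabla F(x_0) + H(y_i-x_0) \;=\; \nabla F(x_0) - \alpha H\!\sum_{j=1}^{i-1}\nabla f_{\sigma(j)}(x_0).
\end{equation*}
Taking expectation and using the marginal identity $\mathbb{E}[\nabla f_{\sigma(j)}(x_0)] = \nabla F(x_0)$ (each $\sigma(j)$ is marginally uniform on $[n]$) gives $\sum_{i=1}^n \mathbb{E}[\nabla F(y_i)] = n\nabla F(x_0) - \alpha\bigl(\sum_{i=1}^n(i-1)\bigr)H\nabla F(x_0) = n\nabla F(x_0) - \alpha\tfrac{n(n-1)}{2}H\nabla F(x_0)$. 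Substituting back and canceling the $n\nabla F(x_0)$ against the second sum in the decomposition of $A$ yields exactly the identity in the claim.

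The main obstacle I anticipate is the bookkeeping of the swap step: one must verify that conditioning on $\sigma(i)=k$ really can be realized as a single swap away from a marginally uniform $\tau$, and that this swap, regardless of where $k$ sits in $\tau$, alters at most one summand of $y_i$, so that the uniform perturbation bound $\|y_i(\sigma)-y_i(\tau)\|\le 2\alpha G$ holds deterministically. This is essentially the content of Lemma~2 of \citet{jain2019sgd}, already invoked in the sketch of Claim~\ref{cl:helper3}, so I expect to cite it directly rather than re-derive the bound.
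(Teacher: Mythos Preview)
Your proposal is correct and takes essentially the same approach as the paper: the paper packages the swap-coupling step as a separate Claim~\ref{cl:helper2} (which replaces $\mathbb{E}[\nabla f_{\sigma(i)}(y_i)]$ by $\mathbb{E}[\nabla F(y_i)]$ up to a $(2\alpha GL)u$ error), and then invokes the quadratic identity $\nabla F(x)-\nabla F(y)=H(x-y)$ together with $\mathbb{E}[\nabla f_{\sigma(j)}(x_0)]=\nabla F(x_0)$ exactly as you do. The only cosmetic difference is that the paper couples each conditional $(\sigma\mid\sigma(i)=s)$ to the reference conditional $(\sigma\mid\sigma(i)=1)$ and then averages over $s$, whereas you couple directly to the unconditional uniform $\tau$; both yield the same $2\alpha GL$ per-$i$ bound.
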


\begin{claim}\label{cl:B}
If $\alpha \leq 1/2nL$, then
\begin{equation*}
\|\mathbb{E}[B]\|^2\leq n^4L^4\alpha^2\|{x_{0} - x^*}\|^2+5n^5 L^4\alpha^4 G^2.
\end{equation*}
\end{claim}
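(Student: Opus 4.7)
The plan is to bound $\|\mathbb{E}[B]\|^2$ by $\mathbb{E}[\|B\|^2]$ via Jensen's inequality, and then control $\mathbb{E}[\|B\|^2]$ through two nested applications of $L$-smoothness together with the iterate-stability result in Lemma~\ref{lem:jainLem}. Writing $y_i := x_0 - \alpha \sum_{j=1}^{i-1}\nabla f_{\sigma(j)}(x_{j-1})$ (which equals $x_{i-1}$) and $z_i := x_0 - \alpha \sum_{j=1}^{i-1}\nabla f_{\sigma(j)}(x_0)$, the quantity $B$ is exactly $\sum_{i=1}^n [\nabla f_{\sigma(i)}(y_i) - \nabla f_{\sigma(i)}(z_i)]$, so $L$-smoothness of each component turns the outer sum of gradient differences into a sum of the form $\sum_i L \|y_i - z_i\|$.

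The key step is to unwind $\|y_i - z_i\|$ one layer further. Since $y_i - z_i = -\alpha \sum_{j=1}^{i-1}[\nabla f_{\sigma(j)}(x_{j-1}) - \nabla f_{\sigma(j)}(x_0)]$, a second application of $L$-smoothness bounds $\|y_i-z_i\|$ by $\alpha L \sum_{j=1}^{i-1} \|x_{j-1} - x_0\|$. Applying the elementary inequality $(\sum a_k)^2 \leq (\mathrm{length}) \cdot \sum a_k^2$ at both layers yields $\|B\|^2 \leq n^3 L^4 \alpha^2 \sum_{j=1}^n \|x_{j-1} - x_0\|^2$, where the $n^3$ comes from combining the outer Cauchy--Schwarz factor of $n$, the inner factor of $(i-1) \leq n$, and the outer summation over $i$.

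Taking expectations and invoking the bound $\mathbb{E}[\|x_{j-1} - x_0\|^2] \leq 5(j-1)\alpha^2 G^2 + (j-1)\alpha L \|x_0 - x^*\|^2$ from Lemma~\ref{lem:jainLem} produces an overall estimate of $5 n^5 L^4 \alpha^4 G^2 + n^5 L^5 \alpha^3 \|x_0 - x^*\|^2$. The $\|x_0-x^*\|^2$ coefficient here is one factor of $nL\alpha$ larger than the target, which is precisely where the step-size hypothesis $\alpha \leq 1/(2nL)$ comes in: it forces $nL\alpha \leq 1/2$ and absorbs the extra factor to produce the claimed $n^4 L^4 \alpha^2$ coefficient. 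I expect the only real delicacy to be careful bookkeeping of at which layer to apply Cauchy--Schwarz so that the $n$-powers combine cleanly to $n^3$ rather than something larger; the step-size assumption is then essential at the very last step to hit the exponent on $n$ in the claim on the nose.
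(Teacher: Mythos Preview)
Your proposal is correct and follows essentially the same route as the paper: Jensen's inequality, two layers of $L$-smoothness/triangle inequality, and the iterate-stability bound from Lemma~\ref{lem:jainLem}, with the hypothesis $\alpha \leq 1/(2nL)$ absorbing one extra factor of $nL\alpha$. The only cosmetic difference is that the paper bounds $\|\mathbb{E}[B]\|^2 \le (\mathbb{E}\|B\|)^2$, extends the inner sum $\sum_{j=1}^{i-1}$ to $\sum_{j=1}^{n}$ so it becomes independent of $i$, and then reuses verbatim the inequality~\eqref{ineq:lemNormRnew238} already established in the proof of Lemma~\ref{lem:normR}; you instead bound $\|\mathbb{E}[B]\|^2 \le \mathbb{E}[\|B\|^2]$ and apply the Cauchy--Schwarz inequality $(\sum a_k)^2 \le m\sum a_k^2$ at both layers from scratch, arriving at the same $n^3 L^4\alpha^2$ prefactor and invoking $\alpha \leq 1/(2nL)$ explicitly at the end rather than inside the reused bound.
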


Using Eq.~\eqref{a18} and Claim \ref{cl:A}:
\begin{align}
 \left\langle x_0 - x^*, \mathbb{E}\left[R\right] \right\rangle &=  \left\langle x_0 - x^*, \mathbb{E}\left[A\right] + \mathbb{E}\left[B\right] \right\rangle\nonumber\\
&= \left\langle x_0 - x^*, \mathbb{E}\left[A\right] \right\rangle + \left\langle x_0 - x^*, \mathbb{E}\left[B\right] \right\rangle\nonumber\\
&= - \alpha \frac{n(n-1)}{2}  \left\langle x_0 - x^*, H \nabla F\left(x_0\right) \right\rangle + \langle (2n\alpha GL)u, x_0-x^*\rangle + \left\langle x_0 - x^*, \mathbb{E}\left[B\right] \right\rangle\nonumber\\
&= - \alpha \frac{n(n-1)}{2}  \|\nabla F(x_0)\|^2 + \langle (2n\alpha GL)u, x_0-x^*\rangle + \left\langle x_0 - x^*, \mathbb{E}\left[B\right] \right\rangle.
\label{a6}
\end{align}

For the middle term in Eq.~\eqref{a6}, we use Cauchy-Schwarz inequality and the AM-GM inequality as follows
\begin{align}
\langle (2n\alpha GL)u, x_0-x^*\rangle&\geq -\| (2n\alpha GL)u\|\| x_0-x^*\|\nonumber\\
&\geq -\left[\frac{\lambda}{2}\|x_0- x^*\|^2 + \frac{1}{2\lambda}(2n\alpha GL)^2\right], \label{a8a}
\end{align}
where $\lambda$ is a positive number.

We bound the last term in Eq.~\eqref{a6} similarly,
\begin{align}
\left\langle x_0 - x^*, \mathbb{E}\left[B\right]\right\rangle &\geq-\| x_0 - x^*\|\| \mathbb{E}\left[B\right]\| \nonumber\\
&\geq -\left[\frac{\lambda}{2}\|x_0- x^*\|^2 + \frac{1}{2\lambda}\|\mathbb{E}\left[B\right]\|^2\right]. \label{a8b}
\end{align}

Setting $\lambda = \frac{1}{4}\mu n$ and continuing on from Eq.~\eqref{a6}, we get %
\begin{align*}
\langle x_0 - x^*&, \mathbb{E}[R] \rangle = - \alpha \frac{n(n-1)}{2}  \|\nabla F(x_0)\|^2 + \langle (2n\alpha GL)u, x_0-x^*\rangle + \left\langle x_0 - x^*, \mathbb{E}\left[B\right] \right\rangle \\
&\geq - \frac{1}{2} \alpha n^2 \|\nabla F\left(x_0\right)\|^2 -\left[\frac{\lambda}{2}\|x_0- x^*\|^2 + \frac{1}{2\lambda}(2n\alpha GL)^2\right] -\left[\frac{\lambda}{2}\|x_0- x^*\|^2 + \frac{1}{2\lambda}\|\mathbb{E}\left[B\right]\|^2\right]\tag*{[Using Ineq.~\eqref{a8a} and \eqref{a8b}]}\\
&= - \frac{1}{2} \alpha n^2 \|\nabla F\left(x_0\right)\|^2 -\frac{1}{4}\mu n\|x_0- x^*\|^2 - \frac{2}{\mu n}(2n\alpha GL)^2 - \frac{2}{\mu n}\|\mathbb{E}\left[B\right]\|^2\tag*{[Using the value of $\lambda = \frac{1}{4}\mu n$]}\\
&\geq - \frac{\alpha n^2 \|\nabla F\left(x_0\right)\|^2}{2}  - \left(\frac{n \mu}{4}+2\alpha^2n^3\mu^{-1} L^4\right)\|x_0-x^*\|^2 - 10\mu^{-1}\alpha^4L^4G^2n^4 - 8 n\alpha^2 G^2L^2 \mu^{-1}.\tag*{[Using Claim~\ref{cl:B}]}
\end{align*}
\end{proof}

\subsubsection{Proof of Claim~\ref{cl:A}}
\begin{proof}
The proof strategy of this claim is to use the fact that the Hessian of a quadratic function is constant, and thus we can use it calculate exactly the gradient difference of a quadratic function.
In this proof, we will use $u$ to denote a random variable with norm at most 1.
\begin{align*}
\mathbb{E}\left[A\right] &=\mathbb{E}\left[\sum_{i=1}^n \left[    \nabla f_{\sigma\left(i\right)} \left(x_0 -\alpha \sum_{j=1}^{i-1}\nabla f_{\sigma_t\left(j\right)} \left(x_{0}\right)\right)         - \nabla f_{\sigma\left(i\right)}\left(x_{0}\right)\right]\right]&\\
&=\sum_{i=1}^n \mathbb{E}\left[    \nabla f_{\sigma\left(i\right)} \left(x_0 -\alpha \sum_{j=1}^{i-1}\nabla f_{\sigma\left(j\right)} \left(x_{0}\right)\right)         - \nabla f_{\sigma_t\left(i\right)}\left(x_{0}\right)\right]&\\
&\stackrel{(f)}{=}\sum_{i=1}^n \mathbb{E}\left[    \nabla F \left(x_0 -\alpha \sum_{j=1}^{i-1}\nabla f_{\sigma\left(j\right)} \left(x_{0}\right)\right)         + (2\alpha GL)u - \nabla F\left(x_{0}\right)\right]&\\
&= (2n\alpha GL)u + \sum_{i=1}^n \mathbb{E}\left[    \nabla F \left(x_0 -\alpha \sum_{j=1}^{i-1}\nabla f_{\sigma_t\left(j\right)} \left(x_{0}\right)\right)          - \nabla F\left(x_{0}\right)\right]&\\
&= (2n\alpha GL)u - \alpha \sum_{i=1}^n \mathbb{E}\left[    H \left( \sum_{j=1}^{i-1}\nabla f_{\sigma_t\left(j\right)} \left(x_{0}\right)\right)          \right]&\tag*{[For quadratics, $\nabla F(x)-\nabla F(y) = H(x-y)$]}\\
&= (2n\alpha GL)u - \alpha H   \sum_{i=1}^n  \left( \sum_{j=1}^{i-1}\mathbb{E}\left[ \nabla f_{\sigma_t\left(j\right)} \left(x_{0}\right)\right]\right)          &\\
&= (2n\alpha GL)u - \alpha \frac{n(n-1)}{2} H\nabla F\left(x_{0}\right).&
\end{align*}
We used Claim~\ref{cl:helper2} for $(f)$. This concludes the proof of Claim \ref{cl:A}, and the proof of Claim \ref{cl:helper2} is provided next.
\begin{claim}\label{cl:helper2}
\begin{equation*}
\mathbb{E}\left[\nabla f_{\sigma\left(i\right)} \left(x_0 -\alpha \sum_{j=1}^{i-1}\nabla f_{\sigma\left(j\right)} \left(x_{0}\right)\right)\right]=\mathbb{E}\left[\nabla F \left(x_0 -\alpha \sum_{j=1}^{i-1}\nabla f_{\sigma\left(j\right)} \left(x_{0}\right)\right)\right]+(2\alpha G L)u
\end{equation*}
\end{claim}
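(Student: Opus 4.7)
The approach is to follow the coupling argument sketched for Claim~\ref{cl:helper3}, adapted to gradient-valued quantities. Write $y_i := x_0 - \alpha\sum_{j=1}^{i-1}\nabla f_{\sigma(j)}(x_0)$; note that $y_i$ depends on the permutation $\sigma$ only through the labels $\sigma(1),\dots,\sigma(i-1)$. The content of the claim is that, even though $y_i$ is statistically correlated with $\sigma(i)$ (through the without-replacement constraint), the correlation is mild enough that the gradient sampled without replacement differs from the full gradient by only $O(\alpha GL)$ in expectation. This is a cleaner setting than Claim~\ref{cl:helper3} because all inner gradients are frozen at $x_0$, so the perturbation induced by a single label swap does not compound over the epoch.

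The plan is: first, decompose
$$\mathbb{E}[\nabla f_{\sigma(i)}(y_i)] \;=\; \frac{1}{n}\sum_{s=1}^n \mathbb{E}\bigl[\nabla f_s(y_i)\,\big|\,\sigma(i)=s\bigr].$$
For each $s\in[n]$, couple the conditional distribution $\sigma\mid\sigma(i)=s$ with $\sigma\mid\sigma(i)=1$ via the bijection that swaps the labels $1$ and $s$ throughout the permutation. If in the first permutation the label $1$ sits at some position $p$, then after the swap the label $s$ occupies position $p$. When $p\geq i$ the first $i-1$ labels are unchanged, so $y_i$ is unchanged; when $p<i$, $y_i$ shifts by $\alpha(\nabla f_1(x_0)-\nabla f_s(x_0))$, which has norm at most $2\alpha G$ by Assumption~\ref{ass:boundedG}. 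Applying the $L$-Lipschitz gradient bound (Assumption~\ref{ass:lipschitz}) to $f_s$, the two values of $\nabla f_s(y_i)$ under the coupling differ by at most $2\alpha GL$ in norm. Averaging over $s$ yields
$$\mathbb{E}[\nabla f_{\sigma(i)}(y_i)] \;=\; \mathbb{E}[\nabla F(y_i)\mid \sigma(i)=1] \;+\; (2\alpha GL)\,u_1,\qquad \|u_1\|\leq 1.$$

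The last step is to replace the conditional $\mathbb{E}[\nabla F(y_i)\mid\sigma(i)=1]$ by the unconditional $\mathbb{E}[\nabla F(y_i)]$. Re-running the same coupling with $\nabla F$ (which is itself $L$-Lipschitz as a convex combination of $L$-Lipschitz $\nabla f_k$'s) in place of $\nabla f_s$ shows that, for every $t$, $\mathbb{E}[\nabla F(y_i)\mid\sigma(i)=t]$ differs from $\mathbb{E}[\nabla F(y_i)\mid\sigma(i)=1]$ by at most $2\alpha GL$. Averaging over $t$ closes the gap between the conditional and unconditional expectations, yielding the desired bound $\mathbb{E}[\nabla f_{\sigma(i)}(y_i)] = \mathbb{E}[\nabla F(y_i)] + O(\alpha GL)\,u$.

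I expect the only mild obstacle to be bookkeeping the leading constant at the advertised value of $2$ rather than a larger numerical constant. The naive two-step coupling above costs a factor of roughly $2$, but a more careful coupling that directly pairs $\mathbb{E}[\nabla f_{\sigma(i)}(y_i)]$ against $\mathbb{E}[\nabla F(y_i)]$ (for example, by averaging the two sides over the same random swap) should recover the constant $2$ exactly. In any event the precise constant is immaterial for the downstream use in Claim~\ref{cl:A}, where the error is absorbed into the random unit-norm vector $u$ multiplying $2n\alpha GL$.
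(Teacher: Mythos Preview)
Your proposal is correct and follows essentially the same coupling argument as the paper: decompose over $\sigma(i)=s$, couple $(\sigma\mid\sigma(i)=s)$ with $(\sigma\mid\sigma(i)=1)$ by swapping labels, and use the $2\alpha G$ bound on the perturbation of $y_i$ together with $L$-Lipschitzness. The only difference is in the final ``decoupling'' step: where you run a second coupling (applying $\nabla F$'s Lipschitzness) to pass from $\mathbb{E}[\nabla F(y_i)\mid\sigma(i)=1]$ to $\mathbb{E}[\nabla F(y_i)]$ and thereby pick up an extra factor of~$2$, the paper instead observes that the first-step identity
\[
\mathbb{E}[\nabla f_{\sigma(i)}(y_i)] \;=\; \mathbb{E}[\nabla F(y_i)\mid\sigma(i)=s] + (2\alpha GL)u_s
\]
holds for \emph{every} base index $s$ (not just $s=1$), and then simply averages the right-hand side over $s$; this yields the unconditional expectation with the constant~$2$ intact. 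As you already note, the constant is immaterial downstream.
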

\end{proof}

\paragraph*{Proof of Claim \ref{cl:helper2}}
The proof for this claim uses the iterate coupling technique.
\begin{proof}
\begin{align*}
\mathbb{E}\left[\nabla f_{\sigma\left(i\right)} \left(x_0 -\alpha \sum_{j=1}^{i-1}\nabla f_{\sigma\left(j\right)} \left(x_{0}\right)\right)\right]&=\frac{1}{n}\sum_{s=1}^n\mathbb{E}\left[\nabla f_{\sigma\left(i\right)} \left(x_0 -\alpha \sum_{j=1}^{i-1}\nabla f_{\sigma\left(j\right)} \left(x_{0}\right)\right)\middle|\sigma(i)=s\right]&\tag*{[Since $\forall i,j: \mathbb{P}(\sigma(i)=j)=1/n$]}\\
&=\frac{1}{n}\sum_{s=1}^n\mathbb{E}\left[\nabla f_{s} \left(x_0 -\alpha \sum_{j=1}^{i-1}\nabla f_{\sigma(j)} \left(x_{0}\right)\right)\middle|\sigma(i)=s\right].
\end{align*}
Note that the distribution of $(\sigma|\sigma(i)=s)$ can be created from the distribution of $(\sigma|\sigma(i)=1)$ by taking all permutations from $\sigma|\sigma(i)=1$ and swapping 1 and $s$ in the permutations (this is essentially a coupling between the two distributions, same as the one in \cite{jain2019sgd}). 
This means that when we convert a permutation from $(\sigma|\sigma(i)=1)$ to $(\sigma|\sigma(i)=s)$ in this manner, the sum $\sum_{j=1}^{i-1}\nabla f_{\sigma(j)} \left(x_{0}\right)$ would have a change of at most one component $f_{\sigma(j)}$ before and after the swap, and furthermore the component will have a norm of at most $ G$. Thus because of the swap (adding a component and removing one), the norm changes by at most $2 G$. In the following, we use $u_{p}$, $v_{(p,q)}$, $w_{(p,q)}$ and $u$ to denote random vectors with norms at most 1. Hence, the sum $(\sum_{j=1}^{i-1}\nabla f_{\sigma(j)} \left(x_{0}\right)|\sigma(i)=s)$ is equal to $(2 G v_{(1,s)}+ \sum_{j=1}^{i-1}\nabla f_{\sigma(j)} \left(x_{0}\right) |\sigma(i)=1)$. Then, continuing on the sequence of equalities,
\begin{align*}
\mathbb{E}\left[\nabla f_{\sigma\left(i\right)} \left(x_0 -\alpha \sum_{j=1}^{i-1}\nabla f_{\sigma\left(j\right)} \left(x_{0}\right)\right)\right]&=\frac{1}{n}\sum_{s=1}^n\mathbb{E}\left[\nabla f_s \left(x_0 -\alpha \sum_{j=1}^{i-1}\nabla f_{\sigma(j)} \left(x_{0}\right)\right)|\sigma(i)=s\right]&\\
&=\frac{1}{n}\sum_{s=1}^n\mathbb{E}\left[\nabla f_s \left(x_0 -\alpha \sum_{j=1}^{i-1}\nabla f_{\sigma(j)} (x_{0}) + 2\alpha Gv_{(1,s)}\right)|\sigma(i)=1\right]&\\
&=\frac{1}{n}\sum_{s=1}^n\mathbb{E}\left[\nabla f_s \left(x_0 -\alpha \sum_{j=1}^{i-1}\nabla f_{\sigma(j)} \left(x_{0} \right)\right)+ (2\alpha GL)w_{(1,s)}|\sigma(i)=1\right]&\tag*{[Using gradient Lipschitzness]}\\
&=\mathbb{E}\left[\nabla F \left(x_0 -\alpha \sum_{j=1}^{i-1}\nabla f_{\sigma(j)} \left(x_{0} \right)\right)|\sigma(i)=1\right]+ (2\alpha GL)u_1.
\end{align*}
Similarly, for any $s$:
\begin{equation*}
\mathbb{E}\left[\nabla f_{\sigma\left(i\right)} \left(x_0 -\alpha \sum_{j=1}^{i-1}\nabla f_{\sigma\left(j\right)} \left(x_{0}\right)\right)\right]=\mathbb{E}\left[\nabla F \left(x_0 -\alpha \sum_{j=1}^{i-1}\nabla f_{\sigma(j)} \left(x_{0} \right)\right)|\sigma(i)=s\right]+ (2\alpha GL)u_s.
\end{equation*}
Hence,
\begin{align*}
\mathbb{E}\left[\nabla f_{\sigma\left(i\right)} \left(x_0 -\alpha \sum_{j=1}^{i-1}\nabla f_{\sigma\left(j\right)} \left(x_{0}\right)\right)\right]&=\frac{1}{n}\sum_{s=1}^n\left(\mathbb{E}\left[\nabla F \left(x_0 -\alpha \sum_{j=1}^{i-1}\nabla f_{\sigma(j)} \left(x_{0} \right)\right)|\sigma(i)=s\right]+ (2\alpha GL)u_s\right)&\\
&=\mathbb{E}\left[\nabla F \left(x_0 -\alpha \sum_{j=1}^{i-1}\nabla f_{\sigma(j)} \left(x_{0} \right)\right)\right]+ (2\alpha GL)u.
\end{align*}
\end{proof}

\subsubsection{Proof of Claim \ref{cl:B}}
\begin{proof}
\begin{align*}
\|\mathbb{E}[B]\|^2 &\leq (\mathbb{E}\left[\|{B}\|\right])^2 &\tag*{[Jensen's inequality]}\\
&=\left(\mathbb{E}\left[\left\|\sum_{i=1}^n \left[\nabla f_{\sigma\left(i\right)} \left(x_0 -\alpha \sum_{j=1}^{i-1}\nabla f_{\sigma\left(j\right)} \left(x_{j-1}\right)\right)- \nabla f_{\sigma\left(i\right)} \left(x_0 -\alpha \sum_{j=1}^{i-1}\nabla f_{\sigma\left(j\right)} \left(x_{0}\right)\right)\right]\right\|\right]\right)^2&\\
 &\leq \left(\mathbb{E}\left[\sum_{i=1}^n \left\|\nabla f_{\sigma\left(i\right)} \left(x_0 -\alpha \sum_{j=1}^{i-1}\nabla f_{\sigma\left(j\right)} \left(x_{j-1}\right)\right)- \nabla f_{\sigma\left(i\right)} \left(x_0 -\alpha \sum_{j=1}^{i-1}\nabla f_{\sigma\left(j\right)} \left(x_{0}\right)\right)\right\|\right]\right)^2\tag*{[Triangle inequality]}\\
 &\leq \left(\mathbb{E}\left[ \sum_{i=1}^n L\alpha \left\| \sum_{j=1}^{i-1} {\nabla f_{\sigma\left(j\right)} \left(x_{j-1}\right) - \nabla f_{\sigma\left(j\right)} \left(x_0\right)}\right\|\right]\right)^2\tag*{[Gradient Lipschtizness]}\\
 &\leq L^2\alpha^2\left(\mathbb{E}\left[ \sum_{i=1}^n  \sum_{j=1}^{i-1} \|{\nabla f_{\sigma\left(j\right)} \left(x_{j-1}\right) - \nabla f_{\sigma\left(j\right)} \left(x_0\right)}\|\right]\right)^2\tag*{[Triangle inquality]}\\
  &\leq L^2\alpha^2\left(\mathbb{E}\left[ \sum_{i=1}^n  \sum_{j=1}^{n} \|{\nabla f_{\sigma\left(j\right)} \left(x_{j-1}\right) - \nabla f_{\sigma\left(j\right)} \left(x_0\right)}\|\right]\right)^2\\
    &= n^2L^2\alpha^2\left(\mathbb{E}\left[  \sum_{j=1}^{n} \|\nabla f_{\sigma\left(j\right)} \left(x_{j-1}\right) - \nabla f_{\sigma\left(j\right)} \left(x_0\right)\|\right]\right)^2\tag*{[The inner sum is independent of $i$]}\\
    &\leq n^2L^2\alpha^2\mathbb{E}\left[ \left( \sum_{j=1}^{n} \|\nabla f_{\sigma\left(j\right)} \left(x_{j-1}\right) - \nabla f_{\sigma\left(j\right)} \left(x_0\right)\|\right)^2\right].\tag*{[Jensen's inequality]}
        \end{align*}

Now, we have already proved an upper bound on $\mathbb{E}\left[ \left( \sum_{j=1}^{n} \|\nabla f_{\sigma\left(j\right)} \left(x_{j-1}\right) - \nabla f_{\sigma\left(j\right)} \left(x_0\right)\|\right)^2\right]$ in the proof of Lemma \ref{lem:normR} in Subsection \ref{sec:proofLem7New} under exactly the same assumptions as the ones for this claim (See Ineq.~\eqref{ineq:lemNormRnew238} in that proof). In particular, there we have proved that 
  \begin{equation*}
  \mathbb{E}\left[ \left( \sum_{j=1}^{n} \|\nabla f_{\sigma\left(j\right)} \left(x_{j-1}\right) - \nabla f_{\sigma\left(j\right)} \left(x_0\right)\|\right)^2\right] \leq L^2n^2\|x_0-x^*\|^2 + 5n^3\alpha^2G^2L^2.
  \end{equation*}

  Substituting this inequality in the set of inequalities above gives us the result.

\end{proof}

\subsection{Proof of Lemma \ref{lem:jainLem}}
We will only prove $\mathbb{E}[\|x_i^j-x_0^j\|^2]\leq 5 i \alpha^2 +2i\alpha (F(x_0^j)-F(x^*))$, the rest follows from Jensen's inequality and gradient Lipschitzness.

We will use the following claim, which is just \citet[Lemma~4]{jain2019sgd} proved in a slightly different way: we skip the Wasserstein framework but use the same coupling.

\claimJainNew*

The rest of the proof is identical to the proof in \cite{jain2019sgd}. Because in this proof we work inside an epoch, so we skip the super script in the notation. 
\begin{align*}
\|x_{i+1}-x_0\|^2 &= \|x_{i}-x_0\|^2 -2\alpha\langle \nabla f_{\sigma(i)}(x_i), x_i-x_0 \rangle + \alpha^2\|\nabla f_{\sigma(i)}(x_i)\|^2&\\
&\leq \|x_{i}-x_0\|^2 -2\alpha\langle \nabla f_{\sigma(i)}(x_i), x_i-x_0 \rangle + \alpha^2G^2&\tag*{[Bounded gradients]}\\
&\leq \|x_{i}-x_0\|^2 +2\alpha(f_{\sigma(i)}(x_0)-f_{\sigma(i)}(x_j)) + \alpha^2G^2&\tag*{[Convexity of $f_{\sigma(i)}$]}
\end{align*}
Taking expectation both sides:
\begin{align*}
\mathbb{E}[\|x_{i+1}-x_0\|^2|x_0] &\leq \mathbb{E}[\|x_{i}-x_0\|^2|x_0] +2\alpha\mathbb{E}[f_{\sigma(i)}(x_0)-f_{\sigma(i)}(x_j) |x_0]+ \alpha^2G^2&\\
  &= \mathbb{E}[\|x_{i}-x_0\|^2|x_0] +2\alpha F(x_0) +2\alpha\mathbb{E}[-f_{\sigma(i)}(x_j) |x_0]+ \alpha^2G^2&\\
  &= \mathbb{E}[\|x_{i}-x_0\|^2|x_0] +2\alpha F(x_0) +2\alpha\mathbb{E}[F(x_j)-f_{\sigma(i)}(x_j) - F(x_j) |x_0]+ \alpha^2G^2&\\
  &\leq \mathbb{E}[\|x_{i}-x_0\|^2|x_0] +2\alpha F(x_0) +2\alpha\mathbb{E}[F(x_j)-f_{\sigma(i)}(x_j) - F(x^*) |x_0]+ \alpha^2G^2&\tag*{[Since $x^*$ is the minimizer of $F$]}\\
  &= \mathbb{E}[\|x_{i}-x_0\|^2|x_0] +2\alpha (F(x_0)-F(x^*)) +2\alpha\mathbb{E}[F(x_j)-f_{\sigma(i)}(x_j) |x_0]+ \alpha^2G^2&\\
  &\leq \mathbb{E}[\|x_{i}-x_0\|^2|x_0] +2\alpha (F(x_0)-F(x^*)) +2\alpha(2\alpha G^2)+ \alpha^2G^2&\tag*{[Using Claim \ref{cl:helper3}]}\\
   &= \mathbb{E}[\|x_{i}-x_0\|^2|x_0] +2\alpha (F(x_0)-F(x^*)) + 5\alpha^2G^2.
\end{align*}
Unrolling this for $i$ iterations gives us the required result.

\subsubsection{Proof of Claim~\ref{cl:helper3}}
The proof for this claim also uses the iterate coupling technique, similar to the proof of Claim \ref{cl:helper2}.

\begin{proof}
As written in the claim statement, we assume that the start of the epoch, $x_0^j$ is given.
In this proof, we work inside an epoch, so we skip the superscript $j$ in $x_i^j$. 
\begin{align*}
\mathbb{E}\left[ f_{\sigma\left(i\right)} \left(x_i \right)\right]&=\frac{1}{n}\sum_{s=1}^n\mathbb{E}\left[ f_{\sigma\left(i\right)} \left(x_i \right)|\sigma(i)=s\right]&\\
&=\frac{1}{n}\sum_{s=1}^n\mathbb{E}\left[ f_s \left(x_i \right)|\sigma(i)=s\right].
\end{align*}
Note that the distribution of $(\sigma|\sigma(i)=s)$ can be created from the distribution of $(\sigma|\sigma(i)=1)$ by taking all permutations from $(\sigma|\sigma(i)=1)$ and swapping 1 and $s$ in the permutations (this is essentially a coupling between the two distributions, same as the one in \cite{jain2019sgd}). This means that when we convert a permutation from the distribution $(\sigma|\sigma(i)=1)$ to a permutation from the distribution $(\sigma|\sigma(i)=s)$ in this manner, the corresponding $(x_i|\sigma(i)=1)$ and $(x_i|\sigma(i)=s)$ would be within a distance of $2\alpha G$. Here is why this is true: let $x'$ be an iterate reached using a permutation $\sigma'$ from the distribution $(\sigma|\sigma(i)=1)$. Now, create $\sigma''$ by swapping 1 and $s$ in $\sigma'$. Then $\sigma''(i)=s$ and hence it lies in the distribution $(\sigma|\sigma(i)=s)$. Let $x''$ be an iterate reached using $\sigma''$. Then can use Lemma 2 from \cite{jain2019sgd}, adapted to our setting:
\begin{lemma}\label{lem2jain2019new}
[\citet[Lemma~2]{jain2019sgd}]
Let $\alpha \leq 2/L$. Then almost surely, $\forall i \in [n]$,
\begin{equation*}
\|x'-x''\|\leq 2 G \alpha.
\end{equation*}

\end{lemma}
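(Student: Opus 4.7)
The plan is to exploit the coupling together with non-expansiveness of the gradient-step map. I would begin by pinning down exactly where $\sigma'$ and $\sigma''$ disagree. Since $\sigma''$ arises from $\sigma'$ by swapping the two labels $1$ and $s$, and $\sigma'(i) = 1$, there is a unique $k \neq i$ with $\sigma'(k) = s$. After the swap, $\sigma''(i) = s$, $\sigma''(k) = 1$, and $\sigma'(j) = \sigma''(j)$ for every other $j$.

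The critical observation, built into the iteration rule $x_{j+1} = x_j - \alpha \nabla f_{\sigma(j)}(x_j)$ used throughout the paper, is that $x_i$ depends only on $\sigma(0), \dots, \sigma(i-1)$; the value $\sigma(i)$ has not yet been consumed. Hence the swap at position $i$ itself does \emph{not} affect $x_i$, and only the second differing coordinate $k$ can produce a discrepancy, and then only when $k < i$. In the case $k \geq i$ I would conclude immediately that $x' = x''$, so the bound holds trivially.

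In the case $k < i$, I would first bound the divergence at the single step where the two runs part ways. For $j \le k$ the iterates coincide. At step $k$ one has $\sigma'(k) = s$ while $\sigma''(k) = 1$, so
\begin{equation*}
\|x'_{k+1} - x''_{k+1}\| \;=\; \alpha\,\|\nabla f_1(x_k) - \nabla f_s(x_k)\| \;\leq\; 2\alpha G
\end{equation*}
by Assumption~\ref{ass:boundedG}. From step $k+1$ onward, both runs apply the identical function $\sigma'(j) = \sigma''(j)$ to their respective iterates, so I would propagate this bound via non-expansiveness of the gradient-step map.

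The key external fact is the classical non-expansiveness lemma: for any convex, $L$-smooth $g$ and any step size $\alpha \in (0, 2/L]$, the map $T(y) := y - \alpha \nabla g(y)$ satisfies $\|T(y_1) - T(y_2)\| \leq \|y_1 - y_2\|$. This is an immediate consequence of co-coercivity of the gradient and is the only place the hypothesis $\alpha \leq 2/L$ enters the argument. Applying this contraction with $g = f_{\sigma'(j)}$ for $j = k+1, \dots, i-1$ gives $\|x'_{j+1} - x''_{j+1}\| \leq \|x'_j - x''_j\|$, and hence $\|x'_i - x''_i\| \leq \|x'_{k+1} - x''_{k+1}\| \leq 2\alpha G$, as claimed. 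The main obstacle is simply being careful with the indexing convention so that the swap at position $i$ contributes nothing to $x_i$ and only the paired swap at $k$ can matter; once this is in place, everything reduces to the one-step bounded-gradient bound plus repeated application of non-expansiveness.
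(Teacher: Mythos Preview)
Your proposal is correct. The paper does not give its own proof of this lemma but simply cites it from \cite{jain2019sgd}; your argument is exactly the standard one from that reference---identify the single position $k$ where the coupled permutations differ prior to computing $x_i$, bound the one-step divergence there by $2\alpha G$ via Assumption~\ref{ass:boundedG}, and then propagate through the remaining identical steps using non-expansiveness of $y \mapsto y - \alpha \nabla g(y)$ for convex $L$-smooth $g$ when $\alpha \le 2/L$.
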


 In the following, we use $v_{(p,q)}$ to denote a random vector with norm less than or equal to 1; and $w_{(p,q)}$, $u_p$ and $u$ to denote a random scalar with absolute value less than or equal to 1.

Then using Lemma \ref{lem2jain2019new}, $(x_i|\sigma(i)=s)$ is equal to $(x_i+v_{(1,s)}|\sigma(i)=1)$ (Similar to what we did in the proof of Claim \ref{cl:helper2}.).
\begin{align*}
\mathbb{E}\left[ f_{\sigma\left(i\right)} \left(x_i \right)\right]&=\frac{1}{n}\sum_{s=1}^n\mathbb{E}\left[ f_s \left(x_i \right)|\sigma(i)=s\right]\\
&=\frac{1}{n}\sum_{s=1}^n\mathbb{E}\left[ f_s \left(x_i + (2\alpha G)v_{(1,s)}\right)|\sigma(i)=1\right]\\
&=\frac{1}{n}\sum_{s=1}^n\mathbb{E}\left[ f_s \left(x_i\right)+(2\alpha G^2)w_{(1,s)}|\sigma(i)=1\right]\\
&=\mathbb{E}\left[ F \left(x_i\right)|\sigma(i)=1\right]+(2\alpha G^2)u_1.
\end{align*}

Similarly, for any $s$:
\begin{equation*}
\mathbb{E}\left[ f_{\sigma\left(i\right)} \left(x_i \right)\right]=\mathbb{E}\left[ F \left(x_i\right)|\sigma(i)=s\right]+(2\alpha G^2)u_s.
\end{equation*}
Hence,
\begin{align*}
\mathbb{E}\left[ f_{\sigma\left(i\right)} \left(x_i \right)\right]&=\frac{1}{n}\sum_{s=1}^n\left(\mathbb{E}\left[ F \left(x_i\right)|\sigma(i)=s\right]+(2\alpha G^2)u_s\right)&\\
&=\mathbb{E}\left[ F \left(x_i\right)\right]+(2\alpha G^2)u.
\end{align*}
All the calculations in this proof assumed that the initial point of the epoch, $x_0^j$ is known. Thus, the equation above implies
\begin{equation*}
\left|\mathbb{E}\left[ F (x_i^j)-f_{\sigma(i)} (x_i^j )\middle|x_0^j\right]\right|\leq 2\alpha G^2.
\end{equation*}
\end{proof}

\newpage
\section{Proof of Theorem~\ref{thm:lowerBound}}\label{app:low}

\begin{formalTheorem}(Formal version)
There exists an initialization point $x_0^1$ and a $1$-strongly convex function $F$ that is the mean of $n$ smooth convex functions which have $L$-Lipschitz gradients ($L\geq 2^{17}$), such that if 
\begin{equation*}
\frac{1}{nK}\leq \alpha \leq \frac{2^{-14}}{nL}\text{, }K\geq 2^{14}L,
\end{equation*}
$n\geq 256$ and $n$ is a multiple of $4$, then,
\begin{equation*}
\mathbb{E}[\|x_T-x^*\|^2]\geq \frac{2^{-56} G^2 n}{T^2}. 
\end{equation*}
\end{formalTheorem}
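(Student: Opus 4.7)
The plan is to follow the informal sketch in the main text, starting from $x_0^1=0$, which is the global minimizer $x^*$ of the piecewise quadratic $F$ defined in Eq.~\eqref{eq:nonLipHessFunctionDefinition}. Assumption~\ref{ass:convex} is immediate, and Assumption~\ref{ass:stronglyConvex} with $\mu=1$ follows from the fact that $F(x)-\tfrac{1}{2}x^2$ is still convex. Assumption~\ref{ass:lipschitz} is handled on each half-line separately and in the straddle case by the short calculation already sketched in the excerpt. Assumptions~\ref{ass:boundedD} and~\ref{ass:boundedG} I would establish a posteriori by inducting over epochs: for $\alpha\le C/n$ and $K\ge 2^{14}L$ I show that $|x_i^j|$ stays in a bounded interval, so that $D$ and $G$ may be taken as absolute constants.

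The per-epoch analysis is the heart of the proof. Writing
\begin{equation*}
x_n^j - x_0^j = -\alpha\sum_{i=1}^{n}\nabla f_{\sigma^j(i)}(x_{i-1}^j),
\end{equation*}
I split each gradient into a sign piece $\pm G/2$ (from the linear term in $f_i$) and a Hessian piece equal to $x_{i-1}^j$ when $x_{i-1}^j\ge 0$ and $L\,x_{i-1}^j$ otherwise. Since the two classes of components are balanced, the sign pieces sum to zero across a complete epoch and contribute nothing to $x_n^j-x_0^j$, but they drive the intra-epoch displacement: controlling $|x_i^j|$ at size $O(\alpha)$ inductively, the Hessian contribution to $x_i^j-x_0^j$ is at most $O(i\alpha^2)$, so the leading order is
\begin{equation*}
x_i^j - x_0^j \;\approx\; -\,\tfrac{\alpha G}{2}\sum_{k=1}^{i}\sigma_k,
\end{equation*}
where $(\sigma_k)$ is the $\pm 1$ process generated by the random permutation. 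The formal version of Lemma~\ref{lem:dummylabelnew} (Lemma~\ref{lem:expDev1} in the appendix) then delivers $\mathbb{E}\bigl[|x_i^j-x_0^j|\bigr]\ge c\,\alpha G\sqrt{i}$ uniformly over a constant fraction of the epoch, and the sign of the deviation is close to a symmetric Bernoulli.

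I then extract the asymmetry created by the piecewise Hessian. Because the gradient magnitude on the negative half-line is $L$ times that on the positive half-line and the linear-piece-driven random walk puts the iterate on each half-line with comparable probability, the conditional expectation of the Hessian piece summed over one epoch satisfies
\begin{equation*}
\mathbb{E}\!\left[-\alpha\sum_{i=1}^{n}\bigl(L\mathds{1}_{x_{i-1}^j<0}+\mathds{1}_{x_{i-1}^j\ge 0}\bigr)x_{i-1}^j\;\middle|\;x_0^j\right] \;=\; \Omega\!\bigl(\alpha^2 LG\,n^{3/2}\bigr),
\end{equation*}
with a consistent sign pointing into the region of smaller curvature. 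Chaining this drift across $K$ consecutive epochs, and noting that the per-step contraction of $F$ itself is only of order $\alpha$, the drift dominates in the window $1/(nK)\le \alpha\le C/n$, yielding $\mathbb{E}[|x_T|]=\Omega(\alpha^2 LG\,n^{3/2}K)$. The worst case within the admissible range is $\alpha=1/(nK)$, which gives $\mathbb{E}[|x_T|]=\Omega\bigl(LG/(\sqrt{n}\,K)\bigr)$, and Jensen's inequality $\mathbb{E}[|x_T|^2]\ge (\mathbb{E}[|x_T|])^2$ combined with $x^*=0$ produces the target $\mathbb{E}[\|x_T-x^*\|^2]=\Omega(n/T^2)$.

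I expect the main obstacle to be rigorously justifying the heuristic that the sign of $x_{i-1}^j$ coincides with that of the pure linear-piece random walk $\sum_{k<i}\sigma_k$. The $O(i\alpha^2)$ Hessian perturbations are in principle correlated with the $\sigma_k$'s, and an adversarial alignment could flip the sign precisely when the random walk is small. I would control this with a coupling argument: on the good event that the partial sum has its typical magnitude $\Theta(\sqrt{i})$, the perturbation of scale $i\alpha$ is dominated (using the upper bound $\alpha\le 2^{-14}/(nL)$), so the sign is preserved and the $\Omega(\sqrt{i})$ deviation from Lemma~\ref{lem:expDev1} passes through. A secondary technical obstacle is ensuring that $|x_i^j|$ stays uniformly bounded across all $K$ epochs, so that the gradient bound $G$ remains absolute; I would close this loop with a one-step Lyapunov argument bounding $\mathbb{E}[|x_n^j|]$ in terms of $|x_0^j|$ plus the per-epoch drift, and checking that the stationary level stays within the inductive window under the step-size constraints of the theorem.
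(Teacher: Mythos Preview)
Your per-epoch analysis is essentially the same as the paper's: you correctly identify the decomposition into the linear-term random walk and the Hessian piece, and the role of Lemma~\ref{lem:expDev1}. The gap is in the cross-epoch chaining. The claimed bound $\mathbb{E}[|x_T|]=\Omega(\alpha^2 LG\,n^{3/2}K)$ cannot hold across the whole step-size window: the per-epoch drift of order $\alpha^2 LG n^{3/2}$ is established only under the hypothesis $|x_0^j|\le c\,\alpha G\sqrt{n}$ (this is exactly the restriction in the paper's Lemma~\ref{lem:lb1}), and once $|x_0^j|$ exceeds that level the argument no longer applies. Since $\alpha\ge 1/(nK)$ forces $L\alpha n K\ge L\gg 1$, the accumulated drift $\alpha^2 LG n^{3/2}K$ already exceeds the threshold $\alpha G\sqrt{n}$, so saturation occurs well before $K$ epochs. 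Evaluating your chained bound at the smallest admissible $\alpha$ therefore does not give a valid uniform lower bound; and the extra factor of $L$ in your final answer (versus the paper's $2^{-56}G^2 n/T^2$) is a symptom of this overshoot.

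What is actually needed, and what your proposal does not supply, is a \emph{persistence} argument: once $\mathbb{E}[|x_0^j|]$ reaches $\Theta(\alpha G\sqrt{n})$, it must be shown to remain at that level for all subsequent epochs. The paper does this by tracking the \emph{signed} expectation $\mathbb{E}[x_0^j]$ rather than $\mathbb{E}[|x_0^j|]$. A coupling with the $L=1$ (pure quadratic) process yields $\mathbb{E}[x_0^j]\ge 0$ for all $j$ (Lemma~\ref{lem:posExp}); combined with a two-case analysis---either a large fraction of the mass of $|x_0^j|$ already sits above the threshold, or we are back in the small-$|x_0^j|$ regime where $\mathbb{E}[x_0^{j+1}]$ strictly increases---one gets that $\mathbb{E}[|x_0^j|]$ can dip by at most a bounded factor before being pushed back up. This is the mechanism that replaces your naive chaining, and it is also why the final bound is $\Theta(\alpha G\sqrt{n})$ (the equilibrium level), not $\Theta(\alpha^2 LG n^{3/2}K)$. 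The obstacles you flag (sign alignment of $x_{i-1}^j$ with the random walk, and uniform boundedness of the iterates) are real but secondary; the paper handles the first by conditioning on $\sum_{p\le i}\sigma_p>0$ and controlling the Hessian contribution via Lemma~\ref{lem:jainLem}, and the second by observing that with $\alpha\le 1/L$ the iterates never cross the componentwise minimizers.
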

\textbf{Remarks: } 
\begin{enumerate}
\item Because $\mu=1$, the condition number is just $L$. Note that the lower bound provided above is independent of $L$.
\item The theorem and proof have not been optimized with respect to the dependence on universal constants and it can probably be much better. In particular, the experiments in Subsection \ref{sec:numVerificationnew} use the same construction of functions with much better values of constants.
\end{enumerate}

\begin{proof}

As with Theorem \ref{thm:upperBound}, we first start with a block diagram of the components required to establish the lower bound. 

\begin{figure}[H]
	\begin{center}
	\includegraphics[width=\textwidth]{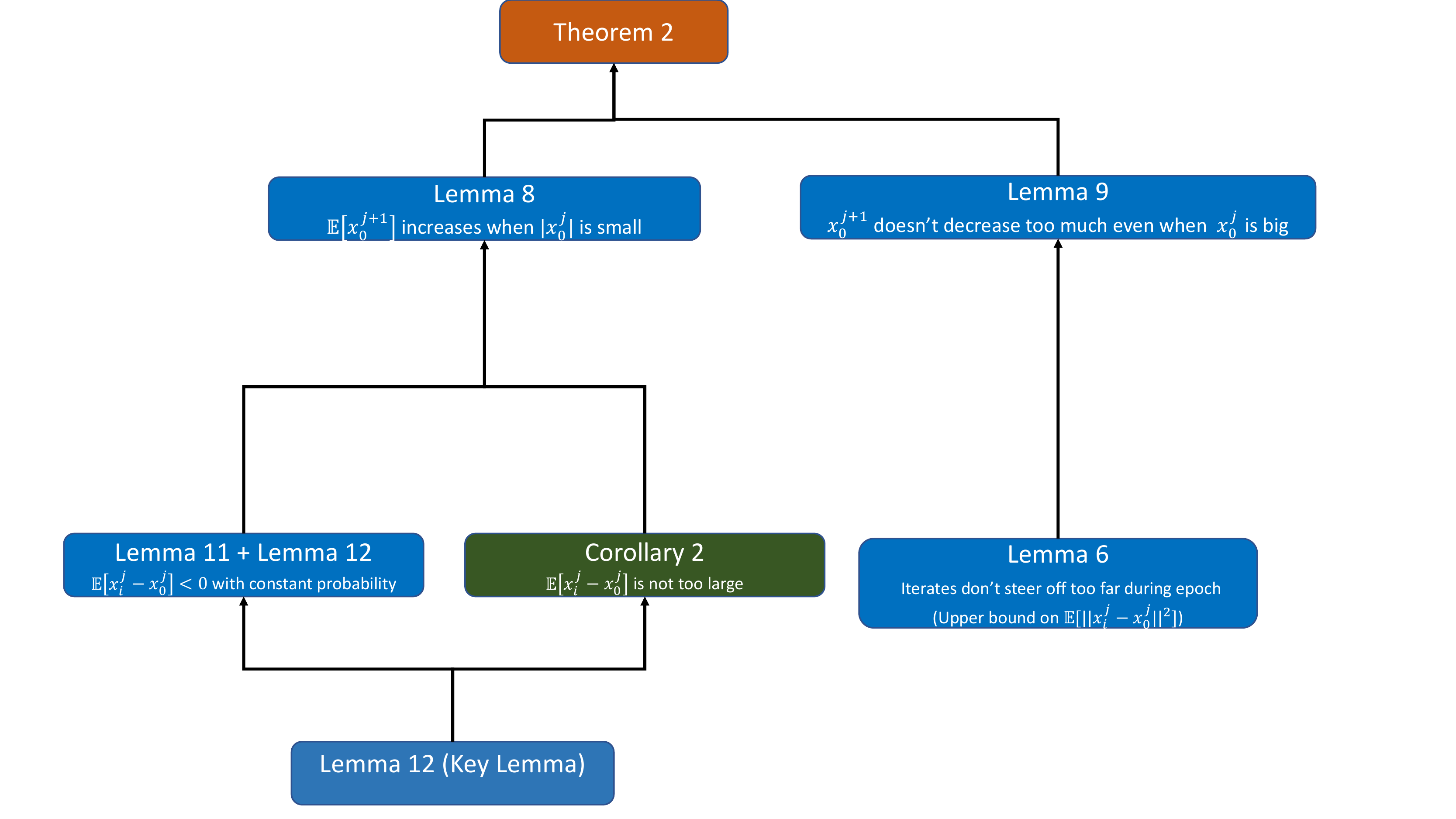}
	\caption{A dependency graph for the proof of Theorem \ref{thm:lowerBound}, giving short descriptions of the components for the proof of Theorem \ref{thm:lowerBound}.}
	\end{center}
\end{figure}

The function $F(x)=\frac{1}{n}\sum_{i=1}^{n}f_i(x)$ that we construct is of the form
\begin{align*}
F(x)=\begin{cases}
\frac{ x^2}{2}& \text{if }x\geq 0\\
\frac{ L x^2}{2}& \text{if }x < 0.
\end{cases}
\end{align*}
where $n\geq 8$ and it is a multiple of 4. 
Of the $n$ component functions $f_i$, half of them are defined as follows (we call these as functions of first kind):
\begin{equation*}
\text{if $i \leq \frac{n}{2}$, then }f_i(x)=
\left\{
\begin{array}{cl}
\dfrac{ x^2}{2}+\dfrac{Gx}{2}, &\text{ if }x\geq 0\\
\dfrac{ L x^2}{2}+\dfrac{Gx}{2},& \text{ if }x < 0,
\end{array}
\right.
\end{equation*}
and the other half of the functions are defined as follows (we call these as functions of second kind):
\begin{equation*}
\text{if $i > \frac{n}{2}$, then }f_i(x)=
\left\{
\begin{array}{cl}
\dfrac{ x^2}{2}-\dfrac{Gx}{2},& \text{ if }x\geq 0\\
\dfrac{ L x^2}{2}-\dfrac{Gx}{2},& \text{ if }x < 0.
\end{array}
\right.
\end{equation*}

Let $\sigma^j$ be the permutation of functions $f_i$'s that is used in the $j$-th epoch. Then, $\sigma^j$ can be represented by a permutation of the following multiset: $$\{\underbrace{+1,\dots ,+1}_{\frac{n}{2}\text{ times}},\underbrace{-1,\dots ,-1}_{\frac{n}{2}\text{ times}}\}$$ 
Accordingly, if $\sigma_i^j=+1$, we assume in the $i$-th iteration of the $j$-th epoch, a function of the 1st kind was sampled. Similarly if $\sigma_i^j=-1$, we assume in the $i$-th iteration of the $j$-th epoch, a function of the 2nd kind was sampled.

\begin{figure}[H]
	\begin{center}
	\includegraphics[width=0.7\textwidth]{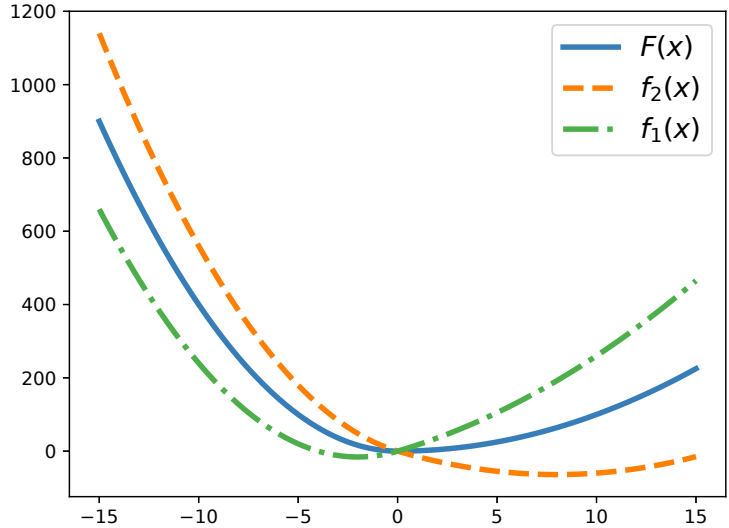}
	\caption{Lower bound construction. Note that $f_1(x)$ represents functions of the first kind, and $f_2(x)$ represents functions of the second kind, and $F(x)$ represents the overall function.}
	\end{center}
\end{figure}

Assumptions \ref{ass:convex}-\ref{ass:lipschitz}, except Assumptions \ref{ass:boundedD} and \ref{ass:boundedG} have been already proved for this function in the main text of this paper. Thus, we only prove \ref{ass:boundedD} and \ref{ass:boundedG} here.

We will initialize at $x_0^1=0$. Next we prove that Assumption \ref{ass:boundedG} is satisfied for our lowerbound construction. The minimizer of the functions of first kind is at $x=-G/2L$, and the minimizer of the functions of the second kind is at $x=G/2$. Between these two values, the norm of gradient of any of the two function kinds is always less than $G$. Thus, it is sufficient show that the iterates stay between the two minimizers. The step size we have chosen is small enough (smaller than $1/L$) to ensure that the iterates do not go outside the two minimizers. To see why this is so, consider the case when one is doing gradient descent on $g(x)=ax^2/2$. If the step length $\alpha <1/a$, then the next iterate $x_{t+1}=x_t-\alpha (ax_t)= x_t(1-\alpha a)$ and the current iterate $x_t$ lie on the same side of the minimizer (which is $x^*=0$), that is the iterates never `cross over' the minimizer. A similar logic here implies that iterates in our case stay within $[-G/2L,G/2]$. Thus gradient norm would never exceed $G$.

Next, we have the following key lemma of the proof:

\begin{lemma}\label{lem:lb1}
If $n\geq 256$ is a multiple of $4$, $L\geq 2^{17}$, $\alpha \leq \frac{2^{-11}}{nL}$ and $|x_0^j| \leq 2^{-9} G\alpha \sqrt{n}$, then 
\begin{equation*}
\mathbb{E}\left[x_0^{j+1}\right] \geq x_0^{j} + 2^{-12} L  G \alpha^2 n\sqrt{n}
\end{equation*}

\end{lemma}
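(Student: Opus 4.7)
\textbf{Proof proposal for Lemma~\ref{lem:lb1}.} The plan is to quantify the expected drift over a single epoch starting from $x_0^j$ with $|x_0^j|\le 2^{-9}G\alpha\sqrt n$ and show that it is at least $2^{-12}LG\alpha^2 n\sqrt n$. First, expand the epoch as
\[
x_n^j-x_0^j=-\alpha\sum_{i=1}^{n}\nabla f_{\sigma^j(i)}(x_{i-1}^j),
\]
and decompose each stochastic gradient into its linear piece ($\pm G/2$, with the sign determined by the kind of $\sigma^j(i)$) and its quadratic piece (equal to $x$ when $x\ge 0$ and $Lx$ when $x<0$). Because an epoch contains exactly $n/2$ functions of each kind, the linear pieces cancel deterministically. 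Writing the quadratic part as $x+(L-1)\mathds{1}_{x<0}\,x$ gives
\[
\mathbb{E}[x_n^j-x_0^j]=-\alpha\sum_{i=1}^{n}\mathbb{E}[x_{i-1}^j]\;-\;\alpha(L-1)\sum_{i=1}^{n}\mathbb{E}\!\left[\mathds{1}_{x_{i-1}^j<0}\,x_{i-1}^j\right].
\]
The second sum is non-positive, so $-\alpha(L-1)$ times it is the positive asymmetry we want to exploit; the first sum will turn out to be of lower order.

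Next, I would compare the real trajectory to its ``random-walk'' idealization. Let $S_i^j:=\sum_{k=1}^{i}\sigma_k^j$ where $\sigma_k^j\in\{\pm1\}$ records the kind of the $k$-th sampled function, and set $\widetilde x_i^j:=x_0^j-\tfrac{\alpha G}{2}S_i^j$, i.e.\ the iterate one would obtain if only the linear pieces were applied. A one-step induction using $L$-smoothness and the step size bound $\alpha\le 2^{-11}/(nL)$ shows that the accumulated quadratic-gradient correction $E_i^j:=x_i^j-\widetilde x_i^j$ has magnitude much smaller than $\alpha G\sqrt n$ throughout the epoch. The point is that although an individual quadratic gradient may be $L$-amplified in the $x<0$ region, the product $\alpha L n$ is tiny by hypothesis, so its cumulative effect is a small fraction of the random-walk magnitude $|\widetilde x_i^j-x_0^j|$.

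Granted this approximation, the event $\{x_{i-1}^j<0\}$ essentially coincides with $\{S_{i-1}^j>2x_0^j/(\alpha G)\}$, and the hypothesis $|x_0^j|\le 2^{-9}G\alpha\sqrt n$ keeps the threshold $2x_0^j/(\alpha G)$ bounded by $2^{-8}\sqrt n$. Invoking the anti-concentration bound Lemma~\ref{lem:dummylabelnew} (formally Lemma~\ref{lem:expDev1}) gives $\mathbb{E}\bigl[(S_{i-1}^j)_+\bigr]\ge C\sqrt{i-1}$, and hence, for $i$ in a constant fraction of $\{1,\dots,n/2\}$ (say $i\ge n/16$), we get $\mathbb{E}[\mathds{1}_{x_{i-1}^j<0}\,x_{i-1}^j]\le -c\,\alpha G\sqrt{i-1}$. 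Summing over such $i$ yields a positive contribution of order $L\alpha^2 Gn\sqrt n$, which after pinning down the constants matches the target $2^{-12}LG\alpha^2 n\sqrt n$. The residual sum $-\alpha\sum_i\mathbb{E}[x_{i-1}^j]$ is controlled by $\alpha n|x_0^j|+\alpha\sum_i|\mathbb{E}[E_{i-1}^j]|\lesssim \alpha^2 Gn\sqrt n$, which is smaller than the main term by a factor of $L$; the assumption $L\ge 2^{17}$ is what makes this absorption valid.

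The main obstacle will be the error-control step, because quadratic gradients in the $x<0$ region are $L$-amplified, and a naive bound on $|E_i^j|$ would scale with $L$ and wipe out the desired bias. To avoid this, the induction on $|E_i^j|$ must use both that typical iterates stay within $O(\alpha G\sqrt n)$ of $x_0^j$ (so the $L$-amplification only acts on small values) and that $\alpha L n\ll 1$ (so the feedback is contractive enough to close the induction). Tracking all explicit constants so that the final estimates line up with the conclusion is the principal bookkeeping task of the proof.
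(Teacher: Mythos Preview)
Your proposal is correct in spirit and would lead to a valid proof, but it takes an organizationally different route from the paper's.

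Both arguments share the same skeleton: the linear parts of the gradients cancel over an epoch, the asymmetry of the piecewise-quadratic part produces a positive drift, the anti-concentration of the partial sums $S_i^j=\sum_{p\le i}\sigma_p^j$ (Lemma~\ref{lem:expDev1}) quantifies how far the random walk typically deviates, and the smallness of $\alpha L n$ together with $L\ge 2^{17}$ makes the asymmetric term dominate the residuals.

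The difference is in how the piecewise term is handled. You decompose the quadratic gradient as $x+(L-1)\mathds{1}_{x<0}x$, introduce the auxiliary linear-only trajectory $\widetilde x_i^j$, bound the correction $E_i^j=x_i^j-\widetilde x_i^j$ in $L^1$, and then estimate $\mathbb{E}[\mathds{1}_{x_i^j<0}x_i^j]$ by transferring to $\widetilde x_i^j$. The paper instead conditions on the sign of $S_i^j$ and uses the elementary observation that for any random variable $r$,
\[
\mathbb{E}\bigl[(L\mathds{1}_{r\le 0}+\mathds{1}_{r>0})\,r\bigr]\ \le\ \min\{L\,\mathbb{E}[r],\ \mathbb{E}[r]\},
\]
so that on the event $\{S_i^j>0\}$ (where a sub-lemma, Lemma~\ref{lem:lb1Supp}, shows $\mathbb{E}[x_i^j\mid S_i^j>0]\le -cG\alpha\sqrt i$) the piecewise factor may be replaced by $L$, while on the complementary event it is replaced by $1$. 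This sidesteps any direct comparison between the indicator $\mathds{1}_{x_i^j<0}$ and the random-walk event; the required iterate control comes from the Jain-type bound $\mathbb{E}|x_i^j-x_0^j|\lesssim \sqrt i\,\alpha G$ (Lemma~\ref{lem:jainLem} and its conditional version, Corollary~\ref{cor:2}) rather than from a separate induction on $E_i^j$.

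One point to tighten in your write-up: the statement that ``the event $\{x_{i-1}^j<0\}$ essentially coincides with $\{S_{i-1}^j>2x_0^j/(\alpha G)\}$'' cannot be made pathwise, since $|E_i^j|$ is only small in $L^1$, not almost surely (a single realization of $S_k^j$ can be of order $k$, not $\sqrt k$). The clean fix is to work with the negative part directly: from $\min(\widetilde x+E,0)\le \min(\widetilde x,0)+|E|$ one gets
\[
\mathbb{E}\bigl[\mathds{1}_{x_i^j<0}\,x_i^j\bigr]\ \le\ \mathbb{E}\bigl[\min(\widetilde x_i^j,0)\bigr]+\mathbb{E}|E_i^j|,
\]
after which the first term is handled by Lemma~\ref{lem:expDev1} (absorbing the shift $|x_0^j|\le 2^{-9}G\alpha\sqrt n$) and the second by your induction. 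With this adjustment the constants close; the paper's conditioning trick simply buys a shorter path to the same conclusion.
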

Intuitively, what Lemma \ref{lem:lb1} implies is that $\mathbb{E}[|x_0^{j+1}|]$ should keep increasing at rate $\Omega (\alpha^2 n\sqrt{n})$ until $|x_0^j|=\Omega(\alpha \sqrt{n})$. This is exactly what we need, because for the range of step length $\alpha$ specified in the theorem statement, $|x_0^j|=\Omega(\alpha \sqrt{n})$ is the claimed error lower bound. The rest of the proof is just making this intuition rigorous.

The following helper lemma says that $\mathbb{E}[|x_0^{j+1}|]$ does not decrease by too much, even if $|x_0^j|> 2^{-9}G\alpha \sqrt{n}$. 

\begin{lemma}\label{cor:lb2}
If $\alpha \leq 1/nL$, then
\begin{equation*}
\mathbb{E}\left[|x_0^{j+1}| \right] \geq |x_0^j|(1- 2 L  \alpha n) - L G\alpha^2 n\sqrt{5n},
\end{equation*}
and further if $|x_0^j| > 2^{-9} G \alpha \sqrt{n}$, then 
\begin{equation*}
\mathbb{E}\left[x_0^{j+1}\right] \geq x_0^j - 2^{-11} |x_0^j| L   \alpha n.
\end{equation*}
\end{lemma}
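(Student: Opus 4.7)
The plan is to unroll a single epoch and exploit the exact cancellation of the constant pieces of the component gradients. Each $\nabla f_{\sigma^j(i)}$ can be written as $h(x) + \sigma_i^j \cdot G/2$, where $h(x) := x$ for $x \geq 0$ and $h(x) := L x$ for $x < 0$, and $\sigma_i^j \in \{+1,-1\}$ encodes which kind of component was sampled. Since the permutation is balanced, $\sum_{i=1}^n \sigma_i^j = 0$, which gives the clean identity
\[ x_0^{j+1} - x_0^j \;=\; -\alpha \sum_{i=1}^n h(x_{i-1}^j). \]
Combined with the pointwise bound $|h(x)| \leq L|x|$, this reduces the lemma to controlling $\mathbb{E}\bigl[\sum_i |x_{i-1}^j|\bigr]$.

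The main technical ingredient is a within-epoch displacement bound. Unrolling the SGDo recursion,
\[ x_i^j - x_0^j \;=\; -\alpha \sum_{k=1}^i h(x_{k-1}^j) - \tfrac{\alpha G}{2}\sum_{k=1}^i \sigma_k^j, \]
and writing $M := \max_{0 \leq i \leq n}|x_i^j - x_0^j|$, $\,S_n := \max_{0 \leq i \leq n}\bigl|\sum_{k=1}^i \sigma_k^j\bigr|$, I apply $|h(x)| \leq L(|x_0^j|+M)$ and a triangle inequality to get the self-referential bound $M \leq \alpha L n(|x_0^j|+M) + (\alpha G/2) S_n$. Under $\alpha L n \leq 1/2$ (consequence of $\alpha \leq 1/(nL)$) I solve for $M$ to obtain $M \leq 2\alpha L n |x_0^j| + \alpha G\, S_n$, and Kolmogorov's maximal inequality applied to the martingale $\sum \sigma_k^j$ gives $\mathbb{E}[S_n^2] \leq 5n$, hence $\mathbb{E}[S_n] \leq \sqrt{5n}$.

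For the first displayed inequality, combine the two ingredients via the reverse triangle inequality: $|x_0^{j+1}| \geq |x_0^j| - \alpha L \sum_i |x_{i-1}^j| \geq |x_0^j| - \alpha L n(|x_0^j|+M)$. Taking expectations, substituting the displacement estimate, and consolidating constants under $\alpha L n \leq 1/2$ yields precisely $\mathbb{E}[|x_0^{j+1}|] \geq |x_0^j|(1 - 2\alpha L n) - L G \alpha^2 n\sqrt{5n}$.

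For the second, sharper inequality, the hypothesis $|x_0^j| > 2^{-9}G\alpha\sqrt{n}$ turns the additive noise $L G\alpha^2 n\sqrt{5n}$ into a constant multiple of $|x_0^j|L\alpha n$, since $\alpha G\sqrt{n} \leq 2^9 |x_0^j|$. By the symmetry of the construction I may assume $x_0^j \geq 0$, and to squeeze the small constant $2^{-11}$ out of the multiplier I would partition $\sum_i h(x_{i-1}^j)$ according to the sign of $x_{i-1}^j$: on the positive side $h(x)=x$ carries no $L$-factor, while a sign-crossing requires $|x_i^j-x_0^j|\geq x_0^j$, which by the displacement bound forces $S_n > x_0^j/(2\alpha G) > 2^{-10}\sqrt{n}$, a rare event controlled by the sub-Gaussian tail of the symmetric walk $\sum \sigma_k^j$. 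The main obstacle is exactly this tight constant: the crude triangle-inequality route from the first bound delivers only $\mathbb{E}[x_0^{j+1}] \geq x_0^j - O(L\alpha n)|x_0^j|$, and pushing the coefficient down to $2^{-11}$ needs the case split and the concentration estimate for $S_n$ just described, so the bulk $L$-factor is paid only with rare-event probability.
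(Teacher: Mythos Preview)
Your first-inequality argument is structurally identical to the paper's: both use $\sum_i \sigma_i^j = 0$ to kill the linear pieces, then bound $|h(x)|\le L|x|$ and split $|x_i^j|\le |x_0^j|+|x_i^j-x_0^j|$. The difference is only in how the displacement is controlled. The paper does not set up a self-referential bound on $M=\max_i|x_i^j-x_0^j|$; it simply invokes Lemma~\ref{lem:jainLem}, which gives $\mathbb{E}[|x_i^j-x_0^j|]\le \sqrt{5i}\,\alpha G + |x_0^j|\sqrt{i\alpha L}$ and, after summing over $i$ and using $\alpha\le 1/(nL)$, delivers exactly $\mathbb{E}[|x_0^{j+1}-x_0^j|]\le 2L|x_0^j|\alpha n + LG\alpha^2 n\sqrt{5n}$. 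Your alternative is reasonable but has a technical gap: the without-replacement partial sums $s_i=\sum_{k\le i}\sigma_k^j$ are \emph{not} a martingale (indeed $\mathbb{E}[\sigma_{i+1}^j\mid s_i]=-s_i/(n-i)\neq 0$), so Kolmogorov's maximal inequality is not the right tool. You would need a sampling-without-replacement maximal inequality instead, or just fall back on Lemma~\ref{lem:jainLem} as the paper does.

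For the second inequality you are working far harder than the paper. The paper's proof is exactly the crude route you reject: it writes $\mathbb{E}[x_0^{j+1}]\ge x_0^j-\mathbb{E}[|x_0^{j+1}-x_0^j|]$, inserts the bound just obtained, and uses $G\alpha\sqrt{n}<2^9|x_0^j|$ to convert the additive term $LG\alpha^2 n\sqrt{5n}$ into $512\sqrt{5}\,L|x_0^j|\alpha n$. That yields a coefficient $2+512\sqrt{5}<2^{11}$, not $2^{-11}$; the exponent sign in the lemma statement and in the paper's last displayed line is a typo. So your instinct that the triangle-inequality approach cannot reach $2^{-11}$ is correct, but your proposed fix would not reach it either: the sign-crossing threshold you derive is only $S_n\gtrsim 2^{-10}\sqrt{n}$, which is nowhere near the tail of the hypergeometric walk, so the ``rare event'' is not rare. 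The lemma as written in the paper is proved (and is only provable by either method) with constant $2^{11}$; the sign-partition and concentration argument is unnecessary.
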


Next, we aim to use the two lemmas above to get lower bounds on unconditioned expectation $\mathbb{E}[|x_0^{j+1}|]$. For this, we consider the following two cases. 

\textbf{Case 1: If $\mathbb{P}(|x_0^j| > 2^{-9}G\alpha \sqrt{n})\mathbb{E}\left[|x_0^{j}|\middle| |x_0^j|> 2^{-9}G\alpha\sqrt{n}\right] > 2^{-10} G\alpha \sqrt{n}$}. Then decomposing the expectation into conditional expectations,
\begin{align}
\mathbb{E}\left[|x_0^{j}|\right]&= \mathbb{P}\left(|x_0^j| > 2^{-9}G\alpha \sqrt{n}\right)\mathbb{E}\left[|x_0^j|\middle||x_0^j|> 2^{-9}G\alpha\sqrt{n}\right]+\mathbb{P}\left(|x_0^j| \leq 2^{-9}G\alpha \sqrt{n}\right)\mathbb{E}\left[|x_0^j|\middle||x_0^j|\leq 2^{-9}G\alpha\sqrt{n}\right]&\nonumber\\
&\geq \mathbb{P}\left(|x_0^j| > 2^{-9}G\alpha \sqrt{n}\right)\mathbb{E}\left[|x_0^j|\middle||x_0^j|> 2^{-9}G\alpha\sqrt{n}\right]&\nonumber\\
&> 2^{-10}G\alpha \sqrt{n}.\label{eq:case1new}&
\end{align}

\textbf{Case 2: Otherwise, $\mathbb{P}(|x_0^j| > 2^{-9}G\alpha \sqrt{n})\mathbb{E}\left[|x_0^{j}|\,\middle|\,|x_0^j|> 2^{-9}G\alpha\sqrt{n}\right] \leq 2^{-10}G\alpha \sqrt{n}$}. Again decomposing the expectation into conditional expectations,
\begin{align}
\mathbb{E}\left[x_0^{j+1}\right]&=\mathbb{P}\left(|x_0^j|\leq 2^{-9}G\alpha \sqrt{n}\right)\mathbb{E}\left[x_0^{j+1}\middle||x_0^j|\leq 2^{-9}G\alpha\sqrt{n}\right]+ \mathbb{P}\left(|x_0^j| > 2^{-9}G\alpha \sqrt{n}\right)\mathbb{E}\left[x_0^{j+1}\middle||x_0^j|> 2^{-9}G\alpha\sqrt{n}\right]&\nonumber\\
 &\geq\mathbb{P}\left(|x_0^j|\leq 2^{-9}G\alpha \sqrt{n}\right)\left (\mathbb{E}\left[x_0^{j}\middle||x_0^j|\leq 2^{-9}G\alpha\sqrt{n}\right] + 2^{-12}L G\alpha^2 n \sqrt{n}\right)&\nonumber\\
&\quad + \mathbb{P}\left(|x_0^j| > 2^{-9}G\alpha \sqrt{n}\right)\left (\mathbb{E}\left[x_0^{j}\middle||x_0^j|> 2^{-9}G\alpha\sqrt{n}\right] - 2^{-11} L \mathbb{E}\left[|x_0^{j}|\middle||x_0^j|> 2^{-9}G\alpha\sqrt{n}\right] \alpha n \right)&\nonumber\\
&\tag*{[Using Lemma \ref{lem:lb1} and \ref{cor:lb2}]}&\nonumber\\
&= \mathbb{E}\left[x_0^{j}\right] + \mathbb{P}\left(|x_0^j|\leq 2^{-9}G\alpha \sqrt{n}\right)  2^{-12}L G\alpha^2 n \sqrt{n}&\nonumber\\
&\quad - \mathbb{P}\left(|x_0^j| > 2^{-9}G\alpha \sqrt{n}\right) \mathbb{E}\left[|x_0^{j}|\middle||x_0^j|> 2^{-9}\alpha\sqrt{n}\right]2^{-11} L \alpha n .&\label{eq:case2new}
\end{align}
In the last step above, we gathered the sum of conditional expectations back into an unconditional expectation.

By assumption of this case, 
\begin{equation*}
\mathbb{P}\left(|x_0^j| > 2^{-9}G\alpha \sqrt{n}\right)\mathbb{E}\left[|x_0^{j}|\middle||x_0^j|> 2^{-9}G\alpha\sqrt{n}\right] \leq 2^{-10}G\alpha \sqrt{n}.
\end{equation*}
This implies that $\mathbb{P}\left(\left|x_0^j\right| > 2^{-9}G\alpha \sqrt{n}\right) \leq \frac{1}{2}$ and thus, $\mathbb{P}\left(\left|x_0^j\right| \leq 2^{-9}G\alpha \sqrt{n}\right) \geq \frac{1}{2}$. Using this in Ineq.~\eqref{eq:case2new},
\begin{align}
\mathbb{E}\left[x_0^{j+1}\right]&\geq \mathbb{E}\left[x_0^{j}\right] + \frac{1}{2}  2^{-12}L G\alpha^2 n \sqrt{n} -   \mathbb{P}\left(\left|x_0^j\right| > 2^{-9}G\alpha \sqrt{n}\right) \mathbb{E}\left[|x_0^{j}|\middle|\left|x_0^j\right|> 2^{-9}G\alpha\sqrt{n}\right]2^{-11} L \alpha n &\nonumber\\
&\geq \mathbb{E}\left[x_0^{j}\right] +   2^{-13}L G\alpha^2 n \sqrt{n}-  \left(2^{-10} G\alpha \sqrt{n}\right) 2^{-11} L\alpha n &\nonumber\tag*{[By assumption of this case]}\\
&\geq \mathbb{E}\left[x_0^{j}\right] +   2^{-14}L G\alpha^2 n \sqrt{n}.&\label{eq:case1}
\end{align}

What we have shown using the two cases is the following: If for some epoch $j$, $\mathbb{E}[|x_0^{j}|] \leq 2^{-10} G \alpha \sqrt{n}$ then looking at Ineq.~\eqref{eq:case1new} tells us that we are in Case 2. Then, $\mathbb{E}[x_0^{j+1}]\geq \mathbb{E}[x_0^{j}] +   2^{-14}L G\alpha^2 n \sqrt{n}$, that is $\mathbb{E}[x_0^{j+1}]$ increases by $\Omega(\alpha^2 n\sqrt{n})$. This shows that if we initialize $x_0^1$ at 0, then until $\mathbb{E}[|x_0^{j}|] > 2^{-10} G \alpha \sqrt{n}$, the expected error will keep increasing at rate $\Omega(\alpha^2 n\sqrt{n})$. Thus given the step size regime considered in this theorem, there is some epoch where error reaches $\Omega(\alpha \sqrt{n})$, which is the desired lower bound. However, what we want to show is that at the end of $K$ epoch, the error is still $\Omega(\alpha \sqrt{n})$. We will prove this next.

We initialize $x_0^1=0$ and we run $K$ epochs. Then, because $\mathbb{E}[|x_0^{j}|]\geq \mathbb{E}[x_0^{j}]$, we have shown in the previous paragraph that $\mathbb{E}[|x_0^{j}|]\geq \min \left\{ 2^{-10}, 2^{-14}L \alpha nK  \right\}G\alpha \sqrt{n}$ for some $0\leq j\leq K$. Now, for our given range of $\alpha $, we know that $L  \alpha n K $ is greater than $L$, which in turn is greater than $2^{17}$.
 Thus, $\mathbb{E}[|x_0^{j}|]\geq 2^{-10} G\alpha \sqrt{n}$. 
 To complete the proof, next we prove that once $\mathbb{E}[|x_0^{j}|]\geq 2^{-10} G\alpha \sqrt{n}$, then $\mathbb{E}[|x_0^{t}|]$ remains above $C_l G\alpha \sqrt{n}$ for $t>j$ and some universal constant $C_l$. The strategy is that we will show that if $\mathbb{E}[|x_0^{j}|]$ starts falling below $2^{-10} G\alpha \sqrt{n}$, then $\mathbb{E}[x_0^{j}]$ starts increasing. We also have Lemma \ref{lem:posExp} (given below) that says that $\mathbb{E}[x_0^{j}]\geq 0$ always; and the trivial fact that $\mathbb{E}[|x_0^{j}|] \geq \mathbb{E}[x_0^{j}]$ always. 
 All these together and some simple arithmetic will give a bound on how much $\mathbb{E}[|x_0^{j}|]$ can decrease.

\begin{lemma}\label{lem:posExp} If $\alpha \leq 1/L$, then
$\forall i, j: \mathbb{E}[x_i^j] \geq 0.$
\end{lemma}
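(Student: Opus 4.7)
The plan is a coupling argument that exploits the symmetry between the two kinds of component functions. For a single realization of the permutation sequence $(\sigma^1,\ldots,\sigma^K)$ (viewed, as in the proof, as sequences over the sign multiset $\{+1^{n/2},-1^{n/2}\}$), let $(\tau^1,\ldots,\tau^K)$ denote the mirrored sequence obtained by flipping every sign. Because each $\sigma^j$ contains exactly $n/2$ entries of each sign, this flip is an involution on the set of admissible permutation sequences, hence measure-preserving under the uniform distribution. Consequently the process $(y_i^j)$ driven by $\tau$ has the same law as $(x_i^j)$ driven by $\sigma$, and $\mathbb{E}[x_i^j]=\tfrac12\mathbb{E}[x_i^j+y_i^j]$. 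It therefore suffices to prove the pointwise bound $x_i^j+y_i^j\geq 0$ for every realization when both chains start at $x_0^1=y_0^1=0$.

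To carry out this pointwise bound, decompose the component gradients as $\nabla f_i(x)=h(x)+\ell_i$, where the common nonlinear part is $h(x)=x$ for $x\geq0$ and $h(x)=Lx$ for $x<0$, and the linear part is $\ell_i=+G/2$ when $i\leq n/2$ and $\ell_i=-G/2$ otherwise. Under the coupling $\ell_{\tau_t^j}=-\ell_{\sigma_t^j}$ at every step, so the linear contributions cancel in the sum of the two iterates and
\begin{equation*}
x_{t+1}^j+y_{t+1}^j \;=\; (x_t^j+y_t^j)\;-\;\alpha\bigl(h(x_t^j)+h(y_t^j)\bigr).
\end{equation*}
I will then induct on $(j,t)$ in lexicographic order, using $x_0^{j+1}=x_n^j$ and $y_0^{j+1}=y_n^j$ to carry the invariant $x_t^j+y_t^j\geq 0$ across epoch boundaries. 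The base case is $x_0^1+y_0^1=0$.

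The one-step implication $x_t^j+y_t^j\geq 0\Rightarrow x_{t+1}^j+y_{t+1}^j\geq 0$ splits into cases on the signs. If both iterates are nonnegative, $h(x)+h(y)=x+y$ and the update contracts by $(1-\alpha)\geq 0$, using $\alpha\leq 1/L\leq 1$. If both are negative the hypothesis forces them to be zero. The only nontrivial case is, WLOG, $x_t^j\geq 0>y_t^j$; here
\begin{equation*}
x_{t+1}^j+y_{t+1}^j \;=\; (1-\alpha)x_t^j + (1-\alpha L)y_t^j \;\geq\; (1-\alpha)|y_t^j| - (1-\alpha L)|y_t^j| \;=\; \alpha(L-1)|y_t^j|\;\geq\;0,
\end{equation*}
where I used $x_t^j\geq |y_t^j|$ from the induction hypothesis, and $1-\alpha L\geq 0$ from $\alpha\leq 1/L$. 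Combining the cases closes the induction, and averaging over the coupling gives $\mathbb{E}[x_i^j]=\tfrac12\mathbb{E}[x_i^j+y_i^j]\geq 0$.

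I do not anticipate a serious obstacle: the main subtlety is verifying that the sign-flip is a measure-preserving involution (which is where the exact balance $n/2$ vs.\ $n/2$ of the two kinds of functions is used), and that the coupling must be defined over entire permutation sequences rather than one epoch at a time, so that the nonnegativity of the coupled sum propagates from $x_n^j+y_n^j$ to $x_0^{j+1}+y_0^{j+1}$ without needing the marginal starting point of an epoch to be nonnegative almost surely.
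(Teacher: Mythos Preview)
Your argument is correct. The paper takes a different coupling route: it compares the general-$L$ process $x_{t,L}$ to the $L{=}1$ process $x_{t,1}$ run with the \emph{same} permutation sequence, proves by induction (via a three-case sign analysis very similar to yours) that $x_{t,L}\geq x_{t,1}$ pointwise, and then appeals to the symmetry of the pure-quadratic $L{=}1$ problem to get $\mathbb{E}[x_{t,1}]=0$. You instead couple the general-$L$ process directly to its own sign-flipped copy and show $x_t+y_t\geq 0$, which bypasses the auxiliary $L{=}1$ process entirely and makes the role of the balanced $+G/2$ vs.\ $-G/2$ structure explicit. Your pointwise inequality is in fact a consequence of the paper's: applying $x_{t,L}\geq x_{t,1}$ to both the original and the flipped permutation, and using that for $L{=}1$ the map $h$ is odd so the flipped $L{=}1$ trajectory is exactly $-x_{t,1}$, yields $x_{t,L}+y_{t,L}\geq x_{t,1}-x_{t,1}=0$. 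So the paper's pointwise comparison is nominally stronger, but yours is precisely what the lemma needs and is arguably cleaner and more self-contained. One small remark: in your mixed-sign case the hypothesis $1-\alpha L\geq 0$ is not actually used in the displayed chain (only $1-\alpha\geq 0$ is, to multiply $x_t\geq|y_t|$ through); the conclusion still holds under $\alpha\leq 1/L$ regardless.
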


We formalize the argument of the previous paragraph next. 
Let $j$ be such that $\mathbb{E}[|x_0^j|] \geq 2^{-10} G \alpha \sqrt{n}$ and $\mathbb{E}[|x_0^{j+1}|] < 2^{-10} G \alpha \sqrt{n}$. Then,
\begin{align*}
\mathbb{E}\left[|x_0^{j+1}|\right] &\geq \mathbb{E}\left[|x_0^{j}|\right](1-2L \alpha n)-\sqrt{5} L G \alpha^2 n\sqrt{n}\tag*{[Using Lemma \ref{cor:lb2}]}&\\
&\geq \frac{1}{2}\mathbb{E}\left[\left|x_0^{j}\right|\right]-\sqrt{5} LG   \alpha^2 n\sqrt{n}\tag*{[Since $\alpha \leq \frac{1}{4 nL }$]}&\\
&\geq 2^{-11} G \alpha \sqrt{n}-\sqrt{5} L G \alpha^2 n\sqrt{n}&\\
&\geq 2^{-12} G \alpha \sqrt{n}.\tag*{[Since $\alpha \leq \frac{2^{-14}}{nL }$]}&
\end{align*}

For subsequent epochs $l>j$, we want to show that $\mathbb{E}[|x_0^{l}|]$ doesn't fall below $\Omega(\alpha \sqrt{n})$. Thus assume that for $l>j$, $\mathbb{E}[|x_0^{l}|] < 2^{-10} G \alpha \sqrt{n}$, because otherwise $\mathbb{E}[|x_0^{l}|] =\Omega(\alpha \sqrt{n})$.

Because $\mathbb{E}[|x_0^{l}|] < 2^{-10} G \alpha \sqrt{n}$ 
, then using Ineq.~\eqref{eq:case1new}, we can infer that we are in Case 2 in the epoch $l$. Therefore, Ineq.~\eqref{eq:case1} implies that for each such epoch $l$, $\mathbb{E}[x_0^{l+1}]$ increases by at least $2^{-14} L G \alpha^2 n \sqrt{n}$ per epoch; whereas Lemma \ref{cor:lb2} says that $\mathbb{E}[|x_0^{l+1}|]$ can decrease by at most $2 L \mathbb{E}[|x_0^{l}|]\alpha n +\sqrt{5} LG\alpha^2 n\sqrt{n}\leq (2^{-11} +\sqrt{5})L G\alpha^2 n\sqrt{n}$ per epoch.
Further, we have the two facts that $\forall l: \mathbb{E}[x_0^l] \geq 0$ (Lemma \ref{lem:posExp}) and $\mathbb{E}[x_0^l]\leq \mathbb{E}[|x_0^l|]$. Combining all these and using simple arithmetic gives that $\mathbb{E}[|x_0^l|]$ can decrease to at most 
\begin{equation*}
2^{-12} G \alpha \sqrt{n}\left(\frac{2^{-14}L G\alpha^2n\sqrt{n}}{2^{-14} L G\alpha^2n\sqrt{n}+(2^{-11}+\sqrt{5})L G\alpha^2 n\sqrt{n}}\right) \geq 2^{-28} G\alpha \sqrt{n}.
\end{equation*}

After this $\mathbb{E}[|x_0^l|]$ will have to keep increasing because $\mathbb{E}[x_0^l]$ will keep increasing till we enter Case 1, and then this cycle may repeat, but we have shown that regardless, $\mathbb{E}[|x_0^l|]$ always remains above $2^{-28} G\alpha \sqrt{n}$. 

Finally, given $\alpha \geq \frac{1}{nK} $, we get that $\mathbb{E}[|x_0^K|]\geq 2^{-28} G\alpha \sqrt{n} \geq \frac{2^{-28} G }{\sqrt{n}K}$. Applying Jensen's inequality on this gives $\mathbb{E}[|x_0^K|^2] \geq \frac{2^{-56} G^2 }{nK^2}= \frac{2^{-56} G^2 n}{T^2}$.

\end{proof}

\subsection{Proof of Lemma \ref{lem:lb1}}

The gradient computed at $x_i^j$ can be written as $(\mathds{1}_{x_i^j\leq 0}L+\mathds{1}_{x_i^j > 0}1) x_i^j + \frac{G}{2}\sigma_i^j$. Then, $x_0^{j+1}-x_0^j$ is just the sum of the gradient steps taken through the epoch. 
\begin{align}
\mathbb{E}[x_0^{j+1}]&=x_0^{j}-\alpha \frac{G}{2}\sum_{i=1}^n \sigma_i^j -\alpha \sum_{i=0}^n\mathbb{E}\left[(\mathds{1}_{x_i^j\leq 0}L+\mathds{1}_{x_i^j > 0}1) x_i^j\right]&\nonumber\\
&=x_0^{j} -\alpha \sum_{i=1}^n\mathbb{E}\left[(\mathds{1}_{x_i^j\leq 0}L+\mathds{1}_{x_i^j > 0}1)  x_i^j\right]&\tag*{[Since $\sum_{i=1}^n \sigma_i^j=0$]}\\
&=x_0^{j} -\alpha \sum_{i=n/4}^{n/2}\mathbb{E}\left[(\mathds{1}_{x_i^j\leq 0}L+\mathds{1}_{x_i^j > 0}1)  x_i^j\right]-\alpha \sum_{i\notin [\frac{n}{4},\frac{n}{2}]}\mathbb{E}\left[(\mathds{1}_{x_i^j\leq 0}L+\mathds{1}_{x_i^j > 0}1)  x_i^j\right]&\nonumber\\
&= x_0^j - \alpha  \sum_{i=n/4}^{n/2} \mathbb{P}\left( \sum_{p=1}^{i}\sigma_p^j> 0\right)\mathbb{E}\left[(\mathds{1}_{x_i^j\leq 0}L+\mathds{1}_{x_i^j > 0}1) x_i^j\middle| \sum_{p=1}^{i}\sigma_p^j> 0\right] &\nonumber\\
&\quad - \alpha  \sum_{i=n/4}^{n/2} \mathbb{P}\left( \sum_{p=1}^{i}\sigma_p^j\leq 0\right)\mathbb{E}\left[(\mathds{1}_{x_i^j\leq 0}L+\mathds{1}_{x_i^j > 0}1) x_i^j\middle| \sum_{p=1}^{i}\sigma_p^j\leq 0\right] &\nonumber\\
&\quad -\alpha  \sum_{i\notin [\frac{n}{4},\frac{n}{2}]}\mathbb{E}\left[(\mathds{1}_{x_i^j\leq 0}L+\mathds{1}_{x_i^j > 0}1) x_i^j\right].\label{eq:long}
\end{align}
We decomposed the expectation into the sum of conditional expectations to achieve the last equality.

The following inequalities help us bound $\mathbb{E}\left[(\mathds{1}_{x_i^j\leq 0}L+\mathds{1}_{x_i^j > 0}1) x_i^j\right]$ in Eq.~\eqref{eq:long} with simple expressions. 
Let $r$ be any random variable. Then,
\begin{align}
\mathbb{E}\left[(\mathds{1}_{r\leq 0}L+\mathds{1}_{r > 0}1) r\right] &= L \mathbb{P}(r<0)\mathbb{E}[r\mid r<0]+ \mathbb{P}(r\geq0)\mathbb{E}[r|r\geq0]&\nonumber\\
&= L \left(\mathbb{P}(r<0)\mathbb{E}[r|r<0]+ \mathbb{P}(r\geq0)\mathbb{E}[r|r\geq0]\right) + (1-L)\mathbb{P}(r\geq0)\mathbb{E}[r|r\geq0]&\nonumber\\
&\leq L \left(\mathbb{P}(r<0)\mathbb{E}[r|r<0]+ \mathbb{P}(r\geq0)\mathbb{E}[r|r\geq0]\right)&\nonumber\tag*{[Since $L\geq 1$]}\\
&\leq L \mathbb{E}[r]\label{eq:minEq1}
\end{align}
and
\begin{align}
\mathbb{E}\left[(\mathds{1}_{r\leq 0}L+\mathds{1}_{r > 0}1) r\right] &= L \mathbb{P}(r<0)\mathbb{E}[r|r<0]+ \mathbb{P}(r\geq0)\mathbb{E}[r|r\geq0]&\nonumber\\
&= 1 (\mathbb{P}(r<0)\mathbb{E}[r|r<0]+ \mathbb{P}(r\geq0)\mathbb{E}[r|r\geq0]) + (L-1)\mathbb{P}(r<0)\mathbb{E}[r|r<0]&\nonumber\\
&\leq 1 (\mathbb{P}(r<0)\mathbb{E}[r|r<0]+ \mathbb{P}(r\geq0)\mathbb{E}[r|r\geq0]) &\nonumber \tag*{[Since $L\geq 1$]}\\
&\leq 1 \mathbb{E}[r].\label{eq:minEq2}
\end{align}

We also have the following lemmas and corollary that along with the two inequalities above, help lower bound the RHS of Eq.~\eqref{eq:long}.
\begin{lemma}\label{lem:lb1Supp}
If $|x_0^{j}| \leq  \sqrt{5n} G \alpha$ and $\alpha \leq \frac{2^{-11}}{nL}$, then for $n/4 \leq i\leq n/2$, we have 
\begin{equation*}
\mathbb{E}\left[x_i^j - x_0^{j}\middle| \sum_{p=1}^{i}\sigma_p^j> 0\right]\leq -\frac{1}{128}G\sqrt{i}\alpha.
\end{equation*}
\end{lemma}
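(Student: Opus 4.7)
I would first decompose $x_i^j - x_0^j$ by separating the linear and quadratic pieces of the component gradients. On the construction, $\nabla f_{\sigma^j(p)}(x) = (L\mathds{1}_{x<0}+\mathds{1}_{x\geq 0})\,x + \tfrac{G}{2}\sigma_p^j$, so unrolling the SGDo recursion gives
\begin{equation*}
x_i^j - x_0^j \;=\; -\alpha\,\tfrac{G}{2}\,S_i \;-\; \alpha\, Q_i, \qquad S_i:=\sum_{p=1}^i \sigma_p^j, \quad Q_i:=\sum_{p=0}^{i-1}(L\mathds{1}_{x_p^j<0}+\mathds{1}_{x_p^j\geq 0})\,x_p^j.
\end{equation*}
Taking the conditional expectation on $\{S_i>0\}$ splits the lemma into (i) a lower bound on $\mathbb{E}[S_i\mid S_i>0]$ and (ii) an upper bound on $|\mathbb{E}[Q_i\mid S_i>0]|$.

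For (i), the distribution of $S_i$ is symmetric around $0$ because the underlying multiset has equal counts of $+1$ and $-1$, so $\mathbb{E}[S_i\,\mathds{1}_{S_i>0}]=\tfrac12\mathbb{E}[|S_i|]$ and $\mathbb{P}(S_i>0)\leq \tfrac12$, which together give $\mathbb{E}[S_i\mid S_i>0]\geq \mathbb{E}[|S_i|]\geq C\sqrt{i}$ via Lemma \ref{lem:dummylabelnew} (formally Lemma \ref{lem:expDev1}). Hence the linear piece contributes at most $-\tfrac{C}{2}\,G\sqrt{i}\,\alpha$ to $\mathbb{E}[x_i^j-x_0^j\mid S_i>0]$, which already has the right sign and scaling.

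For (ii), I would control the unconditional second moment of the iterates via Lemma \ref{lem:jainLem}, whose hypotheses hold for the construction (convex components, $\mu{=}1$ strong convexity, $L$-smoothness, gradients bounded by $G$). With $|x_0^j|\leq \sqrt{5n}\,G\alpha$ one has $F(x_0^j)-F(x^*)\leq \tfrac{L}{2}(x_0^j)^2\leq \tfrac{5nL}{2}G^2\alpha^2$, and using $nL\alpha\leq 2^{-11}$ the lemma yields $\mathbb{E}[|x_p^j-x_0^j|^2]=O(p\,\alpha^2 G^2)$, hence $\mathbb{E}[|x_p^j|^2]=O(n\alpha^2 G^2)$. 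A standard Stirling/local-CLT estimate gives $\mathbb{P}(S_i=0)=O(1/\sqrt{i})$ on $n/4\leq i\leq n/2$, so $\mathbb{P}(S_i>0)\geq c_0$ for a universal $c_0>0$; Cauchy--Schwarz then produces $\mathbb{E}[|x_p^j|\mid S_i>0]=O(\sqrt{n}\,\alpha G)$, and since $|Q_i|\leq L\sum_{p<i}|x_p^j|$, one gets $|\mathbb{E}[Q_i\mid S_i>0]|=O(iL\sqrt{n}\,\alpha G)$. Therefore the quadratic contribution to $\mathbb{E}[x_i^j-x_0^j\mid S_i>0]$ is bounded in magnitude by $O(iL\sqrt{n}\,\alpha^2 G)$, which under $i\leq n/2$, $nL\alpha\leq 2^{-11}$, and $\sqrt{n}\leq 2\sqrt{i}$ (coming from $i\geq n/4$) collapses to $O(2^{-11})\,G\sqrt{i}\,\alpha$, negligible next to the linear piece.

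Combining gives $\mathbb{E}[x_i^j-x_0^j\mid S_i>0]\leq -\bigl(C/2-O(2^{-11})\bigr)G\sqrt{i}\,\alpha$, and tracking the explicit constants from Lemma \ref{lem:expDev1} and the step-size hypothesis delivers the stated $-\tfrac{1}{128}G\sqrt{i}\,\alpha$. The main obstacle is the conditional $L^2\!\to\! L^1$ passage for the quadratic piece: to transfer the Jain-type unconditional second-moment bound to a conditional first-moment bound I need the universal lower bound $\mathbb{P}(S_i>0)\geq c_0$, which is precisely why the lemma restricts to the window $n/4\leq i\leq n/2$ (away from the $i\approx 0$ and $i\approx n$ degeneracies where $S_i$ concentrates on $0$ or on its endpoints), and the step-size constraint $\alpha\leq 2^{-11}/(nL)$ is tuned exactly to absorb the residual $O(iL\sqrt{n}\alpha^2 G)$ into a $2^{-11}$ fraction of the dominant linear term.
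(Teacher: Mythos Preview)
Your proposal is correct and follows essentially the same route as the paper: decompose the displacement into the linear piece $-\tfrac{\alpha G}{2}S_i$ and the quadratic piece $-\alpha Q_i$, lower-bound $\mathbb{E}[S_i\mid S_i>0]$ via symmetry and Lemma~\ref{lem:expDev1}, and show the quadratic contribution is $O(iL\sqrt{n}\,\alpha^2 G)$ using the Jain-type iterate bound (Lemma~\ref{lem:jainLem}) together with the uniform lower bound $\mathbb{P}(S_i>0)\geq 1/4$ on the window $i\in[n/4,n/2]$. The only cosmetic difference is in how you transfer the unconditional moment bound to a conditional one: the paper packages this as Corollary~\ref{cor:2} via the elementary inequality $\mathbb{E}[X\mid A]\leq \mathbb{E}[X]/\mathbb{P}(A)$ for nonnegative $X$, whereas you use Cauchy--Schwarz on $\mathbb{E}[|x_p^j|\,\mathds{1}_{S_i>0}]$; both yield the same $O(\sqrt{n}\,\alpha G)$ conditional first moment and are interchangeable here.
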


\begin{restatable}[]{lemma}{lemthirteen}\label{lem:expDev1}
If $n\geq 256$ is a multiple of $4$ and $i\leq n/2$, then
\begin{equation*}
\frac{\sqrt{i}}{32}\leq \mathbb{E}\left[\left|\sum_{p=1}^i\sigma_p^j\right|\right]\leq \sqrt{i}.
\end{equation*}
Further, for any $i\in [n/4, n/2]$,
\begin{equation*}
\mathbb{P}\left( \sum_{p=1}^{i}\sigma_p^j< 0\right)\geq \frac{1}{4}\qquad\text{ and }\qquad\mathbb{P}\left( \sum_{p=1}^{i}\sigma_p^j> 0\right)\geq \frac{1}{4}.
\end{equation*}

\end{restatable}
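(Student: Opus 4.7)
Let $S_i := \sum_{p=1}^i \sigma_p^j$. Since $(\sigma_1^j,\dots,\sigma_n^j)$ is a uniform random permutation of $n/2$ copies of $+1$ and $n/2$ copies of $-1$, one has $\mathbb{E}[\sigma_p^j]=0$, $(\sigma_p^j)^2=1$ almost surely, and for $p\neq q$,
\[
\mathbb{E}\bigl[\sigma_p^j\sigma_q^j\bigr] \;=\; -\frac{1}{n-1}.
\]
Summing these covariances,
\[
\mathbb{E}[S_i^2] \;=\; i \;-\; \frac{i(i-1)}{n-1} \;=\; \frac{i(n-i)}{n-1},
\]
which for $i\leq n/2$ is sandwiched in $[\,i/2,\ i\,]$ (since $(n-i)/(n-1)\in[1/2,1]$ whenever $1\leq i\leq n/2$). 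This second-moment identity is the backbone of both halves of the first assertion.

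The upper bound is immediate from Jensen: $\mathbb{E}[|S_i|]^2 \leq \mathbb{E}[S_i^2]\leq i$. For the lower bound I plan to control the fourth moment. Hoeffding's convex-ordering theorem (Hoeffding 1963, Thm.~4) says that for any convex $\phi$, the without-replacement sum $S_i$ satisfies $\mathbb{E}[\phi(S_i)]\leq \mathbb{E}[\phi(\tilde S_i)]$, where $\tilde S_i$ is the sum of $i$ i.i.d.\ Rademacher variables. Applied to $\phi(x)=x^4$ this yields $\mathbb{E}[S_i^4]\leq \mathbb{E}[\tilde S_i^4] = 3i^2-2i \leq 3i^2$. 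Then two successive Cauchy--Schwarz inequalities give
\[
\mathbb{E}[S_i^2]^2 \;=\; \bigl(\mathbb{E}\bigl[|S_i|^{1/2}\cdot |S_i|^{3/2}\bigr]\bigr)^2 \;\leq\; \mathbb{E}[|S_i|]\cdot\mathbb{E}[|S_i|^3] \;\leq\; \mathbb{E}[|S_i|]\cdot\sqrt{\mathbb{E}[S_i^2]\,\mathbb{E}[S_i^4]},
\]
so $\mathbb{E}[|S_i|] \geq \mathbb{E}[S_i^2]^{3/2}/\sqrt{\mathbb{E}[S_i^4]} \geq (i/2)^{3/2}/\sqrt{3i^2} = \sqrt{i}/(2\sqrt{6})$, which is comfortably stronger than the claimed $\sqrt{i}/32$.

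For the second assertion, the key observation is the involution on permutations that swaps every $+1$ with every $-1$; it preserves the uniform distribution while sending $S_i\mapsto -S_i$. Hence $S_i$ is symmetric about $0$, so $\mathbb{P}(S_i>0) = \mathbb{P}(S_i<0) = (1-\mathbb{P}(S_i=0))/2$, and it suffices to prove $\mathbb{P}(S_i=0)\leq 1/2$ for $i\in[n/4, n/2]$. When $i$ is odd this is trivial since $S_i$ has the same parity as $i$. When $i$ is even, $\mathbb{P}(S_i=0)=\binom{n/2}{i/2}^2/\binom{n}{i}$, which is the mode probability of the hypergeometric distribution of $X_i := (S_i+i)/2$; this distribution is unimodal with variance $i(n-i)/(4(n-1))\geq n/32$ for $i\in[n/4,n/2]$, so either a Stirling estimate or a standard mode bound for unimodal integer-valued variables gives mode probability $O(1/\sqrt{n})$, which is well below $1/2$ for $n\geq 256$.

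The main subtleties are entirely in the constants, not in the structure. The Cauchy--Schwarz chain above is loose by a factor of about $6$ compared to the stated $1/32$, so there is ample room; the only part that must be handled carefully is verifying that $n\geq 256$ really is large enough for the Stirling estimate on $\mathbb{P}(S_i=0)$ to beat $1/2$ uniformly for $i\in[n/4,n/2]$, which is a small explicit calculation.
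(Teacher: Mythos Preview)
Your proposal is correct, and the upper bound and the symmetry argument for the probability bounds coincide with the paper's treatment. The lower bound on $\mathbb{E}[|S_i|]$, however, takes a genuinely different route. The paper proceeds recursively: it writes
\[
\mathbb{E}[|S_i|] = \mathbb{E}[|S_{i-1}|] + \mathbb{P}(S_{i-1}=0) + \mathbb{P}(S_{i-1}\sigma_i>0) - \mathbb{P}(S_{i-1}\sigma_i<0),
\]
computes the last two probabilities explicitly from the hypergeometric conditional $\mathbb{P}(\sigma_i=\pm 1 \mid S_{i-1})$, obtains the recursion $\mathbb{E}[|S_i|] \ge (1-2/n)\,\mathbb{E}[|S_{i-1}|] + \mathbb{P}(S_{i-1}=0)$, lower-bounds $\mathbb{P}(S_{i-1}=0)$ by $\tfrac{1}{16\sqrt{i-1}}$ via Stirling, and unrolls to reach $\sqrt{i}/32$. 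Your moment-method argument---Hoeffding's convex ordering to bound $\mathbb{E}[S_i^4]$ by the i.i.d.\ Rademacher value, then the two-step Cauchy--Schwarz to get $\mathbb{E}[|S_i|]\ge \mathbb{E}[S_i^2]^{3/2}/\sqrt{\mathbb{E}[S_i^4]}$---is both shorter and sharper, yielding $\sqrt{i}/(2\sqrt{6})$ in place of $\sqrt{i}/32$. The paper's approach has the minor advantage of being fully self-contained (no appeal to Hoeffding's theorem), while yours avoids the somewhat delicate unrolling with Stirling constants. For the $\mathbb{P}(S_i=0)\le 1/2$ step, the paper does exactly the Stirling computation you allude to, obtaining $\mathbb{P}(S_i=0)\le 4/\sqrt{i}\le 8/\sqrt{n}\le 1/2$ for $n\ge 256$; your alternative ``mode bound via variance'' is plausible but would need an explicit inequality to be airtight, so the Stirling route is the safer of the two options you mention.
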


\begin{restatable}[]{corollary}{cortwo}
\label{cor:2}
If $\alpha \leq 2/L$ and $i\in [\frac{n}{4},\frac{n}{2}]$ then for any $h\in [1,n]$,
\begin{align*}
\mathbb{E}\left[|x_h^j - x_0^j|\middle|\sum_{p=1}^{i}\sigma_p^j\leq 0\right]&\leq 4\left( \sqrt{5h}G\alpha +| x_0^j|\sqrt{h\alpha L}\right)\text{, and}\\
\mathbb{E}\left[|x_h^j - x_0^j|\middle|\sum_{p=1}^{i}\sigma_p^j> 0\right]&\leq 4 \left( \sqrt{5h}G\alpha +| x_0^j|\sqrt{h\alpha L}\right).
\end{align*}
\end{restatable}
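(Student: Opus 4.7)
The plan is to deduce this corollary almost immediately from the unconditional bound of Lemma~\ref{lem:jainLem} by dividing through by the conditioning probability, which is bounded below by $1/4$ thanks to Lemma~\ref{lem:expDev1}.

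First, I would observe that for the lower-bound construction $F$, the unique minimizer is $x^*=0$, so $\|x_0^j-x^*\|=|x_0^j|$. Since $\alpha\leq 2/L$, Lemma~\ref{lem:jainLem} applies and yields
\begin{equation*}
\mathbb{E}\big[|x_h^j - x_0^j|^2\big]\;\leq\;5h\alpha^2G^2 + h\alpha L|x_0^j|^2.
\end{equation*}
Jensen's inequality followed by the subadditivity $\sqrt{a+b}\leq\sqrt{a}+\sqrt{b}$ then gives the unconditional bound
\begin{equation*}
\mathbb{E}\big[|x_h^j - x_0^j|\big]\;\leq\;\sqrt{5h}\,G\alpha \;+\; |x_0^j|\sqrt{h\alpha L},
\end{equation*}
which already matches the right-hand side of the corollary up to a factor of $4$.

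Second, let $E$ denote the event $\{\sum_{p=1}^{i}\sigma_p^j\leq 0\}$ and decompose
\begin{equation*}
\mathbb{E}\big[|x_h^j-x_0^j|\big]\;=\;\mathbb{P}(E)\,\mathbb{E}\big[|x_h^j-x_0^j|\,\big|\,E\big]\;+\;\mathbb{P}(E^c)\,\mathbb{E}\big[|x_h^j-x_0^j|\,\big|\,E^c\big].
\end{equation*}
Dropping the non-negative second term rearranges to
\begin{equation*}
\mathbb{E}\big[|x_h^j-x_0^j|\,\big|\,E\big]\;\leq\;\frac{\mathbb{E}\big[|x_h^j-x_0^j|\big]}{\mathbb{P}(E)},
\end{equation*}
and symmetrically for $E^c$. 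Finally, Lemma~\ref{lem:expDev1} supplies $\mathbb{P}(E)\geq 1/4$ and $\mathbb{P}(E^c)\geq 1/4$ for every $i\in[n/4,n/2]$, so multiplying the unconditional bound of the first step by $4$ delivers both inequalities of the corollary.

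There is essentially no serious obstacle here: the corollary is a direct assembly of three ingredients that are already in hand (the second-moment bound of Lemma~\ref{lem:jainLem}, the $\Omega(1)$ probability bound of Lemma~\ref{lem:expDev1}, and the trivial observation that conditioning on an event of probability $\geq p$ can inflate an expectation of a non-negative random variable by at most $1/p$). The only minor points to verify are that $x^*=0$ for this construction so that $\|x_0^j-x^*\|$ can be written as $|x_0^j|$, and that the $\sqrt{a+b}\leq\sqrt{a}+\sqrt{b}$ step is what converts the squared-sum bound into the additive form appearing in the corollary's statement.
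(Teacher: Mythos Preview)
Your proposal is correct and follows essentially the same route as the paper: decompose the unconditional expectation into the two conditional pieces, drop the non-negative term, use Lemma~\ref{lem:expDev1} to lower bound the conditioning probability by $1/4$, and plug in the unconditional first-moment bound from Lemma~\ref{lem:jainLem}. The only cosmetic difference is that you spell out the Jensen plus $\sqrt{a+b}\leq\sqrt{a}+\sqrt{b}$ step explicitly, whereas the paper absorbs it into a single citation of Lemma~\ref{lem:jainLem}.
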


Now, we handle each of the three terms in Eq.~\eqref{eq:long} individually.
\begin{itemize}
\item If $i\in [n/4,n/2]$ and $\left( \sum_{p=1}^{i}\sigma_p^j\right)> 0$:

\begin{align}
\mathbb{E}\left[(\mathds{1}_{x_i^j\leq 0}L+\mathds{1}_{x_i^j > 0}1) x_i^j\middle|\sum_{p=1}^{i}\sigma_p^j> 0\right]&\leq L\mathbb{E}\left[x_i^j\middle| \sum_{p=1}^{i}\sigma_p^j> 0\right] &\tag*{[Using Ineq.~\eqref{eq:minEq1} with $r=(x_i^j\mid\sum_{p=1}^{i}\sigma_p^j> 0)$]}\nonumber\\
&= L\mathbb{E}\left[x_i^j - x_0^j\middle|\sum_{p=1}^{i}\sigma_p^j> 0\right] +Lx_0^j&\nonumber\\
&\leq -\frac{L}{128}\sqrt{\frac{n}{4}}G\alpha + \frac{L}{512} G\sqrt{n}\alpha&\tag*{[Using Lemma \ref{lem:lb1Supp}]}\nonumber\\
&\leq -\frac{L}{512}G\sqrt{n}\alpha.&\label{ineq:i1}
\end{align}

\item If $i\in [n/4,n/2]$ and $\left( \sum_{p=1}^{i}\sigma_p^j\right)\leq 0$:
\begin{align}
\mathbb{E}\left[(\mathds{1}_{x_i^j\leq 0}L+\mathds{1}_{x_i^j > 0}1)x_i^j\middle| \sum_{p=1}^{i}\sigma_p^j\leq 0\right]&\leq 1\mathbb{E}\left[x_i^j\middle|\sum_{p=1}^{i}\sigma_p^j\leq 0\right]&\tag*{[Using Ineq.~\eqref{eq:minEq2} with $r=(x_i^j\mid\sum_{p=1}^{i}\sigma_p^j\leq 0)$]}\nonumber\\
&\leq \mathbb{E}\left[x_i^j-x_0^j\middle| \sum_{p=1}^{i}\sigma_p^j\leq 0\right] + |x_0^j|&\nonumber\\
&\leq 4 \left( \sqrt{5i}G\alpha +|x_0^j|\sqrt{2i\alpha L}\right)+|x_0^j|\tag*{[Using Corollary \ref{cor:2} with $h=i$.]}&\\
&\leq 4 \left( \sqrt{5i}G\alpha +|x_0^j|\right)+|x_0^j|\tag*{[Since $\alpha \leq 1/nL$]}&\\
&\leq 4 \left( \sqrt{5i}G\alpha +2^{-9}G\alpha\sqrt{n}\right)+2^{-9}G\alpha\sqrt{n}\tag*{[Since $|x_0^j| \leq 2^{-9}G\alpha\sqrt{n}$ by assumption]}&\\
&\leq  8G\sqrt{n}\alpha.&\label{ineq:i2}
\end{align}

\item If $i\notin [n/4,n/2]$:
\begin{align}
\mathbb{E}\left[(\mathds{1}_{x_i^j\leq 0}L+\mathds{1}_{x_i^j > 0}1)x_i^j\right]&\leq 1\mathbb{E}\left[x_i^j\right]&\tag*{[Using Ineq.~\eqref{eq:minEq2} with $r=x_i^j$]}\nonumber\\
&\leq \mathbb{E}\left[x_i^j-x_0^j\right] + |x_0^j|&\nonumber\\
&\leq \left( \sqrt{5i}G\alpha +|x_0^j|\sqrt{2i\alpha L}\right)+|x_0^j|&\tag*{[Using Lemma \ref{lem:jainLem} with $x^*=0$]}\nonumber\\
&\leq \left( \sqrt{5i}G\alpha +|x_0^j|\sqrt{2}\right)+|x_0^j|&\tag*{[Since $\alpha \leq 1/nL$]}\nonumber\\
&\leq  8 G\sqrt{n}\alpha.&\label{ineq:i3}
\end{align}
\end{itemize}

Continuing on from Eq.~\eqref{eq:long} and substituting Ineq.~\eqref{ineq:i1}, Ineq.~\eqref{ineq:i2} and Ineq.~\eqref{ineq:i3}:

{
\begin{align*}
\mathbb{E}\left[x_0^{j+1}\right] &= x_0^j - \alpha  \sum_{i=n/4}^{n/2} \mathbb{P}\left( \sum_{p=1}^{i}\sigma_p^j> 0\right)\mathbb{E}\left[(\mathds{1}_{x_i^j\leq 0}L+\mathds{1}_{x_i^j > 0}1) x_i^j\middle| \sum_{p=1}^{i}\sigma_p^j> 0\right] &\nonumber\\
&\quad- \alpha  \sum_{i=n/4}^{n/2} \mathbb{P}\left(\sum_{p=1}^{i}\sigma_p^j\leq 0\right)\mathbb{E}\left[(\mathds{1}_{x_i^j\leq 0}L+\mathds{1}_{x_i^j > 0}1) x_i^j\middle| \sum_{p=1}^{i}\sigma_p^j\leq 0\right] \\
&\quad -\alpha \sum_{i\notin [\frac{n}{4},\frac{n}{2}]}\mathbb{E}\left[(\mathds{1}_{x_i^j\leq 0}L+\mathds{1}_{x_i^j > 0}1) x_i^j\right]&\\
&\geq  x_0^j - \alpha  \sum_{i=n/4}^{n/2} \mathbb{P}\left(\sum_{p=1}^{i}\sigma_p^j> 0\right)\left(-\frac{L}{512}G\sqrt{n}\alpha\right)\nonumber -\alpha  \sum_{i=n/4}^{n/2} \mathbb{P}\left( \sum_{p=1}^{i}\sigma_p^j\leq 0\right)(8G\sqrt{n}\alpha)&\\ 
&\quad -\alpha  \sum_{i\notin [\frac{n}{4},\frac{n}{2}]}(8G\sqrt{n}\alpha)&\tag*{[Using Ineq.~\eqref{ineq:i1}, \eqref{ineq:i2} and \eqref{ineq:i3}]}\\
&\geq  x_0^j - \alpha  \sum_{i=n/4}^{n/2} \mathbb{P}\left( \sum_{p=1}^{i}\sigma_p^j> 0\right)\left(-\frac{L}{512}G\sqrt{n}\alpha\right)-\alpha  \sum_{i= 1}^n(8G\sqrt{n}\alpha)&\\
&\geq  x_0^j + \alpha  \frac{n}{4} \frac{1}{4}\left(\frac{L}{512}G\sqrt{n}\alpha\right)-\alpha  n(8G\sqrt{n}\alpha)&\tag*{[Using Lemma \ref{lem:expDev1}]}&\\
&\geq  x_0^j +  2^{-12}LG\alpha^2n\sqrt{n}.&\tag*{[Since $L\geq 2^{17} $]}
\end{align*}
}

\subsection{Proof of Lemma \ref{cor:lb2}}

We will start off by bounding the difference of iterates between the start of two epochs.
The gradient computed at $x_i^j$ can be written as $(\mathds{1}_{x_i^j\leq 0}L+\mathds{1}_{x_i^j > 0}1) x_i^j + \frac{G}{2}\sigma_i^j$. Then, $x_0^{j+1}-x_0^j$ is just the sum of the gradient steps taken through the epoch.

\begin{align*}
\mathbb{E}\left[|x_0^{j+1}-x_0^{j}|\right]=&\mathbb{E}\left[\left|-\alpha  \frac{G}{2}\sum_{i=1}^n \sigma_i^j -\alpha \sum_{i=0}^n(\mathds{1}_{x_i^j\leq 0}L+\mathds{1}_{x_i^j > 0}1) x_i^j\right|\right]&\\
=&\mathbb{E}\left[\left|-\alpha  \sum_{i=0}^n(\mathds{1}_{x_i^j\leq 0}L+\mathds{1}_{x_i^j > 0}1) x_i^j\right|\right]&\tag*{[Since $\sum_{i=1}^n \sigma_i^j=0$]}&\\
\leq&\alpha  \mathbb{E}\left[\sum_{i=0}^n (\mathds{1}_{x_i^j\leq 0}L+\mathds{1}_{x_i^j > 0}1) |x_i^j|\right]&\\
\leq&\alpha L  \mathbb{E}\left[\sum_{i=0}^n |x_i^j|\right]&\\
\leq&\alpha L  \mathbb{E}\left[\sum_{i=0}^n(| x^j_0 - x_i^j| + | x^j_0 |)\right]&\\
\leq&\alpha L  \mathbb{E}\left[\sum_{i=0}^n(\sqrt{i\alpha L  }| x^j_0| + \sqrt{5i}\alpha G + | x^j_0 |)\right]&\tag*{[Using Lemma \ref{lem:jainLem}]}\\
\leq&\alpha L  \mathbb{E}\left[\sum_{i=0}^n( \sqrt{5i}\alpha G + 2| x^j_0 |)\right]&\tag*{[Since $\alpha \leq 1/nL$]}\\
\leq&\alpha L  n \left(\sqrt{5n}\alpha G + 2| x^j_0 |\right).&
\end{align*}
Thus, we have shown the following:
\begin{equation}
\mathbb{E}\left[|x_0^{j+1}-x_0^{j}|\right] \leq 2 L|x_0^{j}| \alpha n +  LG \alpha^2 n\sqrt{5n}.\label{lb2new}
\end{equation}
Then,
\begin{align*}
\mathbb{E}\left[|x_0^{j+1}| \right] &=\mathbb{E}\left[|x_0^{j}+x_0^{j+1}-x_0^{j}| \right] &\\
 &\geq  |x_0^{j}|- \mathbb{E}\left[|x_0^{j+1}-x_0^{j}| \right] &\\
&\geq |x_0^j|(1- 2 L \alpha n) - LG \alpha^2 n\sqrt{5n}.&\tag*{[Using Ineq.~\eqref{lb2new}]}
\end{align*}
This proves the first inequality of the lemma. For the second inequality,
\begin{align*}
\mathbb{E}\left[x_0^{j+1}\middle||x_0^j| > \frac{G \alpha \sqrt{n}}{512} \right]& =\mathbb{E}\left[x_0^{j} + x_0^{j+1}-x_0^j\middle||x_0^j| > 2^{-9} G \alpha \sqrt{n}\right]&\\
&\geq x_0^{j} - \mathbb{E}\left[|x_0^{j+1}-x_0^j|\middle||x_0^j| > 2^{-9} G \alpha \sqrt{n}\right]&\\
&\geq x_0^{j} - (2|x_0^{j}| L \alpha n +  LG \alpha^2 n\sqrt{5n})&\tag*{[Using Ineq.~\eqref{lb2new}]}\\
&\geq x_0^{j} - \left(2 L|x_0^{j}| \alpha n + 512 \sqrt{5} L|x_0^{j}| \alpha n\right)&\tag*{[Since $|x_0^j| > \frac{G \alpha \sqrt{n}}{512} $]}\\
& \geq x_0^j - 2^{-11} L |x_0^j| \alpha n.&
\end{align*}

\subsection{Proof of Lemma \ref{lem:posExp}}
Let us denote the iterate at iteration $t$ by $x_{t,L}$, where the $L$ in subscript is to show dependence on $L$.
Now, consider the problem setup when $L=1$. In that case, the two kinds of functions are simply quadratics instead of piecewise quadratics. The two kinds of functions in this case are $f_1(x)= \frac{1}{2}x^2+\frac{G}{2}x$ and $f_2(x)= \frac{1}{2}x^2-\frac{G}{2}x$. Then, due to symmetry of the function, the expected value of iterate at any iteration $t$ is 0: $\mathbb{E}[x_{t,1}]=0$.\\
Now, take the problem setup for the case $L\geq 1$, where we denote the iterates by $x_{t,L}$. We couple the two iterates $x_{t,1}$ and $x_{t,L}$ such that both $x_{t,L}$ and $x_{t,1}$ are created using the exact same random permutations of the functions but $x_{t,1}$ had $L=1$ and $x_{t,L}$ had $L\geq 1$. Then, we show that $x_{t,L}\geq x_{t,1}$ and since $\mathbb{E}[x_{t,1}]= 0$, we get that $\mathbb{E}[x_{t,L}]\geq 0$.\\
We prove this by induction. 
\begin{itemize}
\item Base case: Both are initialized at 0, and thus $x_{0,1}=x_{0,L}=0$.
\item Inductive case (assume $ x_{i,L} \geq x_{i,1}$): We break the analysis into the following three cases:
\begin{enumerate}
\item \textbf{Case: If $x_{i,1}\leq x_{i,L}\leq 0$.} Then note that regardless of the choice of the functions, the contribution of the linear gradients in the gradients would be the same for both $x_{i+1,1}$ and $ x_{i+1,L}$ (since we use the exact same permutation of functions for both). Hence,
\begin{align*}
x_{i+1,L}-x_{i+1,1} &= (1-\alpha L) x_{i,L} - (1-\alpha) x_{i,1}&\\
&\geq (1-\alpha L) x_{i,L} - (1-L\alpha) x_{i,1}&\\
&\geq 0.&\tag*{[Since $\alpha \leq 1/L$]}\\
\end{align*}
\item \textbf{Case: If $0\leq x_{i,1}\leq x_{i,L}$.} Again, regardless of the choice of the functions, the contribution of the linear gradients in the gradients would be the same for both $x_{i+1,1}$ and $ x_{i+1,L}$ (since we use the exact same permutation of functions for both). Hence,
\begin{align*}
x_{i+1,L}-x_{i+1,1} &= (1-\alpha ) x_{i,L} - (1-\alpha) x_{i,1}&\\
&\geq 0.\tag*{[Since $\alpha \leq 1/L$]}&\\
\end{align*}
\item \textbf{Case: If $ x_{i,1}\leq 0\leq x_{i,L}$.} Again, regardless of the choice of the functions, the contribution of the linear gradients in the gradients would be the same for both $x_{i+1,1}$ and $ x_{i+1,L}$ (since we use the exact same permutation of functions for both). Hence,
\begin{align*}
x_{i+1,L}-x_{i+1,1} &= (1-\alpha ) x_{i,L} - (1-\alpha) x_{i,1}&\\
&\geq 0.\tag*{[Since $\alpha \leq 1/L$]}&\\
\end{align*}
\end{enumerate}
\end{itemize}

\subsection{Proof of Lemma \ref{lem:lb1Supp}}

The gradient computed at $x_i^j$ can be written as $(\mathds{1}_{x_i^j\leq 0}L+\mathds{1}_{x_i^j > 0}1) x_i^j + \frac{G}{2}\sigma_i^j$. Then,
\begin{align}
\mathbb{E}\left[x_i^j - x_0^j\middle|\sum_{p=1}^{i}\sigma_p^j> 0\right]&=\mathbb{E}\left[-\alpha \sum_{p=1}^{i}\nabla f_{\sigma^j_p}( x_{p-1}^j)\middle| \sum_{p=1}^{i}\sigma_p^j> 0\right]&\nonumber\\
&=-\alpha\mathbb{E}\left[\sum_{p=1}^{i}\left( (\mathds{1}_{x_{p-1}^j\geq0}+L\mathds{1}_{x_{p-1}^j<0})x_{p-1}^j+\frac{G}{2}\sigma_p^j\right)\middle| \sum_{p=1}^{i}\sigma_p^j> 0\right]&\nonumber\\
&\leq -\alpha \mathbb{E}\left[- \left|\sum_{p=1}^{i}  (\mathds{1}_{x_{p-1}^j\geq0}+L\mathds{1}_{x_{p-1}^j<0})x_{p-1}^j\right|+\sum_{p=1}^{i}\frac{G}{2}\sigma_p^j \; \middle|\sum_{p=1}^{i}\sigma_p^j> 0\right]&\nonumber\\
&\leq - \alpha  \mathbb{E}\left[-  L\sum_{p=1}^{i}| x_{p-1}^j |+\frac{G}{2} \sum_{p=1}^{i}\sigma_p^j \; \middle| \sum_{p=1}^{i}\sigma_p^j> 0\right]&\nonumber\\
&\leq -\alpha  \mathbb{E}\left[\frac{G}{2}\left| \sum_{p=1}^{i}\sigma_p^j\right| -  L\sum_{p=1}^{i}\left(| x_0^j - x_{p-1}^j |+| x_0^j|\right)\middle| \sum_{p=1}^{i}\sigma_p^j> 0\right].&\label{eq:lwrDev}
\end{align}

We have the following helpful lemma which will help us control the first term in the sum above.

\lemthirteen*

We also use the following corollary (of Lemma \ref{lem:expDev1} and Lemma \ref{lem:jainLem}).

\cortwo*

Continuing from \eqref{eq:lwrDev}, and using Lemma \ref{lem:expDev1} and Corollary \ref{cor:2}, we have that for $n/4\leq i\leq n/2$
\begin{align*}
\mathbb{E}\left[x_i^j - x_0^j\middle| \sum_{p=1}^{i}\sigma_p^j> 0\right]&\leq -\frac{\sqrt{i}\alpha G}{64}  + \alpha  L i \left( 4\sqrt{5i}G\alpha +| x_0^j|\left(1+4\sqrt{i\alpha L}\right)\right).&\\
\end{align*}

Thus, if $|x_0^{j}| \leq \sqrt{5n}G\alpha$ and $\alpha\leq \frac{2^{-11}}{nL}$, then for $n/4 \leq i\leq n/2$, we have 
\begin{equation*}
\mathbb{E}\left[x_i^j - x_0^{j}\middle| \sum_{p=1}^{i}\sigma_p^j> 0\right]\leq -\frac{1}{128}\sqrt{i}\alpha G.
\end{equation*}

\subsection{Proof of Lemma \ref{lem:expDev1}}
\begin{proof}
We skip the superscript in the notation because we'll be working inside an epoch. 

Let $s_i:=\sum_{p=1}^i\sigma_p$. First, we prove the upper bound. We have that 
\begin{align*}
\mathbb{E}[|s_i|] &= \mathbb{E}\left[\left|\sum_{p=1}^i\sigma_p\right|\right]&\\
 &\leq \sqrt{ \mathbb{E}\left[\left(\sum_{p=1}^i\sigma_p\right)^2\right]}&\tag*{[Jensen's inequality]}\\
 &= \sqrt{ \sum_{p=1}^i\mathbb{E}[(\sigma_p)^2]+2\sum_{k<p\leq i}\mathbb{E}[\sigma_k \sigma_p]}&\\
 &= \sqrt{ i+2\sum_{k<p\leq i}\mathbb{E}[\sigma_k \sigma_p]}.&
\end{align*}
Now, $\mathbb{E}(\sigma_k \sigma_p) < 0$ because they are negatively correlated. Hence,
\begin{equation*}
\mathbb{E}[|s_i|] \leq \sqrt{ i}.
\end{equation*}
For the lower bound we start by decomposing the expectation into sum of conditional expectations:
\begin{align}
\mathbb{E}[|s_i|] &= \mathbb{P}(s_{i-1}\sigma_i\geq0)\mathbb{E}\left[|s_{i}|\,\Big|\,s_{i-1}\sigma_i\geq 0\right]+\mathbb{P}(s_{i-1}\sigma_i<0)\mathbb{E}\left[|s_{i}|\,\Big|\, s_{i-1}\sigma_i<0\right]&\nonumber\\
&= \mathbb{P}(s_{i-1}\sigma_i\geq0)\mathbb{E}\left[|s_{i-1}|+1\,\Big|\,s_{i-1}\sigma_i\geq0\right]\tag*{[Since $s_{i}=s_{i-1}+1$ if $s_{i-1}\sigma_i\geq0$]}\nonumber\\
&\quad +\mathbb{P}(s_{i-1}\sigma_i<0)\mathbb{E}\left[|s_{i-1}|-1\,\Big|\,s_{i-1}\sigma_i<0\right] \tag*{[Since $s_{i}=s_{i-1}-1$ if $s_{i-1}\sigma_i<0$]}\nonumber\\
&= \mathbb{P}(s_{i-1}\sigma_i\geq0)\mathbb{E}\left[|s_{i-1}|\,\Big|\,s_{i-1}\sigma_i\geq0\right]+\mathbb{P}(s_{i-1}\sigma_i<0)\mathbb{E}\left[|s_{i-1}|\,\Big|\,s_{i-1}\sigma_i<0\right]\nonumber\\
&\quad+\mathbb{P}(s_{i-1}\sigma_i\geq0)-\mathbb{P}(s_{i-1}\sigma_i<0)\nonumber\\
&= \mathbb{E}[|s_{i-1}|]+\mathbb{P}(s_{i-1}\sigma_i\geq0)-\mathbb{P}(s_{i-1}\sigma_i<0)&\nonumber\\
&= \mathbb{E}[|s_{i-1}|]+\mathbb{P}(s_{i-1}=0)+\mathbb{P}(s_{i-1}\sigma_i>0)-\mathbb{P}(s_{i-1}\sigma_i<0).&\label{eq:siBound1New}
\end{align}
Intuitively, because $\mathbb{P}(s_{i-1}\sigma_i>0)\approx \mathbb{P}(s_{i-1}\sigma_i<0)$, then $\mathbb{E}[|s_i|]\approx\sum_{p=1}^i \mathbb{P}(s_{i}=0)$. This is relatively easy to compute using combinatorics and gives that $\mathbb{E}[|s_i|]\approx \Omega(\sqrt{i})$. We do the exact calculations next. 

Continuing on from Eq.~\eqref{eq:siBound1New} and decomposing the probabilities as sum of conditional probabilities,
\begin{align*}
\mathbb{E}[|s_i|]&= \mathbb{E}[|s_{i-1}|]+\mathbb{P}(s_{i-1}=0)+\mathbb{P}(s_{i-1}\sigma_i>0)-\mathbb{P}(s_{i-1}\sigma_i<0)&\\
&= \mathbb{E}[|s_{i-1}|]+\mathbb{P}(s_{i-1}=0)+\sum_{p=1}^{i-1}\mathbb{P}(|s_{i-1}|=p)\mathbb{P}\left(s_{i-1}\sigma_i>0\,\Big|\,|s_{i-1}|=p\right)\\
&\quad -\sum_{p=1}^{i-1}\mathbb{P}(|s_{i-1}|=p)\mathbb{P}\left(s_{i-1}\sigma_i<0\,\Big|\,|s_{i-1}|=p\right).&
\end{align*}
The term $\mathbb{P}\left(s_{i-1}\sigma_i>0\,\big|\,|s_{i-1}|=p\right)$ has a closed form solution: Assume WLOG that $s_i =0 $. Then, $\mathbb{P}(s_{i-1}\sigma_i>0|s_{i-1}=p)=\mathbb{P}(\sigma_i=+1|s_{i-1}=p)$ is just the probability of sampling a $+1$, when out of $i-1$ previous samples there were $p$ `$+1$'s more than `$-1$'s. This is just $\frac{(n-i -p+1)/2 }{n-i+1}$. Similarly, we can handle the other cases. This gives,
\begin{align*}
\mathbb{E}[|s_{i}|]&= \mathbb{E}[|s_{i-1}|]+\mathbb{P}(s_{i-1}=0)+\sum_{p=1}^{i-1}\mathbb{P}(|s_{i-1}|=p)\left(\frac{(n-i -p+1)/2 }{n-i+1}\right)\\
&\quad-\sum_{p=1}^{i-1}\mathbb{P}(|s_{i-1}|=p)\left(\frac{(n-i+p+1)/2 }{n-i+1}\right)&\\
&= \mathbb{E}[|s_{i-1}|]+\mathbb{P}(s_{i-1}=0)-\sum_{p=1}^{i-1}\mathbb{P}(|s_{i-1}|=p)\left(\frac{p}{n-i+1}\right)&\\
&= \mathbb{E}[|s_{i-1}|]+\mathbb{P}(s_{i-1}=0)-\mathbb{E}(|s_{i-1}|)\frac{1}{n-i+1}&\\
&= \mathbb{E}[|s_{i-1}|](1-1/(n-i+1))+\mathbb{P}(s_{i-1}=0)&\\
&\geq \mathbb{E}[|s_{i-1}|](1-2/n)+\mathbb{P}(s_{i-1}=0)&\tag*{[Since $i\leq n/2$]}\\
&= \mathbb{E}[|s_{i-1}|](1-2/n)+\mathds{1}_{i-1 \text{ is even}}\mathbb{P}(s_{i-1}=0).&\tag*{[$s_{i-1}$ can be 0 only if $i-1$ is even.]}
\end{align*}
To compute $\mathbb{P}(s_{i-1}=0)$, we first see that it is just the ratio of the number of ways of choosing $(i-1)/2$ positions for `$+1$'s in the first $i-1$ positions and then choosing $(n-i+1)/2$ positions from the remaining $n-i+1$ positions for the remaining `$+1$'s, to the total number of ways of choosing $n/2$ positions for `$+1$'s from $n$ positions. Thus,
\begin{equation}
\mathbb{P}(|s_{i-1}|=0)=\frac{\binom{i-1}{ (i-1)/2}\binom{n-i+1} {(n-i+1)/2}}{\binom{n}{n/2}}\label{eq:sum0probablitynew}
\end{equation}

Using this equality and continuing on,
\begin{align*}
\mathbb{E}[|s_{i}|]&\geq \mathbb{E}[|s_{i-1}|](1-2/n)+\mathds{1}_{i-1 \text{ is even}}\mathbb{P}(|s_{i-1}|=0)&\\
&= \mathbb{E}[|s_{i-1}|](1-2/n)+\mathds{1}_{i-1 \text{ is even}}\frac{{\binom{i-1}{(i-1)/2}}{\binom{n-i+1}{(n-i+1)/2}}}{{\binom{n}{n/2}}}.\tag*{[Using Eq.~\eqref{eq:sum0probablitynew}]}
\end{align*}
To bound the combinatorial expression above, we use the following approximation by \citet[Theorem~1]{cristinel11}: $\sqrt{\pi(2k+0.33)}(\frac{k}{e})^{k}\leq k! \leq \sqrt{\pi(2k+0.36)}(\frac{k}{e})^{k}$. Using this and continuing on the sequence of inequalities,
\begin{align*}
\mathbb{E}[|s_{i}|]&\geq \mathbb{E}[|s_{i-1}|](1-2/n)+\mathds{1}_{i-1 \text{ is even}}\frac{1}{16\sqrt{i-1}}\\
&= \mathbb{E}[|s_{i-2}|](1-2/n)^2+\mathds{1}_{i-2 \text{ is even}}\frac{1}{16\sqrt{i-2}}(1-2/n)+\mathds{1}_{i-1 \text{ is even}}\frac{8\pi^2}{e^5}\sqrt{\frac{1}{i-1}}&\\
&\geq \mathbb{E}[|s_{i-2}|](1-2/n)^2+\frac{1}{16\sqrt{i}}(1-2/n)&\\
&\geq \frac{(1-2/n)}{16\sqrt{i}}\sum_{p=0}^{\lfloor i/2 \rfloor - 1}(1-2/n)^{2p}&\\
&= \frac{(1-2/n)}{16\sqrt{i}}\frac{1-(1-2/n)^{2(\lfloor i/2 \rfloor)}}{\frac{4}{n}-\frac{4}{n^2}}.&
\end{align*}
We will use the inequality : $\forall x\in[0,1], m \geq 0: \ (1-x)^m\leq\frac{1}{1+mx}$ to upper bound the term $(1-2/n)^{\lceil i/2 \rceil}$. Thus, we get
\begin{align*}
\mathbb{E}[|s_{i}|]&\geq \frac{(1-2/n)}{16\sqrt{i}}\frac{1-\frac{1}{1+\frac{4}{n}\lfloor i/2\rfloor}}{\frac{4}{n}-\frac{4}{n^2}}&\\
&\geq \frac{(1-2/n)}{16\sqrt{i}}\frac{1-\frac{1}{1+\frac{i}{n}}}{\frac{4}{n}}&\\
&= \frac{(1-2/n)}{16\sqrt{i}}\frac{in}{n+i}&\\
&= \frac{(n-2)}{16}\frac{\sqrt{i}}{n+i}&\\
&\geq \frac{(3n/4)}{16}\frac{\sqrt{i}}{3n/2}&\\
&= \frac{\sqrt{i}}{32}.
\end{align*}
This proves the lower bound of the lemma.

Next we prove that $\mathbb{P}\left( \sum_{p=1}^{i}\sigma_p^j< 0\right)\geq \frac{1}{4}$ and $\mathbb{P}\left( \sum_{p=1}^{i}\sigma_p^j> 0\right)\geq \frac{1}{4}$. Firstly, we can see that $\mathbb{P}\left( \sum_{p=1}^{i}\sigma_p^j< 0\right)=P\left( \sum_{p=1}^{i}\sigma_p^j> 0\right)$ by symmtery and hence it is sufficient to show $\mathbb{P}\left(\sum_{p=1}^{i}\sigma_p^j= 0\right) \leq 1/2$. 

For odd $i$, $\mathbb{P}\left(\sum_{p=1}^{i}\sigma_p^j= 0\right)=0$ trivially. Thus, we focus only on even $i$. Towards that end,
\begin{align*}
\mathbb{P}\left( \sum_{p=1}^{i}\sigma_p^j= 0\right) &=\mathbb{P}\left( s_i= 0\right)\\
&= \frac{{\binom{i}{i/2}}{\binom{n-i}{(n-i)/2}}}{{\binom{n}{n/2}}}\tag*{[Using Eq.~\eqref{eq:sum0probablitynew}]}\\
&\leq \frac{4}{\sqrt{i}}&\tag*{[Using \citet[Theorem~1]{cristinel11} mentioned earlier.]}\\
&\leq \frac{8}{\sqrt{n}}\\
&\leq \frac{1}{2}.
\end{align*}

Thus, $\mathbb{P}\left( \sum_{p=1}^{i}\sigma_p^j< 0\right)=\mathbb{P}\left( \sum_{p=1}^{i}\sigma_p^j> 0\right) \geq 1/4$.

\end{proof}

\subsection{Proof of Corollary \ref{cor:2}}
\begin{align*}
\mathbb{E}[|x_h^j-x_0^j|]&=\mathbb{P}\left(\sum_{p=1}^i \sigma_p^i \leq 0\right)\mathbb{E}\left[|x_h^j-x_0^j|\middle|\sum_{p=1}^i \sigma_p^i \leq 0\right]+\mathbb{P}\left(\sum_{p=1}^i \sigma_p^i > 0\right)\mathbb{E}\left[|x_h^j-x_0^j|\middle|\sum_{p=1}^i \sigma_p^i > 0\right]\\
&\geq \mathbb{P}\left(\sum_{p=1}^i \sigma_p^i \leq 0\right)\mathbb{E}\left[|x_h^j-x_0^j|\middle|\sum_{p=1}^i \sigma_p^i \leq 0\right].
\end{align*}
Therefore,
\begin{align*}
\mathbb{E}\left[\left|x_h^j-x_0^j\right|\Bigg|\sum_{p=1}^i \sigma_p^i \leq 0\right] &\leq \frac{\mathbb{E}[|x_h^j-x_0^j|]}{\mathbb{P}(\sum_{p=1}^i \sigma_p^i \leq 0)}\\
&\leq 4\mathbb{E}[|x_h^j-x_0^j|]\tag*{[Using Lemma \ref{lem:expDev1}]}\\
&\leq 4(\sqrt{5h}G\alpha + |x_0^j|\sqrt{h\alpha L})\tag*{[Using Lemma \ref{lem:jainLem}]}.
\end{align*}

The other inequality can be proved similarly.

\newpage
\section{Proof of Corollary \ref{cor:lowerBoundExtension}}\label{app:cor}

Let $F_1(x)$ be the 1-Dimensional function from Theorem \ref{thm:lowerBound} with $L=2^{17}$ and $F_2(x)$ be the 1-Dimensional function from Proposition 1 in \citet[p.~10-12]{safran2019good} with $\lambda=1$. In particular under this setting, the function from Proposition 1 in their paper is the following:
\begin{align*}
F_2(x)=\frac{1}{n}\sum_{i=1}^n f_{2,i}(x) = \frac{x^2}{2}, \text{ where}\\
\forall i \in[n]: \quad f_{2,i}(x)=
\left\{
\begin{array}{cl}
\dfrac{ x^2}{2}+\dfrac{Gx}{2}, &\text{ if }x\geq 0\\
\dfrac{ x^2}{2}+\dfrac{Gx}{2},& \text{ if }x < 0.
\end{array}
\right.
\end{align*}
In particular, it is the same function as Theorem \ref{thm:lowerBound} from this paper if $L$ was set to 1. Then, $F_1(x)$ and $F_2(x)$ satisfy the following properties (for each $j\in \{1,2\}$):
\begin{enumerate}
	\item $F_j(x) = \frac{1}{n}\sum_{i=1}^n f_{j,i}(x)$.
	\item $F_j(x)$ satisfies Assumption \ref{ass:stronglyConvex}, $\forall i, f_{j,i}(x)$ satisfy Assumptions \ref{ass:convex} and \ref{ass:lipschitz}; and that in the 1-Dimensional space, Assumptions \ref{ass:boundedD} and \ref{ass:boundedG} are satisfied. 
	We prove these for our function $F_1(x)$, in the proof of Theorem \ref{thm:lowerBound}.
	These can be similarly also proved for $F_2(x)$.
\item There is a step size range $A_j\subset \mathbb{R}$ and an initialization $x_{j,0}$ such that after $K$ epochs of SGDo on $F_j(x)$ with any constant step size $\alpha \in A_j$,
\begin{equation}
\mathbb{E}[\|x_T-x_j^*\|^2]=\mathbb{E}[\|x_T\|^2]\geq C_l\frac{G^2 n}{T^2},\label{ineq:cor2newfinal}
\end{equation}
where $x^*_j=0$ is the minimizer of $F_j(x)$ and $C_l$ is a universal constant. When $L=2^{17}$, we have shown that $A_1=\left[\frac{1}{T}, \frac{C}{n}\right]$ for a universal constant $C$. The proof of Proposition 1 in \citet[p.~10-12]{safran2019good} can be modified slightly so that $F_2$ satisfies Ineq.~\eqref{ineq:cor2newfinal} with $A_2=[0,\infty)\setminus A_1$.
\end{enumerate}

For the rest of this proof, we will be working in a 2-Dimensional space. A 2-Dimensional vector will be represented as $x=[x^{(1)},x^{(2)}]$. The superscript in this section \textbf{does not} denote the epoch, it denotes the co-ordinate.

$\forall i$, we define the 2-Dimensional functions $f_i(x) = f_{1,i}(x^{(1)})+f_{2,i}(x^{(2)})$ and $F(x) = \frac{1}{n}\sum_{i=1}^n f_{i}(x)$. 
Then, $F(x)$ satisfies Assumption \ref{ass:stronglyConvex} and $\forall i, f_{i}(x)$ satisfy Assumptions \ref{ass:convex} and \ref{ass:lipschitz}. 
Let $x^*=[x_1^*,x_2^*]=[0,0]$ denote the minimizer of $F(x)$.
Consider the 2-Dimensional square such which is defined by $\{x:\|x-x^*\|_\infty \leq D\}$. Then, in this domain, Assumptions \ref{ass:boundedD} and \ref{ass:boundedG} are satisfied with constants $D':=D\sqrt{2}$ and $G':=G\sqrt{2}$ respectively. 

Note that the gradient of $f_i(x)$ with respect to $x^{(1)}$ is just $\nabla f_{1,i}(x^{(1)})$ and with respect to $x^{(2)}$ is just $\nabla f_{2,i}(x^{(2)})$. Thus, SGDo essentially operates independently along each of the two co-ordinates. Now, for any step length $\alpha\in \mathbb{R}^+$, we know that $\alpha \in A_1$ or $\alpha \in A_2$. Then using Ineq.~\eqref{ineq:cor2newfinal}, at least one of the following is true: 
\begin{equation*}
\mathbb{E}[\|x_T^{(1)}-(x^*)^{(1)}\|^2]=\mathbb{E}[\|x_T^{(1)}-x^*_1\|^2]\geq C_l\frac{G^2n}{T^2}
\end{equation*}
or
\begin{equation*}
\mathbb{E}[\|x_T^{(2)}-(x^*)^{(2)}\|^2]=\mathbb{E}[\|x_T^{(2)}-x^*_2\|^2]\geq C_l\frac{G^2 n}{T^2}.
\end{equation*}
Therefore, the overall error
\begin{equation*}
\mathbb{E}[\|x_T-x^*\|^2]\geq C_l\frac{G^2n}{T^2}.
\end{equation*}

\newpage
\section{Numerical results}\label{app:exp}

{\small
\begin{figure}[H]
	\centering
	\subfigure[$\alpha = {1}/{T}$]{
	\includegraphics[trim={2em 0em 2em 2em},clip,width=0.35\textwidth]{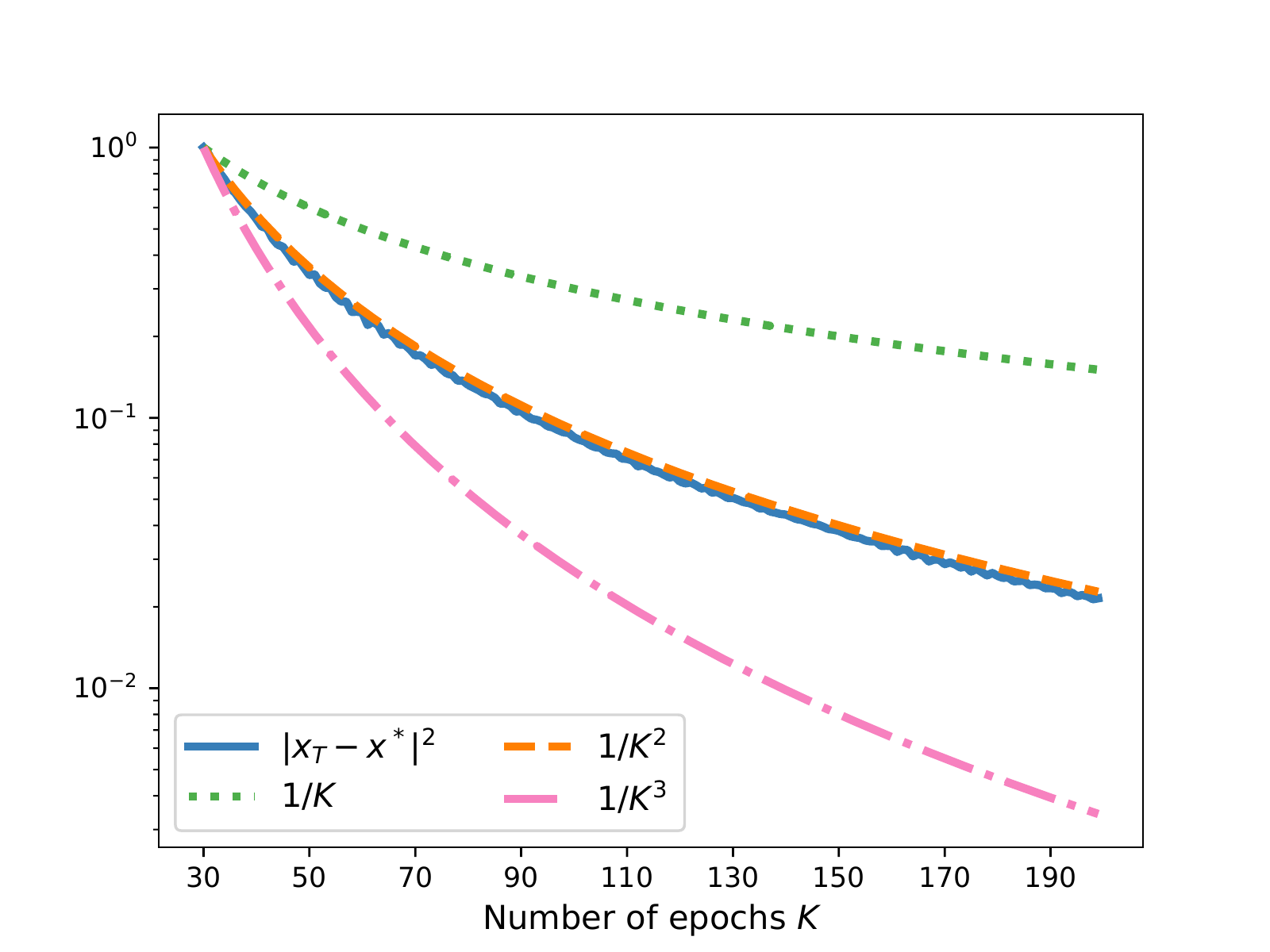}
	\includegraphics[trim={2em 0em 2em 2em},clip,width=0.35\textwidth]{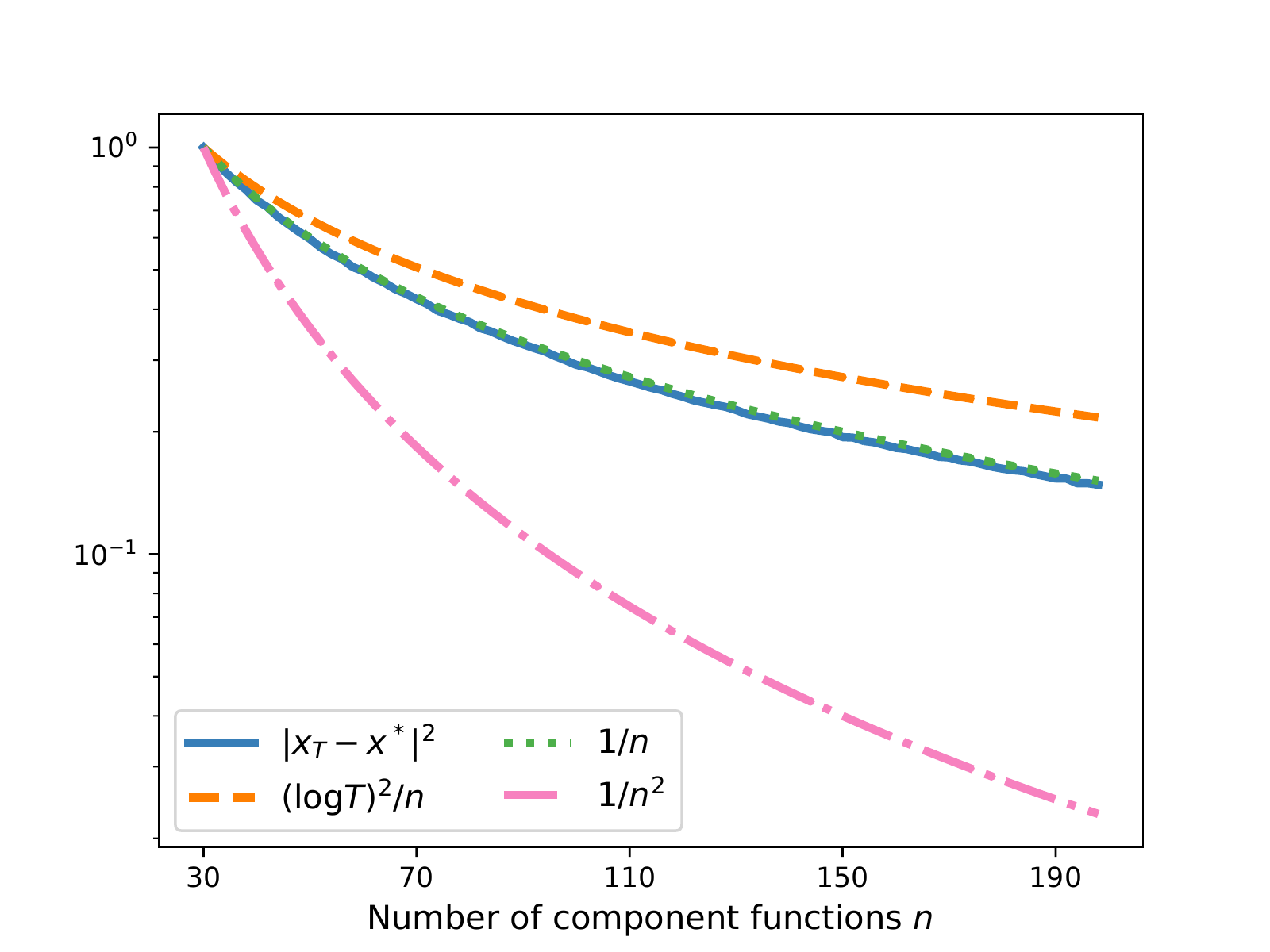}
}
\vspace{-1.0 em}
	\subfigure[$ \alpha = {2 \log{T}}/{T}$]{
	\includegraphics[trim={2em 0em 2em 2em},clip,width=0.35\textwidth]{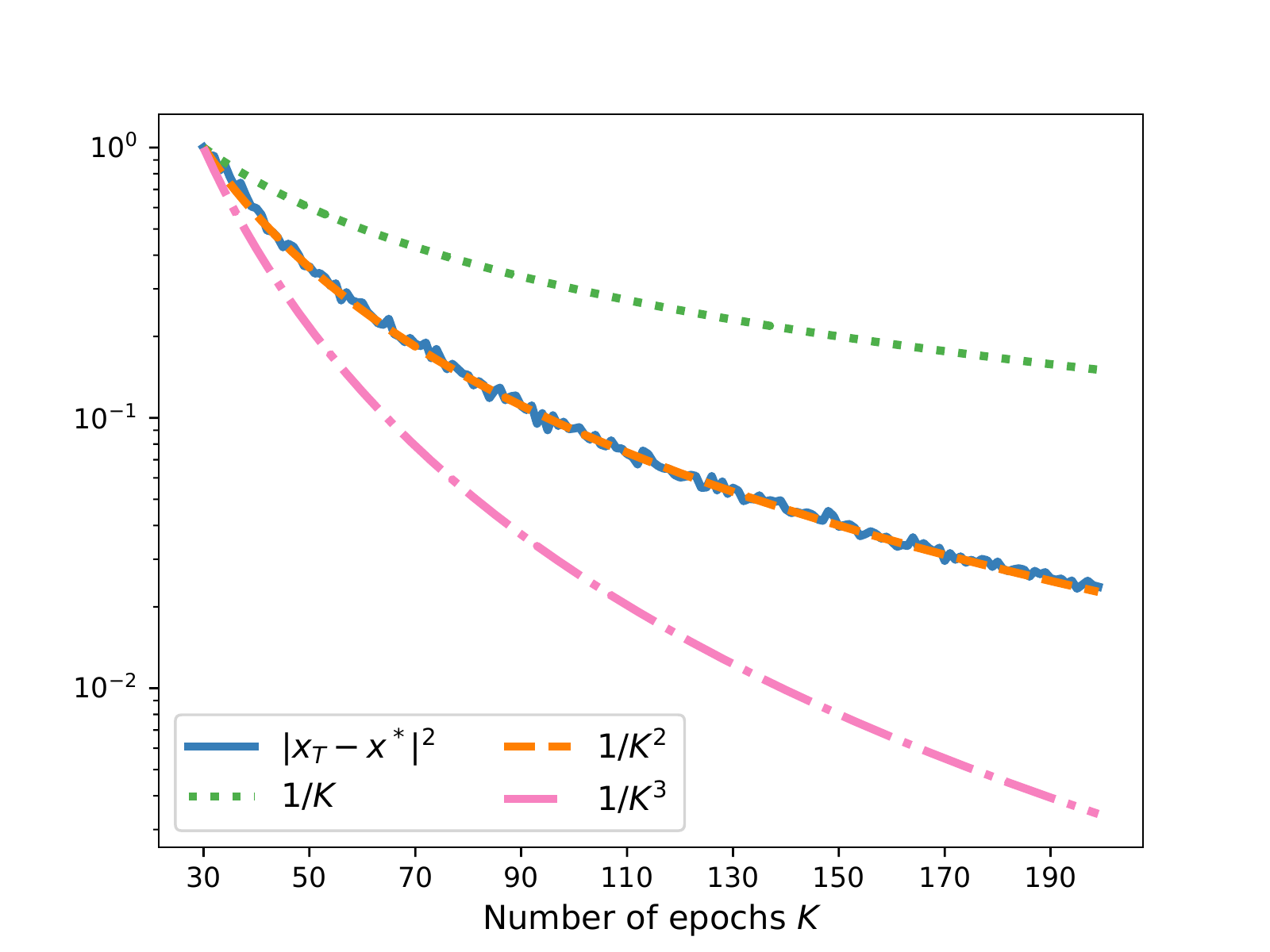}
	\includegraphics[trim={2em 0em 2em 2em},clip,width=0.35\textwidth]{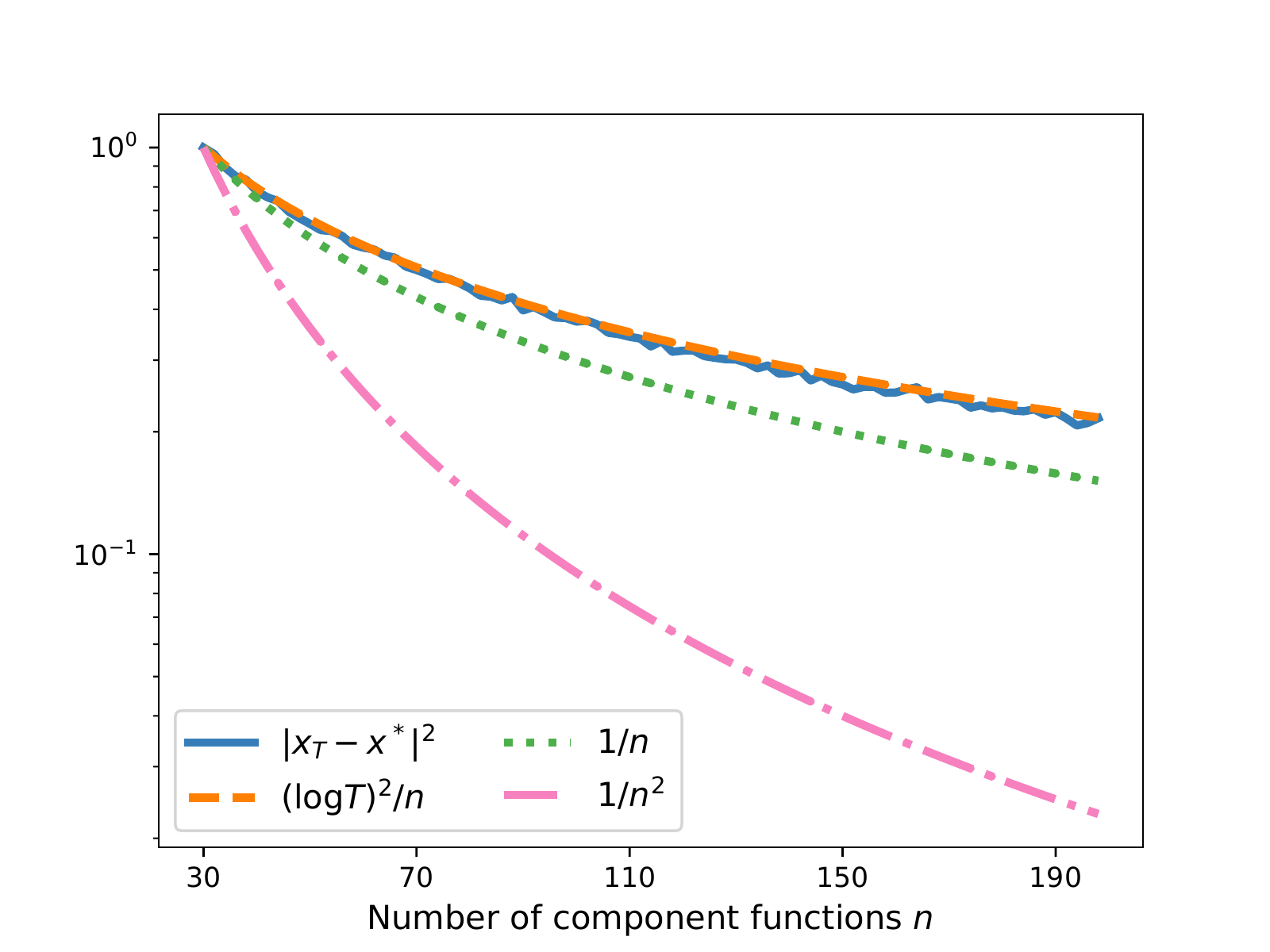}
}
\vspace{-1.0 em}
	\subfigure[$ \alpha = {8 \log{T}}/{T}$]{
	\includegraphics[trim={2em 0em 2em 2em},clip,width=0.35\textwidth]{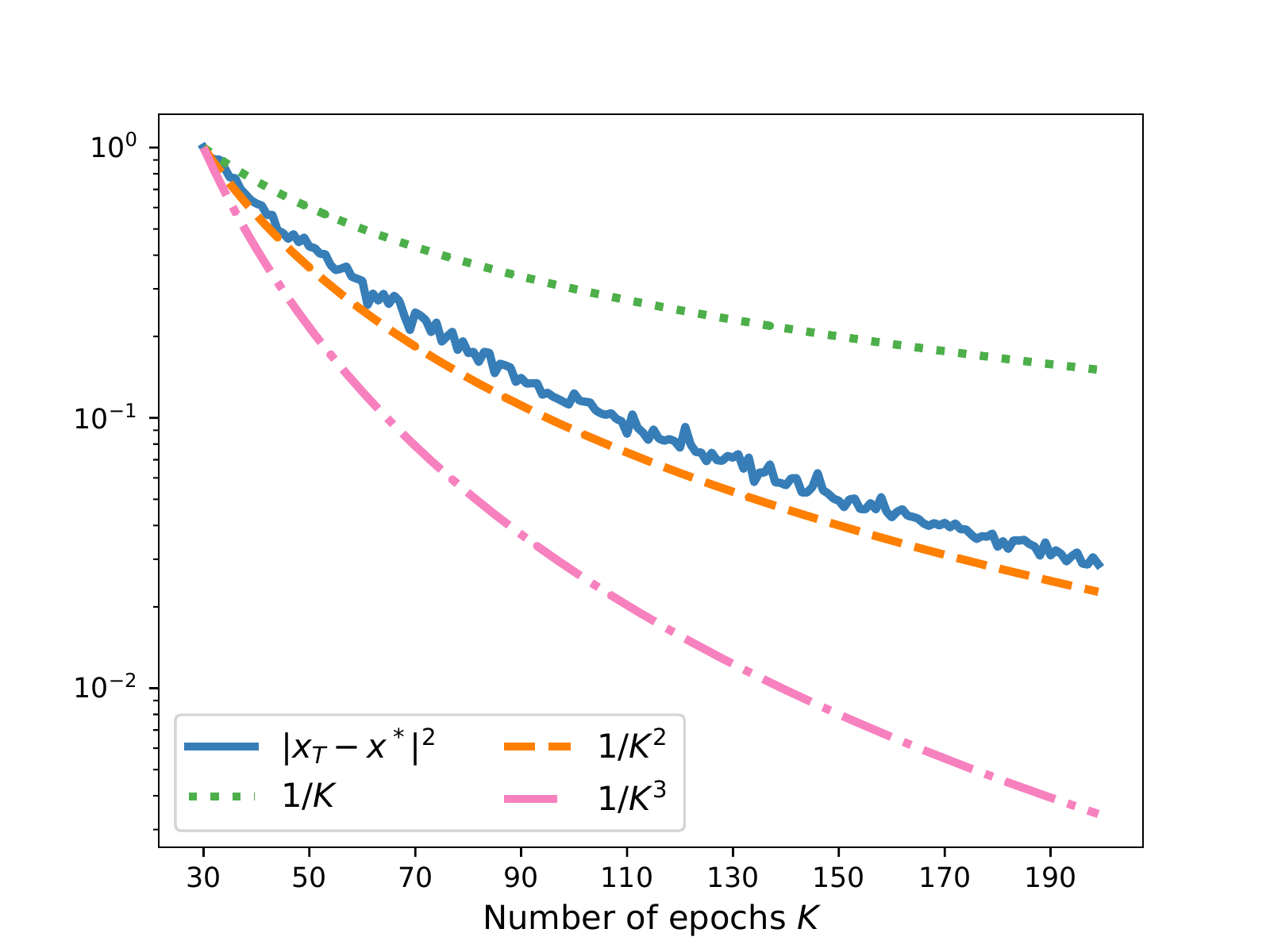}
	\includegraphics[trim={2em 0em 2em 2em},clip,width=0.35\textwidth]{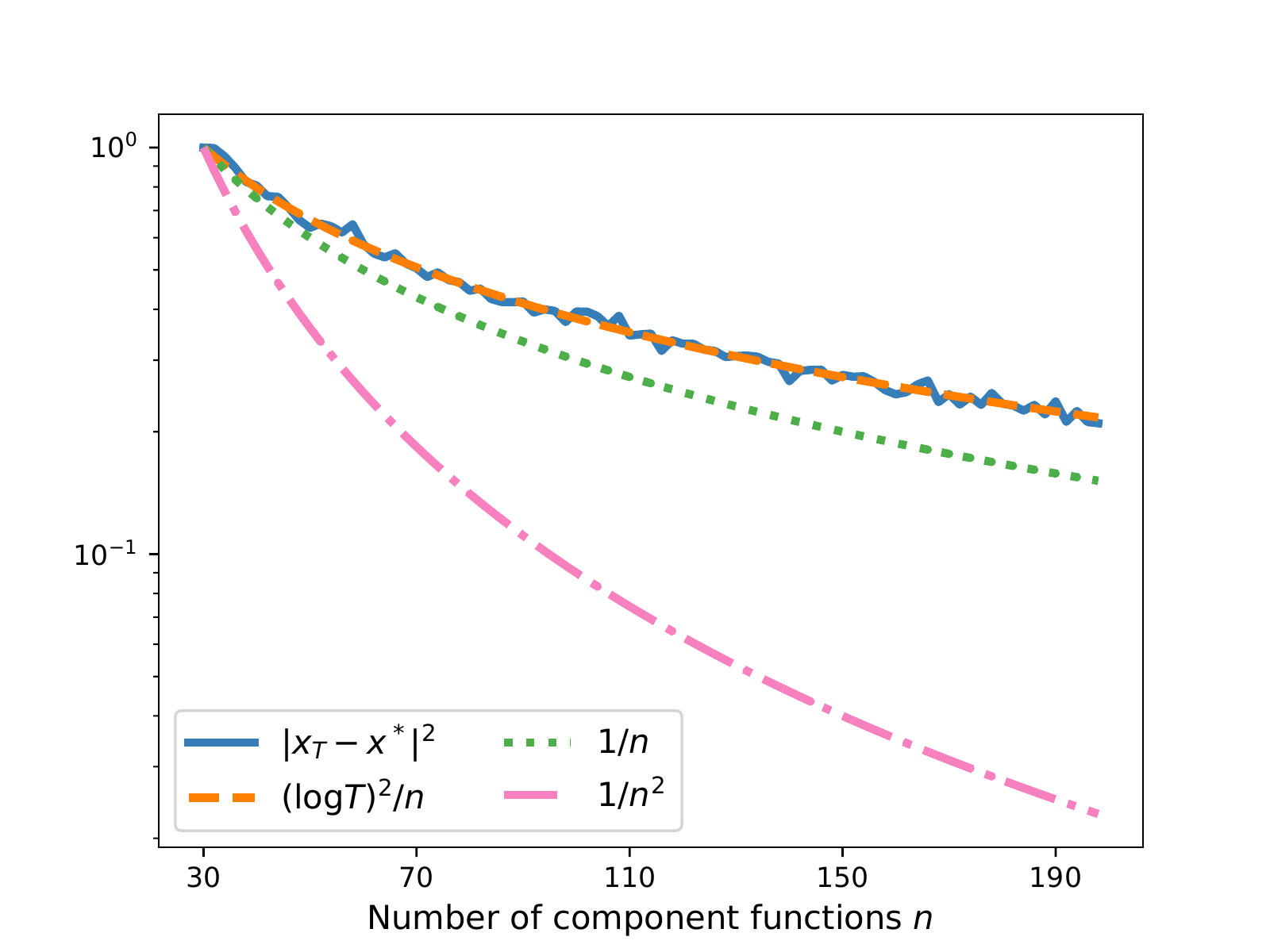}}
	\vspace{-1.0 em}
	\subfigure[$ \alpha = {1}/{n}$]{
	\includegraphics[trim={2em 0em 2em 2em},clip,width=0.35\textwidth]{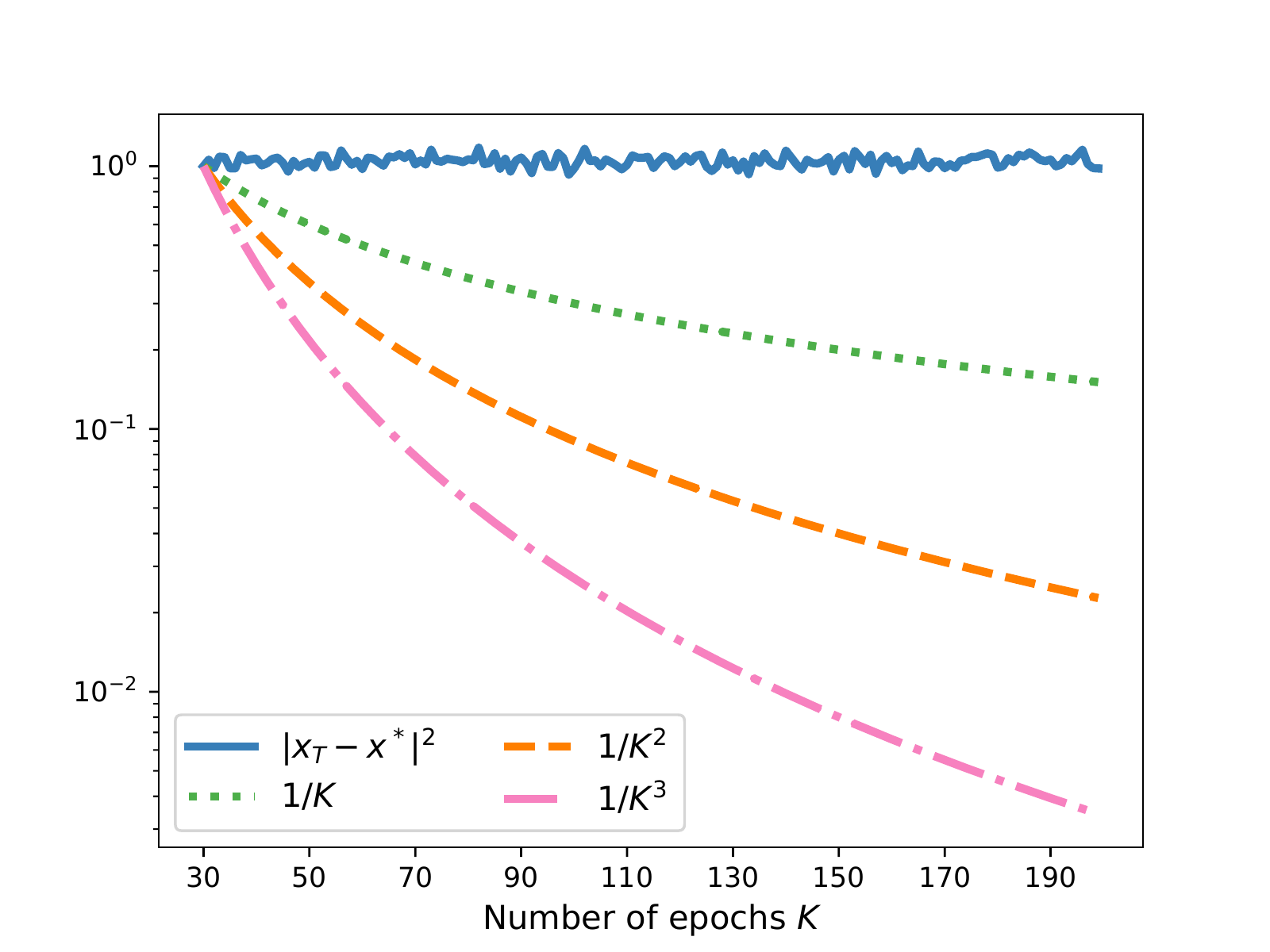}
	\includegraphics[trim={2em 0em 2em 2em},clip,width=0.35\textwidth]{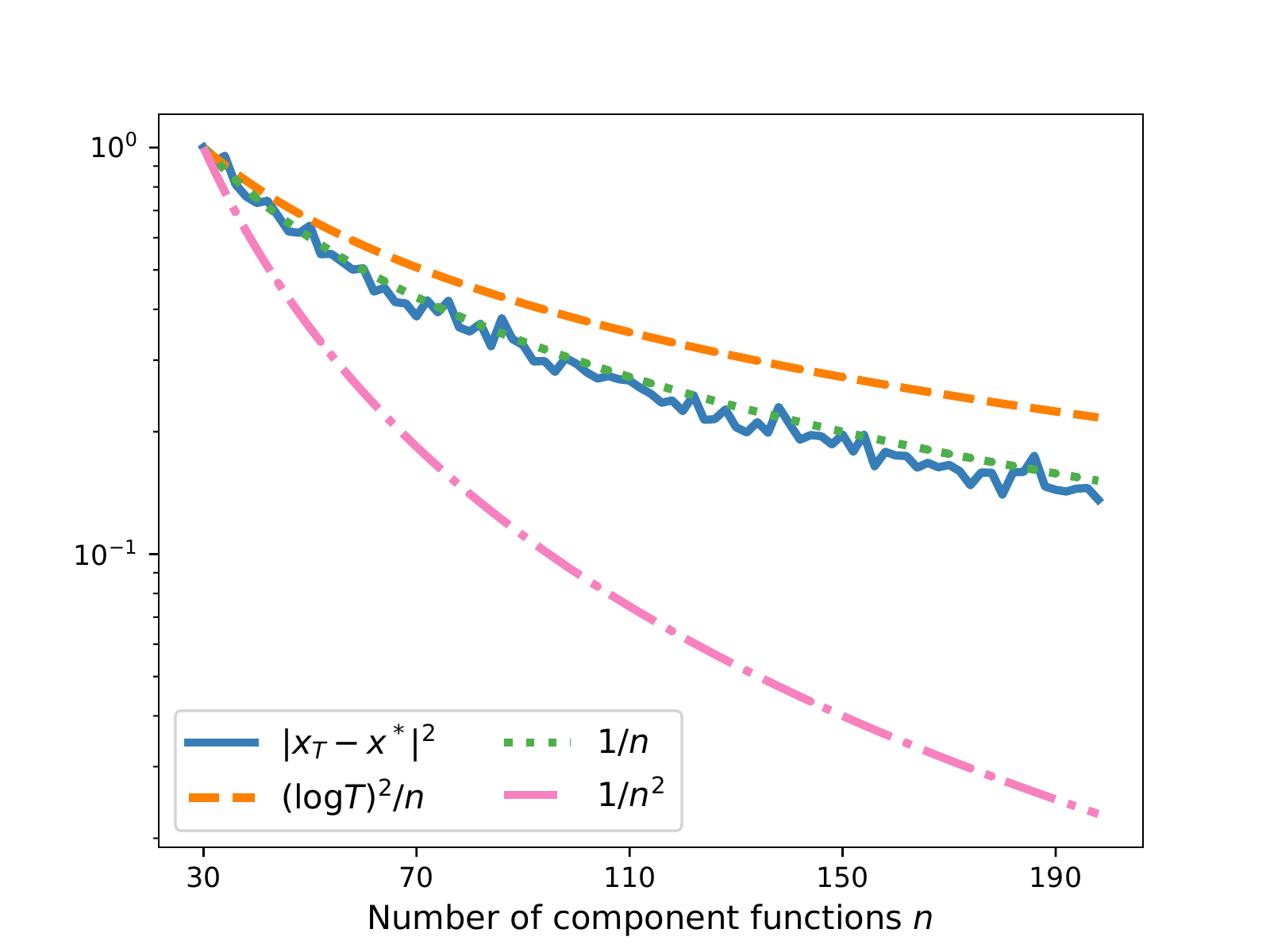}}
	\caption{Running SGDo with different step size regimes on the function $F$ used in our lower bound, Theorem~\ref{thm:lowerBound}. The setup is the same as described in Subsection~\ref{sec:numVerificationnew}.}
\end{figure}
}

\end{document}